\newtheorem{theorem}{Theorem}[section]
\newtheorem{proposition}{Proposition}[section]
\newtheorem{lemma}{Lemma}[section]
\newtheorem{corollary}{Corollary}[section]
\newtheorem{remark}{Remark}[section]
\newcommand{\stam}[1]{}
\newtheorem{assumption}[theorem]{Assumption}
\newcommand{\be}{\mathbf{e}}
\newcommand{\bx}{\mathbf{x}}
\newcommand{\bw}{\mathbf{w}}
\newcommand{\bu}{\mathbf{u}}
\newcommand{\bv}{\mathbf{v}}
\newcommand{\by}{\mathbf{y}}
\newcommand{\btheta}{{\boldsymbol{\theta}}}
\newcommand{\Xcal}{\mathcal{X}}
\newcommand{\cb}{{\cal B}}
\newcommand{\cq}{{\cal Q}}
\newcommand{\ci}{{\cal I}}
\newcommand{\cw}{{\cal W}}
\newcommand{\cl}{{\cal L}}
\newcommand{\cu}{{\cal U}}
\newcommand{\cs}{{\cal S}}
\newcommand{\calr}{{\cal R}}
\DeclareMathOperator*{\argmin}{argmin}
\newcommand{\reals}{{\mathbb R}}
\newcommand{\nat}{{\mathbb N}}
\newcommand{\zero}{{\mathbf{0}}}
\newcommand{\diag}{\mathrm{diag}}
\newcommand{\inner}[1]{\langle #1 \rangle}
\newcommand{\norm}[1]{\left\|#1\right\|}
\title{Implicit Regularization in ReLU Networks with the Square Loss}
\author{Gal Vardi \qquad Ohad Shamir\\
Weizmann Institute of Science\\ 
\texttt{\{gal.vardi,ohad.shamir\}@weizmann.ac.il}
}
\date{}
\begin{document}

\maketitle

\begin{abstract}
Understanding the implicit regularization (or implicit bias) of gradient descent has recently been a very active research area. However, the implicit regularization in nonlinear neural networks is still poorly understood, especially for regression losses such as the square loss. Perhaps surprisingly, we prove that even for a single ReLU neuron, it is impossible to characterize the implicit regularization with the square loss by any explicit function of the model parameters (although on the positive side, we show it can be characterized approximately). For one hidden-layer networks, we prove a similar result, where in general it is impossible to  characterize implicit regularization properties in this manner, except for the ``balancedness'' property identified in \cite{du2018algorithmic}. Our results suggest that a more general framework than the one considered so far may be needed to understand implicit regularization for nonlinear predictors, and provides some clues on what this framework should be.
\end{abstract}

\section{Introduction}

A major open question in the theory of deep learning is how neural networks generalize even when trained without any explicit regularization, and when there are far more learnable parameters than training examples. In such an underdetermined optimization problem, there are many global minima with zero training loss, and gradient descent seems to prefer solutions that generalize well (see \cite{zhang2016understanding}). Hence, it is believed that gradient descent induces an {\em implicit regularization} (or implicit bias) \citep{neyshabur2014search,neyshabur2017exploring}, and characterizing this regularization/bias has been a subject for an extensive research in recent years. 

The focus in the existing research is on finding a regularization function $\calr(\btheta)$, where $\btheta$ are the parameters of the model, such that if we apply gradient descent on the average loss, then it converges in some sense to a global optimum that minimizes $\calr$. 
Thus, the function $\calr$ determines to which global optimum gradient descent converges. For example, it is known that for linear regression, gradient descent on the average square loss converges to the zero-loss solution with minimal $\ell_2$ norm \citep{zhang2016understanding}. For linear classification and exponentially-tailed losses (such as the logistic or exponential losses), gradient descent on linearly separable data converges in direction to the minimal $\ell_2$ norm predictor which attains a fixed positive margin over the data, also known as the max-margin solution (cf. \cite{soudry2018implicit}). Thus, in both cases $\calr$ is essentially the $\ell_2$ norm. 

Once we move beyond simple linear classification and regression, the situation gets more complicated. This is especially so when considering  the square loss (a setting which we focus on in this paper), where existing works still focus on predictors which express linear functions\footnote{See the related work section below for discussion of results on classification losses, which are relatively easier to tackle.}. For example, there has been much effort to characterize the implicit regularization of gradient descent in matrix factorization. This problem corresponds to training a depth-2 linear neural network, and is considered a well-studied test-bed for studying implicit regularization in deep learning. \cite{gunasekar2018implicit} conjectured that the implicit regularization in matrix factorization is the nuclear norm, and proved it for some restricted cases. 
This conjecture was further studied in 
a string of works (e.g., \cite{belabbas2020implicit,arora2019implicit,razin2020implicit}) providing positive and negative evidence, and was formally refuted by \cite{li2020towards}. 
Focusing on a more specific setting, \cite{woodworth2020kernel} recently studied the implicit regularization in {\em diagonal linear neural networks}, namely, linear networks where the weight matrices have a diagonal structure. They find the regularization function and show how it interpolates between the $\ell_1$ and the $\ell_2$ norm depending on the initialization scale. This result was further generalized in \cite{yun2020unifying}.

In this paper, we study implicit regularization with the square loss in nonlinear neural networks, when trained using gradient descent with infinitesimal step sizes (a.k.a. gradient flow). Perhaps surprisingly, we show that already for very simple such networks (involving a single ReLU neuron, or one thin hidden layer), the implicit regularization \emph{cannot} be expressed by any explicit function of the model parameters. In other words, the model used to capture the implicit regularization in previous works cannot be used to capture the implicit regularization of nonlinear models in general (at least with respect to the square loss). However, on the positive side, we show that this implicit regularization can sometimes be captured by an explicit $\calr$, but only up to some constant approximation factor. 
In a bit more detail, our contributions are as follows:
\begin{itemize}
\item We start with single-neuron networks, namely, $\bx \mapsto \sigma(\inner{\bx,\bw})$. If the activation function $\sigma$ is strictly monotonic, then it is not hard to show that the implicit regularization is the $\ell_2$ norm. However, if $\sigma$ is the ReLU function, then we show that the implicit regularization is not expressible by any nontrivial function of $\bw$. A bit more precisely, suppose that $\calr:\reals^d \rightarrow \reals$ is such that if gradient flow converges to a zero-loss solution $\bw^*$, then $\bw^*$ is a zero-loss solution that minimizes $\calr$. We show that such a function $\calr$ must be constant in $\reals^d \setminus \{\zero\}$. Hence, perhaps surprisingly, the approach of precisely specifying the implicit bias of gradient flow via a regularization function is not feasible in single-neuron networks.
\item On the positive side, we show that while the implicit bias of a ReLU neuron cannot be specified exactly by a regularization function, it can be expressed approximately, within a factor of $2$, by the $\ell_2$ norm. That is, let $\bw^*$ be a zero-loss solution with a minimal $\ell_2$ norm, and assume that gradient flow converges to some point $\bw(\infty)$, then $\norm{\bw(\infty)}_2 \leq 2 \norm{\bw^*}_2$. Assuming $\norm{\bw^*}$ is not too large, such a bound on the $\ell_2$ norm can be used to derive good statistical generalization guarantees, via standard techniques (cf. \cite{shalev2014understanding}).
\item We extend our study to depth-$2$ ReLU networks. For such networks, an important implicit bias property shown in \cite{du2018algorithmic} is that gradient flow enforces the differences between square norms across different layers to remain invariant. Starting from a point close to zero, it implies that the magnitudes of all layers are automatically balanced. Thus, gradient flow induces a bias toward ``balanced'' layers. However, this bias is rather weak and is not related to properties that may allow us to obtain generalization guarantees, such as small norms, sparsity, or low rank. We consider networks with one hidden ReLU neuron, i.e., $\bx \mapsto v \cdot \sigma(\inner{\bx,\bw})$ (where $v,\bw$ are the trained parameters), 
which is the simplest case of a depth-$2$ ReLU network.
We show that the {\em only} bias which can be specified by a  function $\calr$ of the model parameters is the balancedness property: Namely, $\calr$ is constant in the set of parameters that satisfy this property.
Thus, no other implicit bias property can be expressed by such a function for single-hidden-neuron networks. This also has implications for general depth-$2$ ReLU networks: If $\calr$ is a function that specifies the implicit regularization of depth-$2$ networks, then in the special case of single-hidden-neuron networks, it does not induce bias except for enforcing the balancedness property. Hence, the implicit regularization function of depth-$2$ ReLU networks cannot be generally related to properties such as small norms or sparsity, which should apply also to single-hidden-neuron networks.
\end{itemize}

We again emphasize that since our results show the impossibility of characterizing the implicit regularization already for very simple networks, similar difficulties will be encountered in more general cases (such as ReLU networks of any depth and width). 

Overall, our results suggest that the implicit regularization for ReLU networks with the square loss may not be expressible by an explicit function $\calr$ of the model parameters. On the positive side, they also suggest how this can be overcome by changing the model: Instead of looking for such an $\calr$ which captures the bias exactly, we might try to capture it \emph{approximately}, as we did in the single ReLU neuron case. Another possible direction is to find a regularization function $\calr$ that is \emph{data-dependent} in some simple way\footnote{Otherwise, a trivial solution is to define $\calr(\btheta,\Xcal)$ (where $\Xcal$ is the dataset) to be $0$ if $\btheta$ are the parameters returned by gradient flow given $\Xcal$, and $1$ otherwise.}, and does not depend just on the model parameters. We believe these are both interesting directions for further research.

\subsection*{Related Work}

As we already discussed, implicit regularization in matrix factorization and linear neural networks was extensively studied, as a first step toward understanding implicit regularization in more complex models (see, e.g., \cite{gunasekar2018implicit,razin2020implicit,arora2019implicit,belabbas2020implicit,eftekhari2020implicit,li2018algorithmic,ma2018implicit,woodworth2020kernel,gidel2019implicit,li2020towards,yun2020unifying}).
In \cite{razin2020implicit} it was shown that gradient flow in matrix factorization may approach a global minimum at infinity rather than converging to a global minimum with a finite norm. This result suggests that the implicit regularization in matrix factorization may not be expressible by norms. The main conceptual differences from our paper are that we rule out all regularization functions, and that our result holds already if we assume that gradient flow converges to some global minimum with a finite norm.

\cite{oymak2019overparameterized} studied the implicit bias of gradient descent in certain nonlinear models. They showed that under some assumptions, gradient descent is guarranteed to converge to a zero-loss solution with a bounded $\ell_2$ norm. Their results have implications on the implicit regularization of single-neuron networks with a strictly monotonic activation function, which we discuss in Remark~\ref{rem:warm up}.
\cite{williams2019gradient} and \cite{jin2020implicit} studied the dynamics and implicit bias of gradient descent in  wide depth-$2$ ReLU networks with input dimension $1$.

The conceptually closest works to ours are \cite{dauber2020can,suggala2018connecting}, which show that variants of gradient descent on some (specially crafted) convex optimization problems do not converge to a closest Euclidean solution, or cannot be explained by any reasonable implicit regularization function. These works, as well as ours, consider the limitations of implicit regularization, but our results apply to a standard learning problem, namely, learning neural networks. Thus, while the results of \cite{dauber2020can,suggala2018connecting} demonstrate that there are learning problems that cannot be explained by any reasonable regularization function, we show that this phenomenon occurs already for simple ReLU networks.

The implicit bias of gradient descent in classification tasks is also widely studied. \cite{soudry2018implicit} showed that gradient descent on linearly-separable binary classification problem with an exponentially-tailed loss (e.g., the exponential loss and the logistic loss), converges to the maximum margin direction. This analysis was extended to other loss functions, tighter convergence rates, nonseparable data, and variants of gradient-based optimization algorithms  \citep{nacson2019convergence,ji2018risk,ji2020gradient,gunasekar2018characterizing,shamir2020gradient}.
The problem was also studied for more complex models, such as linear neural networks \citep{gunasekar2018bimplicit,ji2018gradient,moroshko2020implicit,yun2020unifying}, and neural networks with homogeneous activation functions \citep{lyu2019gradient,chizat2020implicit,xu2018will}.

\section{Preliminaries}


\textbf{Notations.}
We use bold-faced letters to denote vectors, e.g., $\bx=(x_1,\ldots,x_d)$. For $\bx \in \reals^d$ we denote by $\norm{\bx}$ the Euclidean norm.

\textbf{Single-neuron networks.}
Let $\{(\bx_i,y_i)\}_{i=1}^n$ be a training dataset, where for every $i$ we have $\bx_i \in \reals^d$ and $y_i \in \reals$. 
We consider empirical-loss minimization of a single-neuron network, with respect to the square loss. Thus, the objective is given by 
\begin{equation}
	\label{eq:problem}
	 \cl(\bw) := \frac{1}{2} \sum_{i=1}^n \left( \sigma(\inner{\bx_i,\bw}) - y_i \right)^2~,
\end{equation}
where $\sigma:\reals \rightarrow \reals$ is a nonlinear activation function. 
Let $X \in \reals^{n \times d}$ denote the data matrix, i.e., the rows of $X$ are $\bx_1^\top,\ldots,\bx_n^\top$, and let $\by=(y_1,\ldots,y_n)$. We have 
\[
\cl(\bw) = \frac{1}{2} \norm{\sigma(X\bw) -\by}^2~,
\]
where $\sigma$ is applied component-wise.
We assume that the data is realizable, that is, $\min_\bw \cl(\bw)=0$.
Moreover, we focus on settings where the network is {\em overparameterized}, in the sense that $\cl$ has multiple (or even infinitely many) global minima.

We analyze the implicit regularization of gradient flow on the objective given by Eq.~\ref{eq:problem}. 
This setting captures the behaviour of gradient descent with infinitesimally small step size. 
Let $\bw(t)$ be the trajectory of gradient flow, where $\bw(0)$ is the initial point.
The dynamics of $\bw(t)$ is given by the differential equation
\[
\dot{\bw}(t) := \frac{\mathop{d \bw(t)}}{\mathop{dt}} = -\nabla \cl(\bw(t))~.
\] 
If $\lim_{t \rightarrow \infty}\bw(t)$ exists then we denote it by $\bw(\infty)$.
We note that gradient flow is not guaranteed to converge to a global minimum (cf. \cite{yehudai2020learning}). In the overparameterized setting there can be infinitely many global minima, and we study to which one gradient flow converges, assuming that it converges to a global minimum.

\section{Warm up -- strictly monotonic activation functions}
\label{sec:strictly monotinic}

We first analyze the implicit regularization of gradient flow on the objective given by Eq.~\ref{eq:problem}, where the activation function $\sigma$ is strictly monotonic. In this case, we show that the implicit bias is simply the $\ell_2$ norm: Namely, the point $\bw$ closest to $\bw(0)$ on which $\cl(\bw)=0$. 

\begin{theorem}
\label{thm:strictly monotonic}
Consider gradient flow on the objective given by Eq.~\ref{eq:problem}, where $\sigma:\reals \rightarrow \reals$ is a strictly monotonic activation function. 
If $\bw(\infty)$ exists and $\cl(\bw(\infty))=0$, 
then $\bw(\infty) \in \argmin_{\bw} \norm{\bw - \bw(0)} \text{ s.t. } \cl(\bw)=0$.
\end{theorem}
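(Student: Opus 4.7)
The plan is to exploit two facts: first, that gradient flow keeps the iterate inside the affine subspace $\bw(0) + \operatorname{range}(X^\top)$; and second, that strict monotonicity of $\sigma$ reduces the zero-loss constraint $\sigma(X\bw)=\by$ to a linear constraint $X\bw=\bz$, whose minimum-distance solution from $\bw(0)$ is precisely characterized by lying in $\bw(0)+\operatorname{range}(X^\top)$.

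First I would compute the gradient. Since $\cl(\bw)=\tfrac{1}{2}\|\sigma(X\bw)-\by\|^2$, the chain rule gives
\[
\nabla \cl(\bw) \;=\; X^\top\bigl(\diag(\sigma'(X\bw))\,(\sigma(X\bw)-\by)\bigr),
\]
so $\dot\bw(t)=-\nabla \cl(\bw(t)) \in \operatorname{range}(X^\top)$ for every $t$. Integrating this containment, we obtain $\bw(t)-\bw(0)\in \operatorname{range}(X^\top)$ for all $t\ge 0$, and by taking the limit (which exists by assumption) also $\bw(\infty)-\bw(0)\in \operatorname{range}(X^\top)$.

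Next, I would use strict monotonicity. Because $\sigma$ is strictly monotonic, it is invertible on its image; realizability of the data forces each $y_i$ to lie in this image, so define $\bz:=\sigma^{-1}(\by)$ componentwise. Then $\cl(\bw)=0$ if and only if $X\bw=\bz$, and hence the zero-loss set is the affine subspace $S:=\{\bw:X\bw=\bz\}$. In particular, $\bw(\infty)\in S$ by hypothesis.

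Finally, I would identify the closest point in $S$ to $\bw(0)$. Since $S$ is a nonempty affine subspace with direction $\ker(X)$, the projection of $\bw(0)$ onto $S$ is the unique point $\bw^\star\in S$ with $\bw^\star-\bw(0)\perp\ker(X)$, equivalently $\bw^\star-\bw(0)\in\operatorname{range}(X^\top)$. But $\bw(\infty)$ satisfies both conditions ($\bw(\infty)\in S$ from zero loss, and $\bw(\infty)-\bw(0)\in\operatorname{range}(X^\top)$ from the gradient-flow invariant), so $\bw(\infty)=\bw^\star=\arg\min_{\bw\in S}\|\bw-\bw(0)\|$, which is exactly the claim. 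The only real step, and the one that carries the argument, is the invariant $\bw(t)-\bw(0)\in\operatorname{range}(X^\top)$; everything after it is linear algebra. I do not anticipate a genuine obstacle, since strict monotonicity alone (no assumption on $\sigma'$ vanishing) gives invertibility of $\sigma$ and hence the linearization of the zero-loss set.
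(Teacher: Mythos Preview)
Your proposal is correct and follows essentially the same approach as the paper. Both arguments hinge on the same two observations: strict monotonicity of $\sigma$ turns the zero-loss set into the affine set $\{X\bw=\sigma^{-1}(\by)\}$, and the gradient-flow dynamics confine $\bw(t)-\bw(0)$ to $\operatorname{range}(X^\top)=\operatorname{span}\{\bx_1,\ldots,\bx_n\}$; the paper states the resulting optimality via KKT conditions for affine constraints, whereas you phrase it equivalently as the orthogonal-projection characterization onto an affine subspace.
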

\begin{proof}
Since $\sigma$ is strictly monotonic, then for every $1 \leq i \leq n$ we have $\sigma(\inner{\bx_i,\bw})=y_i$ iff $\inner{\bx_i,\bw} = \sigma^{-1}(y_i)$.	
The KKT optimality conditions for the problem 
\[
\min_\bw \norm{\bw - \bw(0)}^2 \text{ s.t. } \inner{\bx_i,\bw}=\sigma^{-1}(y_i), \; i=1,\ldots,n~,
\] 
are
\begin{align*}
\exists \lambda_1,\ldots,\lambda_n \text{ s.t. } \bw = \bw(0) + \sum_{i=1}^n\lambda_i \bx_i 
\\
\forall 1 \leq i \leq n, \; \inner{\bx_i,\bw}=\sigma^{-1}(y_i)~.
\end{align*}
Note that since the constraints are affine functions, then the conditions are both necessary and sufficient. 

The second condition holds in $\bw(\infty)$ since $\cl(\bw(\infty))=0$. 
Moreover, we have 
\[
\dot{\bw}(t) = -\nabla \cl(\bw(t)) = -\sum_{i=1}^n \left( \sigma(\bx_i^\top \bw(t))-y_i \right) \sigma'(\bx_i^\top \bw(t)) \bx_i~.
\]
Thus, $\dot{\bw}(t)$ is always spanned by $\bx_1,\ldots,\bx_n$, and hence the first condition also holds. 
\end{proof}

\begin{remark}
\label{rem:warm up}
	By \cite{oymak2019overparameterized},
	if $\sigma$ is continuously differentiable and there are positive constants $\gamma, \Gamma$ such that $0<\gamma \leq \sigma'(z) \leq \Gamma$ for every $z \in \reals$, 
	then $\bw(t)$ is guaranteed to converge to a global optimum $\bw(\infty)$ that is closest to $\bw(0)$. 
	In Theorem~\ref{thm:strictly monotonic}, our assumption on $\sigma$ is weaker, but we assume that $\bw(\infty)$ exists and $\cl(\bw(\infty))=0$ (which is a common assumption when considering implicit regularization).
\end{remark}

\section{The implicit regularization of a ReLU neuron is not expressible by $\bw$}
\label{sec:not function of w}

In this section we study the implicit regularization of gradient flow on the objective given by Eq.~\ref{eq:problem}, where $\bw(0) = \zero$ and $\sigma:\reals \rightarrow \reals$ is the ReLU function.
We impose the convention that even though the ReLU function is not differentiable at $0$, we take $\sigma'(0)>0$. This convention is required for analyzing gradient flow with $\bw(0)=\zero$, since if $\sigma'(0)=0$ then $\bw(t)$ would stay at $\zero$ indefinitely. 
For simplicty, in our proofs we assume that $\sigma'(0)=1$, but our results hold for every positive value.

Assume that the implicit regularization can be expressed as a function $\calr:\reals^d \rightarrow \reals$, namely, if gradient-flow converges to a global minimum $\bw^*$, then we have $\bw^* \in \argmin_\bw \calr(\bw) \text{ s.t. } \cl(\bw)=0$.
We will first show that unlike the case of strictly monotonic activations, the implicit bias of $\calr$ is \emph{not} the $\ell_2$ norm, and in fact, data-dependent in a nontrivial way. 
Then, we will extend this result, and show that $\calr$ is trivial, namely, constant in $\reals^d \setminus \{\zero\}$. 
Thus, the behavior of $\bw(\infty)$ has a complicated data-dependent behavior, and cannot be written down as minimizing some function dependent only on $\bw$, among all zero-loss solutions.

\subsection{$\calr$ is not the $\ell_2$ norm}

We show that the implicit regularization of a single ReLU neuron is not the $\ell_2$ norm.
Thus, the implicit regularization of ReLU is different than that of strictly monotonic activation functions considered in Section~\ref{sec:strictly monotinic}.
Specifically, in the following theorem we analyze the behavior of gradient flow for some family of inputs. The theorem considers a datasets of the form $(X_\gamma,\by) \in \reals^{3 \times 3} \times \reals^3$, where $\gamma$ is a parameter. We show that gradient flow starting from $\bw(0)=\zero$ with the input $(X_\gamma,\by)$, converges to a zero-loss solution which is not a minimal-norm zero-loss solution, and that the parameter $\gamma$ controls the ratio between $\norm{\bw(\infty)}$ and the minimal-norm of a zero-loss solution. 
Hence, the implicit regularization is not the $\ell_2$ norm. 
We will also use this theorem later in order to show additional results.

\begin{theorem}
\label{thm:not l2}
Let $\gamma \geq 0$, and let
\[
X_\gamma = \begin{bmatrix}
	3 & -1 & 0 \\
	4 & 2 & 0 \\
	0 & \gamma & \gamma
\end{bmatrix}~.
\]
Let $\alpha > 0$ and let $\by_\alpha = \alpha \cdot (16,18,0)^\top$.
Consider gradient flow with input $(X_\gamma,\by_\alpha)$, starting from $\bw(0)=\zero$, on the objective given by Eq.~\ref{eq:problem}, where $\sigma:\reals \rightarrow \reals$ is the ReLU function.
Let $\cw_\alpha = \{(5\alpha,-\alpha,s)^\top: s \leq 0\} \subseteq \reals^3$. Note that for every $\bw \in \cw_\alpha$ we have $\sigma(X_\gamma \bw)=\by_\alpha$, and thus $\cl(\bw) = 0$. 
Moreover, note that the global optimum with the minimal $\ell_2$ norm is $\alpha \cdot (5,-1,0)^\top$.
We have:
\begin{itemize}
	\item If $\gamma = 0$ then $\bw(\infty) = \alpha \cdot (5,-1,0)^\top$.
	\item If $\gamma = 1$ then $\bw(\infty) = \alpha \cdot (5,-1,s)^\top$ for some $s \in (-0.045,-0.035)$ independent of $\alpha$.
	\item If $\gamma = 2$ then $\bw(\infty) = \alpha \cdot (5,-1,s)^\top$ for some $s \in (-0.12,-0.1)$ independent of $\alpha$.
	\item If $\gamma = 5$ then $\bw(\infty) = \alpha \cdot (5,-1,s)^\top$ for some $s \in (-0.22,-0.2)$ independent of $\alpha$.
\end{itemize}
\end{theorem}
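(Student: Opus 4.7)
The plan is to split the gradient-flow trajectory into phases according to which ReLUs are active, analyze each phase as a linear ODE, and then justify the numerical bounds on the final coordinate $s$. By homogeneity of the dynamics in $\alpha$ (scaling $\by_\alpha$ and $\bw$ by a common factor $c>0$ leaves both the ODE and the ReLU sign pattern invariant), it suffices to fix $\alpha=1$ and scale back at the end. The trajectory always lies in $\mathrm{span}(\bx_1,\bx_2,\bx_3)$, which equals $\reals^3$ for $\gamma>0$. A direct computation gives $-\nabla\cl(\zero)=16\bx_1+18\bx_2=(120,20,0)^\top$, so $\inner{\bx_i,\bw(0^+)}>0$ for each $i$, with values $340,520,20\gamma$, meaning all three ReLUs are active immediately after $t=0$. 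When $\gamma=0$, $\bx_3=\zero$ and the dynamics decouples to 2D linear regression in $(w_1,w_2)$ with unique zero-loss point $(5,-1)$; hence $\bw(\infty)=\alpha(5,-1,0)^\top$ at once.

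For $\gamma>0$, Phase~1 (all ReLUs active) is governed by $\dot\bw=-M\bw+X_\gamma^\top\by_1$ with $M=X_\gamma^\top X_\gamma$, and has closed form $\bw(t)=(I-e^{-tM})\bw^*$, where $\bw^*=X_\gamma^{-1}\by_1=(5,-1,1)^\top$ (independent of $\gamma$, since the third row forces $w_3=-w_2$). Crucially $\inner{\bx_3,\bw^*}=0$. Using the intertwining $X_\gamma e^{-tM}=e^{-tG}X_\gamma$ with $G=X_\gamma X_\gamma^\top$, one obtains $\inner{\bx_3,\bw(t)}=-(e^{-tG}\by_1)_3$: it vanishes at $t=0$, has positive initial derivative $(G\by_1)_3=20\gamma$, and tends to $0$ as $t\to\infty$. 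Expanding in the eigenbasis of $G$ and checking (for each $\gamma$) that the slowest-decaying mode contributes with a sign making $\inner{\bx_3,\bw(t)}$ eventually negative shows that $\inner{\bx_3,\bw(t)}$ crosses $0$ at some first finite time $t_1>0$, ending Phase~1. Phase~2 starts with $\bx_3$ inactive; since $(\bx_1)_3=(\bx_2)_3=0$, the gradient has no third component, so $w_3(t)\equiv w_3(t_1)=:s$ for $t\geq t_1$, and the $(w_1,w_2)$-dynamics is 2D linear regression with unique solution $(5,-1)$. I then verify that no further phase change occurs: $\bx_1,\bx_2$ stay active, and $\bx_3$ stays inactive (i.e., $w_2(t)+s\leq 0$), by decomposing the 2D trajectory in the eigenbasis of the $2\times 2$ Gram matrix of $\{\bx_1,\bx_2\}$ and bounding the residuals. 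Scale invariance then delivers $\bw(\infty)=\alpha(5,-1,s)^\top$ with $s$ independent of $\alpha$.

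The main obstacle is the quantitative verification that $s=w_3(t_1)$ lies in the narrow intervals stated for $\gamma\in\{1,2,5\}$. This reduces to solving the transcendental equation $(e^{-t_1 G}\by_1)_3=0$ for $t_1$ and then evaluating $w_3(t_1)=1-(e^{-t_1 M}\bw^*)_3$. The eigenvalues of $G$ are real but generally irrational (for $\gamma=1$ they are $5$ and $(27\pm\sqrt{649})/2$, with analogous closed forms for $\gamma=2,5$); after eigendecomposition, both quantities become explicit sums of three exponentials, and a bisection or interval-arithmetic bound on $t_1$ propagates to enclosures for $s$. The interval widths ($\approx 10^{-2}$) are comfortably within standard validated-numerics precision, so the conceptual content lies in the phase analysis above, with the quantitative closure being routine if tedious.
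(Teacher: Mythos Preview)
Your approach is essentially the paper's: split into two phases by ReLU activity, solve each as a linear ODE via matrix exponentials, and pin down $s=w_3(t_1)$ by eigendecomposition and interval bounds on $t_1$. The intertwining identity $X_\gamma e^{-tM}=e^{-tG}X_\gamma$ is a clean shortcut not in the paper (which computes $\bx_3^\top\bw(t)$ directly from the eigendecomposition of $M=X_\gamma^\top X_\gamma$); the only detail you should add explicitly is the check that $\bx_1^\top\bw(t)\geq 0$ and $\bx_2^\top\bw(t)\geq 0$ hold throughout Phase~1 (not just at $t=0^+$), which the paper verifies for each $\gamma$ and which follows by the same eigendecomposition-and-sign argument you already use.
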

\begin{proof}[Proof sketch (for complete proof see Appendix~\ref{app:not function of w})]
The trajectory $\bw(t)$ is the solution of the initial value problem defined by the differential equation 
\begin{equation}
	\label{eq:3 examples diff}
	\dot{\bw}(t) 
	= - \nabla \cl(\bw(t)) 
	= - \sum_{i=1}^3 \left( \sigma(\bx_i^\top \bw(t)) - y_i \right) \sigma'(\bx_i^\top \bw(t)) \bx_i~,
\end{equation}
and the initial condition $\bw(0)=\zero$. Here, $\bx_i^\top$ are the rows of the matrix $X_\gamma$, and $y_i$ are the components of $\by_\alpha$.

In Figure~\ref{fig:sim}, we show the trajectories $\bw(t)$ of gradient descent (an empirical simulation), for $\alpha=1$ and $\gamma \in \{0,1,2,5,20\}$. 
The grey line denotes the set $\cw_\alpha$, and the hyperplane corresponds to $\bx_3^\top \bw = 0$ (where $\gamma>0$). 
Note that the trajectories for $\gamma>0$ consist of two parts. In the first part $\bx_3^\top \bw(t) \geq 0$, and in the second part $\bx_3^\top \bw(t) \leq 0$. We note that in the second part, the component $w_3(t)$ remains constant.
Indeed, by Eq.~\ref{eq:3 examples diff}, the change in $\bw(t)$ is spanned by $\bx_1,\bx_2,\bx_3$, and if $\bx_3^\top \bw(t) \leq 0$ then it is spanned only by $\bx_1,\bx_2$ (note that if $\bx_3^\top \bw(t) = 0$ then $\sigma(\bx_3^\top \bw(t))-y_3=0$). Since $\bx_1,\bx_2$ have $0$ as their third component, then $w_3(t)$ does not change. 
Intuitively, the parameter $\gamma$ controls the extent to which $w_3(t)$ decreases during the first part, before reaching the second part in which it stays constant. 
In the proof we analyze these trajectories formally.

For $\gamma=0$, we show that the solution to Eq.~\ref{eq:3 examples diff} is $\bw(t) = (w_1(t),w_2(t),0)$, where $\tilde{\bw}(t)=(w_1(t),w_2(t))$ is given by 
\begin{equation*}
	\tilde{\bw}(t) 
	= \tilde{X}^{-1} \tilde{\by} - \exp\left(-t \tilde{X}^\top \tilde{X}\right) \tilde{X}^{-1} \tilde{\by}~.
\end{equation*}
Here, $\tilde{X} \in \reals^{2 \times 2}$ is obtained from $X_0$ by omitting the third column and third row, $\tilde{\by}=(y_1,y_2)^\top$, and $\exp(\cdot)$ denotes the matrix exponential. Note that $\tilde{\bw}(\infty) = \tilde{X}^{-1} \tilde{\by} = \alpha \cdot (5,-1)^\top$.
Moreover, note that in order to show that this trajectory satisfies Eq.~\ref{eq:3 examples diff}, it suffices to show that $\bx_i^\top \bw(t) \geq 0$ for every $i \in \{1,2,3\}$ and $t \geq 0$, and that $\bw(t)$ satisfies the linear differential equation obtained form Eq.~\ref{eq:3 examples diff} by plugging in $\sigma(\bx_i^\top \bw(t))=\bx_i^\top \bw(t)$ and $\sigma'(\bx_i^\top \bw(t))=1$.

For $\gamma>0$, we show that the solution to Eq.~\ref{eq:3 examples diff} consists of two parts: The first part is where $t \in [0,t_1]$ for some $t_1>0$, and second part is where $t \in [t_1,\infty)$.
In the first part, we have 
\begin{equation*}
	\bw(t) = \alpha (5,-1,1)^\top - \exp\left(-t X_\gamma^\top X_\gamma\right) \alpha (5,-1,1)^\top~, 
\end{equation*}
and $\bx_i^\top \bw(t) \geq 0$ for every $i \in \{1,2,3\}$ and $t \in [0,t_1]$.
Then, in the second part, $w_3(t)$ remains constant, and $\tilde{\bw}(t)=(w_1(t),w_2(t))$ is given by 
\begin{align*}
	\tilde{\bw}(t) 
	= \alpha (5,-1)^\top + \exp\left(-(t-t_1) \tilde{X}_\gamma^\top \tilde{X}_\gamma\right) \left( \tilde{\bw}(t_1) - \alpha (5,-1)^\top \right)~.
\end{align*}
Here, $\tilde{X}_\gamma \in \reals^{2 \times 2}$ is obtained from $X_\gamma$ by omitting the third column and third row.
Note that $\tilde{\bw}(\infty) = \alpha \cdot (5,-1)^\top$. In this part, we have $\bx_i^\top \bw(t) \geq 0$ for $i \in \{1,2\}$ and $\bx_3^\top \bw(t) \leq 0$.

In the proof, we show for both parts, that the trajectories satisfy Eq.~\ref{eq:3 examples diff}. Moreover, we investigate the value of $w_3(\infty)=w_3(t_1)$ in order to obtain the required bounds for each $\gamma \in \{1,2,5\}$.
\end{proof}

\begin{figure}[t]
	\centering
	\includegraphics[scale=0.8]{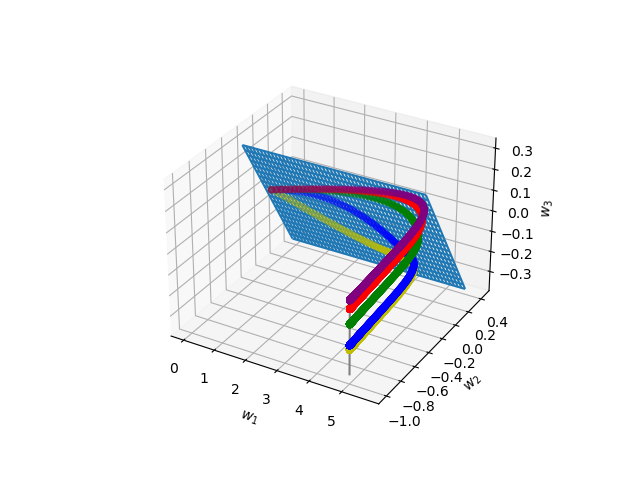}
	\caption{The trajectories for $\gamma=0$ (purple), $\gamma=1$ (red), $\gamma=2$ (green), $\gamma=5$ (blue), and $\gamma=20$ (yellow).} 
	\label{fig:sim}
\end{figure}



\stam{
Note that Theorem~\ref{thm:not l2} with $\alpha=\gamma=1$ implies that gradient flow converges to $\bw(\infty)=(5,-1,s)^\top$ for some $s<0$, while the zero-loss solution with the minimal $\ell_2$ norm is $(5,-1,0)^\top$. Hence, the implicit regularization of a ReLU neuron is not the $\ell_2$ norm.
}

\stam{
Hence, we have the following corollary.
\begin{corollary}
	The implicit regularization of a ReLU neuron is not the $\ell_2$ norm.
\end{corollary}
}

\begin{remark}
\label{rem:gamma controls distance}
	Theorem~\ref{thm:not l2} shows that by modifying the parameter $\gamma$ we can control the 
	ratio $\frac{\norm{\bw(\infty)}}{\norm{w^*}}$, where $\bw^*$ is a minimal-norm zero-loss solution.
	In Section~\ref{sec:positive result} we will show that this 
	ratio
	is bounded.
\end{remark}

\subsection{$\calr$ is trivial}
\label{sec:R trivial}

We now prove the following theorem, which implies that $\calr$ is trivial and does not induce an implicit bias.

\begin{theorem}
\label{thm:R trivial}
	Consider gradient flow starting from $\bw(0)=\zero$, on the objective given by Eq.~\ref{eq:problem}, where $\sigma:\reals \rightarrow \reals$ is the ReLU function.
	Let $\calr: \reals^3 \rightarrow \reals$, such that for every input $(X,\by) \in \reals^{3 \times 3} \times \reals^3$ where gradient flow converges to $\bw(\infty)$ with $\cl(\bw(\infty))=0$, we have $\bw(\infty) \in \argmin_\bw \calr(\bw) \text{ s.t. } \cl(\bw) = 0$. Then, $\calr$ is constant in $\reals^3 \setminus \{\zero\}$.
\end{theorem}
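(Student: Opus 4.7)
The plan is to use Theorem~\ref{thm:not l2}, together with the orthogonal equivariance of gradient flow, to produce a rich family of equalities of the form $\calr(\bp)=\calr(\bp+t\bn)$ with $\bn\perp\bp$, and then chain these to conclude that $\calr$ is constant on the connected space $\reals^3\setminus\{\zero\}$. The observation enabling equivariance is that for any $R\in O(3)$, the change of variables $\bw'=R^\top\bw$ maps the gradient flow ODE on the rotated dataset $\{(R\bx_i,y_i)\}_{i=1}^n$ (started from $\zero$) to the gradient flow ODE on $\{(\bx_i,y_i)\}_{i=1}^n$, because the only way the data enters the dynamics is through the inner products $\bx_i^\top\bw$. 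Hence if the original flow converges to $\bw^\star$, the rotated flow converges to $R\bw^\star$, and the zero-loss sets also transform by $R$.

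\textbf{Key equalities.} Applying Theorem~\ref{thm:not l2} to the rotated dataset $(X_\gamma R^\top,\by_\alpha)$: from the $\gamma>0$ case we obtain $\calr(R\alpha(5,-1,s_\gamma))\leq\calr(R\alpha(5,-1,0))$ (since $R\alpha(5,-1,0)$ lies in the rotated zero-loss set because $0\leq\alpha$), and from the $\gamma=0$ case we obtain the reverse inequality (the rotated zero-loss set is the full line $R\cdot\{(5\alpha,-\alpha,s):s\in\reals\}$). Combining,
\[
\calr(R\alpha(5,-1,0))=\calr(R\alpha(5,-1,s_\gamma)) \quad\text{for every } R\in O(3),\ \alpha>0,\ \gamma\geq 0.
\]
Given any nonzero $\bp$ and any unit $\bn\perp\bp$, I choose $\alpha=\|\bp\|/\sqrt{26}$ and $R\in O(3)$ mapping the orthonormal pair $((5,-1,0)/\sqrt{26},\be_3)$ to $(\bp/\|\bp\|,\pm\bn)$; both orientations are valid because $O(3)$ contains reflections. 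This yields
\[
\calr(\bp)=\calr(\bp+t\bn) \quad\text{for every unit }\bn\perp\bp\text{ and every }t\in[-c_0\|\bp\|,c_0\|\bp\|],
\]
where $c_0>0$ is determined by the range of $|s_\gamma|$ as $\gamma$ varies (which forms an interval of positive length by continuous dependence of the ODE in Theorem~\ref{thm:not l2} on $\gamma$).

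\textbf{Chaining to the conclusion.} Declare $\bp\sim\bq$ if $\calr(\bp)=\calr(\bq)$. The above shows that ``forward moves'' $\bp\mapsto\bp+t\bn$ with $\bn\perp\bp$ and $|t|\leq c_0\|\bp\|$ preserve the $\sim$-class, and by symmetry of $\sim$ so do the corresponding ``backward moves'' $\bq\mapsto\bq'$ satisfying $\bq-\bq'\perp\bq'$ and $|\bq-\bq'|\leq c_0\|\bq'\|$. A forward move from $\bp$ produces a vector of norm $\sqrt{\|\bp\|^2+t^2}\in[\|\bp\|,\sqrt{1+c_0^2}\,\|\bp\|]$, while a backward move from $\bq$ produces one of norm in $[\|\bq\|/\sqrt{1+c_0^2},\|\bp\|]$. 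Iterating forward moves can therefore achieve any norm larger than $\|\bp\|$ and iterating backward any positive norm smaller than $\|\bp\|$; varying the perpendicular direction $\bn$ at each step lets one also rotate the direction by any desired angle (since a single step rotates it by up to $\arctan(c_0)$). Combining these two freedoms, the $\sim$-class of $\bp$ reaches any $(r,\bu)\in(0,\infty)\times S^2$ via finitely many moves, so it equals all of $\reals^3\setminus\{\zero\}$, proving $\calr$ is constant on this set.

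\textbf{Main obstacle.} The most delicate piece is the final chaining argument: one must verify that finitely many forward and backward moves suffice to connect any two nonzero points, in the absence of any continuity assumption on $\calr$. This is a controllability argument for the ``perpendicular moves'' on $\reals^3\setminus\{\zero\}$ that requires care because the infinitesimal distribution $\bp\mapsto\bp^\perp$ is integrable (its leaves are spheres centered at $\zero$), so a standard bracket-generating Chow-type conclusion is unavailable and the norm changes must be extracted from the \emph{finite}, second-order, contribution $\sqrt{\|\bp\|^2+t^2}-\|\bp\|$ of each forward step.
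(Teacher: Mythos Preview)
Your overall strategy---rotation equivariance of gradient flow combined with Theorem~\ref{thm:not l2}, followed by a chaining argument---matches the paper's. The gap is in how you obtain the \emph{continuum} of increments needed for your chaining.

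You assert that $\{s_\gamma:\gamma\ge 0\}$ is an interval ``by continuous dependence of the ODE in Theorem~\ref{thm:not l2} on $\gamma$.'' But Theorem~\ref{thm:not l2} only establishes the trajectory structure, and the value $s_\gamma$, for the four discrete choices $\gamma\in\{0,1,2,5\}$. Extending to a continuum would require showing (i) that the piecewise-linear trajectory description (Part~1 followed by Part~2) is valid for every $\gamma$ in an interval, and (ii) that the limit point $\bw(\infty)$ depends continuously on $\gamma$. Neither is immediate: the right-hand side of the ReLU gradient-flow ODE is discontinuous (through $\sigma'$), so standard well-posedness/continuous-dependence theorems do not apply, and the paper's proof of Theorem~\ref{thm:not l2} checks the region constraints $\bx_i^\top\bw(t)\ge 0$ by explicit, $\gamma$-specific calculations. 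Without the continuum, your chaining collapses: with a single fixed ratio $c_0$ you can only reach norms $\|\bp\|\cdot(1+c_0^2)^{k/2}$ for $k\in\mathbb{Z}$, and since $\calr$ is not assumed continuous this does not force it to be constant.

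The paper sidesteps this by extracting one more piece of information you already have but discard. From the $\gamma=0$ case, gradient flow converges to $\alpha(5,-1,0)$ while the zero-loss set is the entire line $\{\alpha(5,-1,s):s\in\reals\}$; hence not only $\calr(\alpha(5,-1,0))=\calr(\alpha(5,-1,s_\gamma))$, but also $\calr(\alpha(5,-1,0))=\min_{\beta\ge 0}\calr(\alpha(5,-1,0)+\beta\,\alpha(0,0,-1))$. After rotation this gives, for any $\bw$ and any $\bv\perp\bw$ with $\|\bv\|=c\|\bw\|/\sqrt{26}$ (a single fixed $c>0$), both $\calr(\bw)=\calr(\bw+\bv)$ and $\calr(\bw)=\min_{\beta\ge 0}\calr(\bw+\beta\bv)$. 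The paper's Lemma~\ref{lemma:R constant} then finishes: the first identity yields radiality and $f(r)=f(r\sqrt{1+c^2/26})$, while the $\min$ identity gives $f(r)\le f(r')$ for all $r'\ge r$, and sandwiching $f(r)\le f(r')\le f\bigl(r(1+c^2/26)^{m/2}\bigr)=f(r)$ for large $m$ forces $f$ constant. You can repair your argument simply by using this $\min$ property in place of the unproven continuum of $s_\gamma$.
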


Let $X \in \reals^{3 \times 3}$ and $\by \in \reals^3$. Let $\bx_1^\top,\bx_2^\top,\bx_3^\top$ be the rows of $X$. Suppose that given input $(X,\by)$, gradient flow with $\bw(0)=\zero$ converges to $\bw(\infty)$. Note that by rotating $\bx_1,\bx_2,\bx_3$, the trajectory $\bw(t)$ is rotated accordingly 

Thus, for every rotation matrix $M \in SO(3)$, given input $(XM^\top,\by)$ gradient flow converges to $M \bw(\infty)$. 

Let $\alpha>0$. By Theorem~\ref{thm:not l2}, all points in $\cw_\alpha$ are global minima for all inputs $\{(X_\gamma,\by_\alpha): \gamma \geq 0\}$, but gradient-flow converges to different points $\bw^\gamma \in \cw_\alpha$, depending on $\gamma$. Hence, we have $\calr(\bw^\gamma) = \min_{\bw \in \cw_\alpha}\calr(\bw)$.
Thus, Theorem~\ref{thm:not l2} implies that $\calr(\bw^0)=\calr(\bw^1)=\calr(\bw^2)=\calr(\bw^5) = \min_{\bw \in \cw_\alpha}\calr(\bw)$. We use this property in order to prove Theorem~\ref{thm:R trivial}.

Let $M \in SO(3)$. We denote $M \cw_\alpha = \{M \bw: \bw \in \cw_\alpha\}$.
By changing the input $X_\gamma$ in Theorem~\ref{thm:not l2} to $X_\gamma M^\top$, we deduce that $\calr(M \bw^0) = \calr(M \bw^1) = \min_{\bw \in M \cw_\alpha}\calr(\bw)$. 
Indeed,  all points in $M \cw_\alpha$ are global minima for all inputs $\{(X_\gamma M^\top,\by_\alpha): \gamma \geq 0\}$, but gradient-flow converges to different points $M \bw^\gamma \in M \cw_\alpha$, depending on $\gamma$.
Let $c>0$ be such that $\bw^1 = \alpha \cdot (5,-1,-c)$.
For every $\bw \in \reals^3$ with $\norm{\bw} = \norm{\bw^0} = \norm{\alpha \cdot (5,-1,0)} = \alpha \sqrt{26}$, and $\bv \in \reals^3$ such that $\inner{\bw,\bv} = 0$ and $\norm{\bv} = \alpha \cdot c$, by choosing an appropriate rotation $M$ we obtain $ \calr(\bw) = \calr(\bw+\bv) = \min_{\beta \geq 0}\calr(\bw + \beta \bv)$.
Finally, as we show in the following lemma,
this property implies that $\calr$ is constant over $\reals^3 \setminus \{\zero\}$, and thus we complete the proof of Theorem~\ref{thm:R trivial}.

\begin{lemma}
	\label{lemma:R constant}
	Let $q,c > 0$ be constants.
	Let $d \geq 3$, and let $\calr:\reals^d \rightarrow \reals$ be a function such that for every $\alpha>0$, vector $\bw \in \reals^d$ with $\norm{\bw} = \alpha q$, and vector $\bv \in \reals^d$ such that $\norm{\bv} = \alpha c$ and $\inner{\bw,\bv} = 0$, we have $\calr(\bw) = \calr(\bw+\bv) = \min_{\beta \geq 0}\calr(\bw + \beta \bv)$.
	Then, $\calr(\bw) = \calr(\bw')$ for every $\bw,\bw' \in \reals^d \setminus \{\zero\}$. 
\end{lemma}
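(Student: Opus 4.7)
The plan is to prove the lemma in two stages: first, show $\calr$ is constant on every sphere $S_r := \{\bw \in \reals^d : \norm{\bw} = r\}$ for $r > 0$; second, show the resulting function $f(r) := \calr|_{S_r}$ is constant on $(0, \infty)$. Set $\rho := c/q$ so that the hypothesis reads: for every nonzero $\bw$ and every $\bv \perp \bw$ with $\norm{\bv} = \rho \norm{\bw}$, one has $\calr(\bw) = \calr(\bw + \bv)$ and $\calr(\bw) \leq \calr(\bw + \beta \bv)$ for every $\beta \in \reals$ (the latter for $\beta \geq 0$ directly, and for $\beta < 0$ by applying the hypothesis with $-\bv$ in place of $\bv$).

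For stage one, fix $r > 0$ and let $R := r\sqrt{1+\rho^2}$. For each $\bw \in S_r$, the hop targets $\{\bw + \bv : \bv \perp \bw,\ \norm{\bv} = \rho r\}$ form precisely the set $\{\bu \in S_R : \inner{\bu, \bw} = r^2\}$, a $(d-2)$-sphere on $S_R$ on which $\calr$ is identically $\calr(\bw)$. Given another $\bw' \in S_r$, if the analogous $(d-2)$-sphere shares a point $\bu$ with the first, then $\bu - \bw'$ is automatically perpendicular to $\bw'$ with norm $\rho r$, so $\bu$ is a legitimate hop target from $\bw'$ as well, giving $\calr(\bw) = \calr(\bu) = \calr(\bw')$. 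A short computation in $\mathrm{span}(\bw, \bw')$ (writing $\bu = \alpha_1 \bw + \alpha_2 \bw' + \bu_\perp$ and solving the two linear equations $\inner{\bu, \bw} = \inner{\bu, \bw'} = r^2$) shows that the two $(d-2)$-spheres intersect iff $\cos \angle(\bw, \bw') \geq (1 - \rho^2)/(1 + \rho^2) = \cos(2 \arctan \rho)$; the freedom to place $\bu_\perp$ outside $\mathrm{span}(\bw, \bw')$ is precisely where the assumption $d \geq 3$ enters. Since $2 \arctan \rho$ is a fixed positive constant and $S_r$ is path-connected, any two points of $S_r$ (including the antipodal case) can be joined by a finite chain of intermediate points with consecutive angular gaps strictly below that threshold, so $\calr \equiv f(r)$ on $S_r$.

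For stage two, a single hop yields $f(r) = f(r \sqrt{1 + \rho^2})$ for every $r > 0$. The minimum condition gives $f(r) \leq f(\norm{\bw + \beta \bv}) = f(r \sqrt{1 + \beta^2 \rho^2})$ for every $\beta \in \reals$, and since $\sqrt{1 + \beta^2 \rho^2}$ ranges over $[1, \infty)$, we conclude that $f$ is non-decreasing on $(0, \infty)$. A non-decreasing function satisfying $f(r) = f(r\sqrt{1+\rho^2})$ must be constant on each interval $[r, r\sqrt{1 + \rho^2}]$, and iterating covers $[r_0, \infty)$ for every $r_0 > 0$, so $f$ is constant on $(0, \infty)$.

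The main obstacle is the geometric claim in stage one: checking that the two $(d-2)$-spheres on $S_R$ really do intersect throughout the full angular window $[0, 2 \arctan \rho]$, handling nearly antipodal pairs by chaining through several intermediate points on $S_r$, and making explicit that $d = 2$ is insufficient (in that case the two equations $\inner{\bu, \bw} = \inner{\bu, \bw'} = r^2$ pin $\bu$ uniquely, forcing a single admissible angle $2 \arctan \rho$ and preventing any chaining). Once stage one is in hand, stage two is a quick squeeze between monotonicity and scale-invariance.
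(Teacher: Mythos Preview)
Your proposal is correct and follows essentially the same two-stage approach as the paper: first show $\calr$ is radial by exhibiting a common hop-target $\bu^*\in S_R$ reachable from two nearby points $\bw,\bw'$ on $S_r$, then use the monotonicity from the $\min$ condition together with the scale relation $f(r)=f(r\sqrt{1+\rho^2})$ to conclude $f$ is constant. Your angular threshold $\cos\angle(\bw,\bw')\geq(1-\rho^2)/(1+\rho^2)$ is exactly the paper's distance condition $\norm{\bw-\bw'}\leq 2r\rho/\sqrt{1+\rho^2}$ rewritten, and the paper's explicit construction of $\bu^*$ (scaled midpoint plus an orthogonal correction $\be$) is precisely the point your intersection argument produces.
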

\begin{proof}[Proof sketch (for complete proof see Appendix~\ref{app:R constant})]
	First, we show that $\calr$ is radial, namely, there exists a function $f: \reals_+ \rightarrow \reals$ such that $\calr(\bw)=f(\norm{\bw})$ for every $\bw \in \reals^d$. Assume that $\bw,\bw' \in \reals^d \setminus \{\zero\}$ are such that $\norm{\bw}=\norm{\bw'}=\alpha q$ for some $\alpha>0$, and $\norm{\bw-\bw'}$ is sufficiently small. We show that there exists $\bu^* \in \reals^d$ such that $\bu^* = \bw + \bv = \bw' + \bv'$ for some $\bv,\bv'$ with $\norm{\bv}=\norm{\bv'}=\alpha c$ and $\inner{\bw,\bv}=\inner{\bw',\bv'}=0$. By our assumption on $\calr$, it implies that $\calr(\bw) = \calr(\bu^*) = \calr(\bw')$, and hence $\calr$ is radial. 
	
	Thus, it suffices to show that $f(r)=f(r')$ for every $0<r<r'$.
	Using our assumption on $\calr$, we show that for every $m \in \nat$ we have $f(r)=f\left(r  \left(1+ \frac{c^2}{q^2}\right)^{m/2} \right)$. The assumption on $\calr$ also implies that $f(r) = \min_{r'' \geq r}f(r'') \leq f(r')$, and that $f(r') \leq f\left(r  \left(1+ \frac{c^2}{q^2}\right)^{m/2} \right)$ for a sufficiently large $m$. Therefore, we have $f(r) \leq f(r') \leq f\left(r  \left(1+ \frac{c^2}{q^2}\right)^{m/2} \right) = f(r)$, and hence $f(r)=f(r')$ as required.
\end{proof}

\begin{remark}[$\calr$ is trivial also for $d>3$] 
	Theorem~\ref{thm:R trivial} implies that the implicit regularization $\calr:\reals^d \rightarrow \reals$ is trivial for $d=3$. By padding the inputs $\bx_1,\bx_2,\bx_3$ from Theorem~\ref{thm:not l2} with zeros, we can obtain a $d$-dimensional version of Theorem~\ref{thm:not l2} for $d>3$. Then, using the same arguments as above, it implies that Theorem~\ref{thm:R trivial} holds for every $d \geq 3$.
\end{remark}

\stam{
Theorem~\ref{thm:not function of w} implies that the implicit regularization cannot be expressed as a (nontrivial) function of $\bw$. 
Indeed, assume that the implicit regularization can be expressed as a function $\calr:\reals^3 \rightarrow \reals$, namely, that if gradient-flow converges to a global minimum $\bw^*$, then we have $\bw^* \in \argmin_\bw \calr(\bw) \text{ s.t. } \cl(\bw)=0$.
By Theorem~\ref{thm:not function of w}, for a fixed $\alpha$, all points in $\cw$ are global minima for all input matrices $\{X_\gamma: \gamma \geq 0\}$, but gradient-flow converges to different points $\bw^\gamma \in \cw$, depending on $\gamma$. Hence, the implicit regularization $\calr$ does not prefer any of these points $\bw^\gamma$ over the others. 
Theorem~\ref{thm:not function of w} considers $\gamma \in \{0,1,2,5\}$, and implies that $\calr(\bw^0)=\calr(\bw^1)=\calr(\bw^2)=\calr(\bw^5)$ where the points $\bw^\gamma$ are on the ray $\cw = \{(5\alpha,-\alpha,s)^\top: s \leq 0\}$. The theorem considers specific values of $\gamma$, since, for technical reasons, a general formula for $\bw^\gamma$ cannot be achieved with the current technique. However, Theorem~\ref{thm:not function of w} can be easily extended to more values of $\gamma$, and it demonstrates that $\calr$ must be constant on a line segment $\ci = \{\alpha \cdot (5,-1,s)^\top: c \leq s \leq 0\} \subseteq \cw$,  where $c<0$ is a constant independent of $\alpha$. 

Let $X \in \reals^{3 \times 3}$ and $\by \in \reals^3$. Suppose that given input $(X,\by)$, gradient flow with $\bw(0)=\zero$ converges to $\bw^*$. Note that by rotating the inputs $\bx_1,\bx_2,\bx_3$, the output $\bw^*$ is rotated accordingly. That is, for every rotation matrix $M \in SO(3)$, given input $(XM,\by)$ gradient flow converges to $M^\top \bw^*$. 
By changing the input $X_\gamma$ in Theorem~\ref{thm:not function of w} to $X_\gamma M$, we deduce that $\calr$ is constant also on the rotated line segment $\{M^\top \alpha (5,-1,s)^\top: c \leq s \leq 0\}$.
Hence, for every $\bw \in \reals^3$ with $\norm{\bw} = \norm{\alpha \cdot (5,-1,0)} = \alpha \sqrt{26}$, and $\bv \in \reals^3$ such that $\inner{\bw,\bv} = 0$ and $\norm{\bv} \leq \alpha \cdot |c|$, by choosing an appropriate rotation we obtain $\calr(\bw+\bv) = \calr(\bw)$.
Finally, as we show in the following lemma (see Appendix~\ref{app:R constant} for a proof), this property implies that $\calr$ is constant over $\reals^3 \setminus \{\zero\}$. Thus, $\calr$ does not induce any nontrivial implicit bias.

\begin{lemma}
\label{lemma:R constant}
Let $c' > 0$ be a constant.
Let $\calr:\reals^3 \rightarrow \reals$ such that for every constant $\alpha>0$, vector $\bw \in \reals^3$ with $\norm{\bw} = \norm{\alpha \cdot (5,-1,0)} = \alpha \sqrt{26}$, and vector $\bv \in \reals^3$ such that $\norm{\bv} \leq \alpha c'$ and $\inner{\bw,\bv} = 0$, we have $\calr(\bw+\bv) = \calr(\bw)$.
Then $\calr(\bw) = \calr(\bw')$ for every $\bw,\bw' \in \reals^3 \setminus \{\zero\}$. 
\end{lemma}
}

\section{The implicit regularization of a ReLU neuron is approximately the $\ell_2$ norm}
\label{sec:positive result}

While the implicit regularization of ReLU neuron cannot be expressed as a function $\calr(\bw)$, in this section we show that it can be expressed approximately, within a factor of $2$, by the $\ell_2$ norm. This implies that even without early stopping, if the data can be labeled by a ReLU neuron with small $\ell_2$ norm, then gradient flow will converge to a ReLU neuron whose $\ell_2$ norm is not much larger. Since a ReLU neuron is just a linear function composed with a fixed nonlinearity, this can be used to derive good statistical generalization guarantees, via standard techniques (cf. \cite{shalev2014understanding}).

\begin{theorem}
\label{thm:positive result}
Consider gradient flow on the objective given by Eq.~\ref{eq:problem}, where $\sigma:\reals \rightarrow \reals$ is a monotonically non-decreasing activation function (e.g., ReLU).
Assume that $\bw(\infty)$ exists and $\cl(\bw(\infty))=0$. Let $\bw^* \in \argmin_\bw \norm{\bw - \bw(0)} \text{ s.t. } \cl(\bw)=0$. 
Then, $\norm{\bw(\infty) - \bw(0)} \leq 2 \cdot \norm{\bw^* - \bw(0)}$.
\end{theorem}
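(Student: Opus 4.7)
The plan is to show that the squared distance from the gradient flow trajectory to the target $\bw^*$ is a non-increasing function of time, and then finish via the triangle inequality. Concretely, I would work (without loss of generality, by shifting coordinates) under the assumption $\bw(0) = \zero$, and consider the Lyapunov-style function $\phi(t) := \norm{\bw(t) - \bw^*}^2$. Differentiating along the flow gives
\begin{equation*}
\dot\phi(t) = 2\inner{\bw(t) - \bw^*, \dot{\bw}(t)} = -2 \inner{\bw(t) - \bw^*, \nabla \cl(\bw(t))}~.
\end{equation*}

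The key step is to show that the inner product on the right-hand side is non-negative. Expanding the gradient,
\begin{equation*}
\inner{\bw(t) - \bw^*, \nabla \cl(\bw(t))} = \sum_{i=1}^n \left(\sigma(\bx_i^\top \bw(t)) - y_i\right)\sigma'(\bx_i^\top \bw(t))\,\bx_i^\top(\bw(t) - \bw^*)~.
\end{equation*}
Since $\cl(\bw^*) = 0$ we have $\sigma(\bx_i^\top \bw^*) = y_i$ for each $i$, so the $i$-th term has the form $(\sigma(a_i) - \sigma(b_i))\,\sigma'(a_i)\,(a_i - b_i)$ with $a_i := \bx_i^\top \bw(t)$ and $b_i := \bx_i^\top \bw^*$. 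Because $\sigma$ is monotonically non-decreasing, $\sigma(a) - \sigma(b)$ and $a - b$ always share the same sign (or one is zero), and $\sigma'(a) \geq 0$; hence each summand is non-negative, and so is the whole sum. Therefore $\dot\phi(t) \leq 0$ for all $t$, so $\phi$ is non-increasing.

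Passing to the limit $t \to \infty$ gives $\norm{\bw(\infty) - \bw^*} \leq \norm{\bw(0) - \bw^*} = \norm{\bw^*}$. Combining with the triangle inequality,
\begin{equation*}
\norm{\bw(\infty)} \leq \norm{\bw(\infty) - \bw^*} + \norm{\bw^*} \leq 2\norm{\bw^*}~,
\end{equation*}
which is the claim after undoing the initial shift.

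The only subtlety I anticipate is the non-differentiability of ReLU at $0$: the argument uses $\sigma'$ at points $\bx_i^\top \bw(t)$ which may pass through $0$. However this is a measure-zero issue in the time variable, and under any convention $\sigma'(0) \geq 0$ (in particular the paper's convention $\sigma'(0) = 1$), the key inequality $(\sigma(a)-\sigma(b))\sigma'(a)(a-b) \geq 0$ continues to hold, so the monotonicity of $\phi$ is preserved. Beyond this bookkeeping, the proof is essentially a one-line monotonicity computation; no finer analysis of the trajectory (unlike Theorem~\ref{thm:not l2}) is needed.
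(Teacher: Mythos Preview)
Your proposal is correct and essentially identical to the paper's proof: both show that $\norm{\bw(t)-\bw^*}^2$ is non-increasing along the flow by expanding the gradient and using that $(\sigma(a)-\sigma(b))\sigma'(a)(a-b)\geq 0$ for a monotonically non-decreasing $\sigma$, then conclude via the triangle inequality. The only cosmetic difference is your coordinate shift to $\bw(0)=\zero$, which the paper skips by working directly with $\norm{\bw(\infty)-\bw(0)}$.
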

\begin{proof}
First, note that
\begin{align*}
\inner{\nabla \cl(\bw(t)), \bw(t) - \bw^*}
&= \left<\sum_{i=1}^n \left( \sigma(\bx_i^\top \bw(t))-\sigma(\bx_i^\top \bw^*) \right) \sigma'(\bx_i^\top \bw(t)) \bx_i, \bw(t) - \bw^*\right>
\\
&= \sum_{i=1}^n \left( \sigma(\bx_i^\top \bw(t))-\sigma(\bx_i^\top \bw^*) \right) \sigma'(\bx_i^\top \bw(t)) (\bx_i^\top \bw(t) - \bx_i^\top \bw^*)~.
\end{align*}
Since $\sigma$ is monotonically non-decreasing then 
\[
\left(  \sigma(\bx_i^\top \bw(t))-\sigma(\bx_i^\top \bw^*)  \right) (\bx_i^\top \bw(t) - \bx_i^\top \bw^*) \geq 0~,
\]
and $\sigma'(\bx_i^\top \bw(t)) \geq 0$.
Hence, 
\[
\inner{\nabla \cl(\bw(t)), \bw(t) - \bw^*} \geq 0~.
\]

Therefore, we have
\[
\frac{d}{\mathop{dt}} \left(\frac{1}{2} \norm{\bw(t) - \bw^*}^2 \right)
= \inner{\bw(t) - \bw^*, \dot{\bw}(t)}
= \inner{\bw(t) - \bw^*, -\nabla \cl(\bw(t))}
\leq 0~.
\]
Thus, $\norm{\bw(\infty) - \bw^*} \leq \norm{\bw(0) - \bw^*}$. 

Hence, 
\begin{align*}
\norm{\bw(\infty) - \bw(0)} 
&= \norm{\bw(\infty) - \bw^* + \bw^* - \bw(0)}
\\
&\leq \norm{\bw(\infty) - \bw^*} + \norm{\bw^* - \bw(0)}
\\
&= 2 \cdot \norm{\bw^* - \bw(0)}~.
\end{align*}
\end{proof}

\begin{remark}
	Note that Theorem~\ref{thm:positive result} implies that if $\bw(0) = \zero$ then $\norm{\bw(\infty)} \leq  2 \cdot \norm{\bw^*}$.
	By Theorem~\ref{thm:R trivial}, the implicit regularization cannot be expressed as a function of $\bw$. 
	Hence, while the implicit regularization of a ReLU neuron cannot be expressed exactly as a function of $\bw$, it can be expressed approximately, within a factor of $2$, by the $\ell_2$ norm.	
\end{remark}

\section{Depth-$2$ ReLU networks}
\label{sec:depth 2}

In Section~\ref{sec:not function of w}, we showed that the implicit regularization of a single ReLU neuron is not expressible by any nontrivial function of $\bw$. In this section we will show an analogous result for networks with one hidden ReLU neuron (namely, $\bx\mapsto v \cdot \sigma(\inner{\bx,\bw})$), which is the simplest case of a depth-$2$ ReLU network. Since our focus here is on impossibility results, then our results for this simple case imply impossibility results also for more general cases.

By \cite{du2018algorithmic}, in feed-forward ReLU networks, gradient flow enforces the differences between square norms across different layers to remain invariant. Thus, if gradient flow starts from a point close to $\zero$, then the magnitudes of all layers are automatically balanced. In the case of single-hidden-neuron networks, it implies that $\norm{\bw}$ (the norm of the weights vector of the first layer) is roughly equal to $|v|$ (the weight in the second layer). Hence, gradient flow induce a bias toward balanced layers. However, this bias is considered weak and does not allow us to derive generalization guarantees. 
Hence, there has been much effort to characterize the implicit regularization and to understand whether it is related to properties such as small norms, sparsity, or low ranks (cf. \cite{gunasekar2018implicit,razin2020implicit,arora2019implicit,woodworth2020kernel,belabbas2020implicit,li2020towards}).
We show that in single-hidden-neuron networks, the \emph{only} bias which can be specified by a regularization function $\calr(\btheta)$ (where $\btheta$ are the network parameters) is the balancedness property described above. Namely, $\calr(\btheta)$ is constant in the set of parameters that satisfy the balacedness property. 

Recall that in our study of single-neuron networks in Section~\ref{sec:not function of w}, we first analyzed the behavior of gradient flow for some specific inputs (Theorem~\ref{thm:not l2}), and then used this result in order to show that the implicit regularization function $\calr$ is trivial (Theorem~\ref{thm:R trivial}). Here, we proceed in a similar manner, but the first part is based on empirical results rather than on a theoretical proof. Thus, we first run gradient descent for some specific inputs, and observe empirically that it converges to points that clearly satisfy a certain technical property. Then, making the explicit assumption that the technical property holds, we prove that the regularization function $\calr(\btheta)$ is constant in the set of parameters that satisfy the balacedness property.     

We now proceed to the formal results.
Let $\btheta = (\bw,v) \in \reals^d \times \reals$, and consider the neural network $N_\btheta(\bx) = v \cdot \sigma(\inner{\bx,\bw})$, where $\sigma:\reals \rightarrow \reals$ is the ReLU function. Let $\{(\bx_i,y_i)\}_{i=1}^n$ be a training dataset, and let $X \in \reals^{n \times d}$ be the corresponding data matrix. We analyze the implicit regularization of gradient flow on the objective 
\begin{equation}
\label{eq:problem 2}
	\cl_{X,\by}(\btheta) := \frac{1}{2} \sum_{i=1}^n \left( N_\btheta(\bx_i) - y_i \right)^2~. 
\end{equation}
We assume that the data is realizable, i.e., $\min_\btheta \cl_{X,\by}(\btheta)=0$.
The run of gradient flow starts from $\btheta(0) = (\bw(0),v(0))$, where $\bw(0)=\zero$, and $v(0)$ is a small  number\footnote{Note that if we start from $(\zero,0)$ then gradient flow stays at this point indefinitely, since the gradient there is $\zero$. This is avoided by making either the scalar $v(0)$ or the vector $\bw(0)$ non-zero, and we focus on the former as it makes the analysis cleaner.}. 
Let $\btheta^\epsilon(t)$ be the trajectory of gradient flow where 
$\btheta^\epsilon(0)=(\zero,\epsilon)$ for some $\epsilon>0$,
and let $\btheta^\epsilon(\infty) = (\bw^\epsilon(\infty),v^\epsilon(\infty)):= \lim_{t \rightarrow \infty} \btheta^\epsilon(t)$ (assuming that the limit exists). In addition, we define the {\em limit point} of gradient flow (on inputs $(X,\by)$) as $\btheta^* = (\bw^*,v^*) := \lim_{\epsilon \rightarrow 0^+} \btheta^\epsilon(\infty)$, assuming the limit exists.
This follows a standard practice in analyzing the implicit regularization of gradient flow, where we consider the flow's limit assuming it starts infinitesimally close to $\zero$ (see, e.g.,  \cite{gunasekar2018implicit,razin2020implicit,arora2019implicit,woodworth2020kernel,li2020towards}).

We are interested in understanding the properties of any possible implicit regularization function $\calr:\reals^d \times \reals \rightarrow \reals$: Namely, a function that for every input $(X,\by) \in \reals^{n \times d} \times \reals^n$ to gradient flow, if $\btheta^*$ exists and $\cl_{X,\by}(\btheta^*)=0$, then we have $\btheta^* \in \argmin_\btheta \calr(\btheta) \text{ s.t. } \cl_{X,\by}(\btheta)=0$. Whereas for a single ReLU neuron, we showed that $\calr$ is necessarily constant, the situation for a single hidden-neuron is more complex, because gradient flow is already known to induce the following nontrivial balacedness property\footnote{\cite{du2018algorithmic} showed the lemma for the more general case of fully-connected networks with homogeneous activation functions. See Appendix~\ref{app:proof of lemma v_t} for a simpler proof for our setting.}:

\begin{lemma}[\cite{du2018algorithmic}]
\label{lemma:v_t}
	Let $\btheta(t) = (\bw(t),v(t))$ be the trajectory of gradient flow on Eq.~\ref{eq:problem 2}, starting from some $\btheta(0) \in \reals^d \times \reals$. Then 
	\[
	\frac{\mathop{d}}{\mathop{d t}} \left( \norm{\bw}^2 - v^2 \right) = 0~.
	\]
\end{lemma}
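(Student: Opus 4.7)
The plan is to compute $\frac{d}{dt}(\|\bw(t)\|^2 - v(t)^2)$ directly from the gradient flow equations and show that the two contributions cancel because of the $1$-homogeneity of ReLU.

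First I would write down the gradients explicitly. Letting $r_i(\btheta) := v\sigma(\inner{\bx_i,\bw}) - y_i$, a straightforward chain-rule calculation gives
\[
\nabla_{\bw}\cl_{X,\by}(\btheta) = v\sum_{i=1}^n r_i(\btheta)\,\sigma'(\inner{\bx_i,\bw})\,\bx_i, \qquad \partial_v \cl_{X,\by}(\btheta) = \sum_{i=1}^n r_i(\btheta)\,\sigma(\inner{\bx_i,\bw}).
\]
Then, differentiating along the gradient flow trajectory and using $\dot{\bw} = -\nabla_{\bw}\cl_{X,\by}$ and $\dot{v} = -\partial_v \cl_{X,\by}$, I obtain
\[
\frac{d}{dt}\left(\|\bw\|^2 - v^2\right) = 2\inner{\bw,\dot{\bw}} - 2v\dot{v} = -2\left(\inner{\bw,\nabla_{\bw}\cl_{X,\by}} - v\,\partial_v \cl_{X,\by}\right).
\]

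The key step is the ReLU homogeneity identity $\sigma(z) = z\,\sigma'(z)$ for all $z \in \reals$, which holds because $\sigma$ is identically $z$ on $z>0$ and identically $0$ on $z\le 0$ (the convention $\sigma'(0)=1$ is irrelevant here since the prefactor $\inner{\bx_i,\bw}$ vanishes at $z=0$). Applying this pointwise with $z = \inner{\bx_i,\bw}$ gives
\[
\inner{\bw,\nabla_{\bw}\cl_{X,\by}} = v\sum_{i=1}^n r_i(\btheta)\,\sigma'(\inner{\bx_i,\bw})\,\inner{\bx_i,\bw} = v\sum_{i=1}^n r_i(\btheta)\,\sigma(\inner{\bx_i,\bw}) = v\,\partial_v \cl_{X,\by},
\]
so the parenthesised expression is identically zero along the trajectory, yielding $\frac{d}{dt}(\|\bw\|^2 - v^2) = 0$.

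There is no real obstacle: the argument is a one-line consequence of ReLU's positive homogeneity combined with Euler's identity applied neuron-wise. The only technical care needed is in handling the non-differentiability of $\sigma$ at $0$, which is harmless because the identity $z\sigma'(z)=\sigma(z)$ remains true at $z=0$ regardless of the convention chosen for $\sigma'(0)$.
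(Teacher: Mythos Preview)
Your proof is correct and follows essentially the same approach as the paper's: compute $\frac{d}{dt}(\|\bw\|^2 - v^2) = 2\inner{\bw,\dot{\bw}} - 2v\dot{v}$, substitute the explicit gradient expressions, and invoke the ReLU homogeneity identity $z\,\sigma'(z) = \sigma(z)$ to cancel the two terms. The only cosmetic difference is your abbreviation $r_i(\btheta)$ for the residuals, which streamlines the presentation slightly.
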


As a consequence, we have the following corollary.
\begin{corollary}
\label{cor:v(t)}
	Let $\epsilon>0$ and let $\btheta^\epsilon(t) = (\bw(t),v(t))$.
	For every $t \geq 0$ we have $(v(t))^2 = \norm{\bw(t)}^2 + \epsilon^2 > 0$. 	
\end{corollary}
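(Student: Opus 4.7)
The plan is to use Lemma~\ref{lemma:v_t} as a conservation law and evaluate it at the known initial condition. By the lemma, the quantity $\norm{\bw(t)}^2 - (v(t))^2$ is constant along the trajectory of gradient flow on Eq.~\ref{eq:problem 2}, so its value at any time $t \geq 0$ equals its value at $t = 0$.

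Since $\btheta^\epsilon(0) = (\zero, \epsilon)$, we have $\norm{\bw(0)}^2 - (v(0))^2 = 0 - \epsilon^2 = -\epsilon^2$. Therefore for every $t \geq 0$,
\[
\norm{\bw(t)}^2 - (v(t))^2 = -\epsilon^2,
\]
which rearranges to $(v(t))^2 = \norm{\bw(t)}^2 + \epsilon^2$. Since $\norm{\bw(t)}^2 \geq 0$ and $\epsilon > 0$, this quantity is strictly positive, giving the claim.

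There is no real obstacle here since Lemma~\ref{lemma:v_t} does all the work; the only thing to verify is that the lemma applies to the trajectory $\btheta^\epsilon(t)$, which it does by definition, and that the initialization $(\zero, \epsilon)$ makes the conserved value equal to $-\epsilon^2$. The positivity conclusion is immediate from $\epsilon > 0$.
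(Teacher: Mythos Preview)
Your proof is correct and matches the paper's approach: the corollary is stated there simply ``as a consequence'' of Lemma~\ref{lemma:v_t}, and your argument---applying the conservation law and evaluating at the initialization $(\zero,\epsilon)$---is exactly the intended derivation.
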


By Corollary~\ref{cor:v(t)}, we have $v^\epsilon(\infty) = \sqrt{ \norm{\bw^\epsilon(\infty)}^2 + \epsilon^2}$, and hence $v^* = \norm{\bw^*}$. As a result, not all parameters $(\bw^*,v^*)$ can be reached by gradient flow -- only those which satisfy the ``balancedness'' property above. However, this property in itself is rather weak, and does not necessarily induce bias towards properties which may aid in generalization, such as small norms or sparsity. Is there any stronger implicit regularization at play here? To answer that, it is natural to consider the behavior of the implicit regularization function $\calr$ on the set $\cs := \{(\bw,v) \in \reals^{d} \times \reals: \norm{\bw}=v\}$ of ``balanced'' parameters. In what follows, we will argue that $\calr$ is constant on $\cs \setminus \{\zero\}$. Thus, $\calr$ may induce a bias toward $\cs$, namely, enforce the ``balancedness" of $\bw^*$ and $v^*$, but it does not induce any additional bias within $\cs$.

\stam{
Since the limit point $\btheta^*=(\bw^*, v^*)$ is in the set $\cs$ of balanced parameters, then it is uniquely determined by the point $v^*\bw^* \in \reals^d$, and hence, intuitively, we can think about $\btheta^*$ as a point in $\reals^d$. 
More formally, let $\Psi: \reals^d \rightarrow \cs$ be such that $\Psi(\bu) = \left(\frac{\bu}{\sqrt{\norm{\bu}}}, \sqrt{\norm{\bu}} \right)$ for $\bu \neq \zero$, and $\Psi(\zero) = (\zero,0)$. 
Note that for every $(\bw,v) \in \cs$ we have $\Psi(v \bw) = (\bw,v)$, and that $\Psi$ is bijective.
Let $\calr': \reals^d \rightarrow \reals$ be such that $\calr'(\bu) = \calr(\Psi(\bu))$. We will show that $\calr'$ is constant in $\reals^d \setminus \{\zero\}$, and therefore $\calr$ is constant in $\cs \setminus \{\zero\}$.
	
For an input $(X,\by)$ to gradient flow, we denote 
$\cu_{X,\by} = \{\bu \in \reals^d: \cl_{X,\by}(\Psi(\bu))=0\}$, namely, the vectors in $\reals^d$ that correspond to global minima of $ \cl_{X,\by}$.
Let $\bu \in \reals^d$ and let $(\bw,v)=\Psi(\bu)$. Note that $\bu \in \cu_{X,\by}$ iff $\by = v \cdot \sigma(X\bw)$ iff $\by = \sigma(v X \bw) = \sigma(X \bu)$.
Therefore, we have $\cu_{X,\by} = \{\bu \in \reals^d: \sigma(X \bu)=\by\}$.
Let $\bu^* = v^* \bw^* = \Psi^{-1}(\btheta^*)$. 
Since $\btheta^* \in \argmin_{\{\btheta \in \cs: \cl_{X,\by}(\btheta)=0\}} \calr(\btheta)$, then $\bu^* \in \argmin_{\bu \in  \cu_{X,\by}} \calr'(\bu)$. Indeed, we have 
\[
	\calr'(\bu^*) 
	= \calr(\Psi(\bu^*)) 
	= \calr(\btheta^*) 
	= \min_{\{\btheta \in \cs: \cl_{X,\by}(\btheta)=0\}}\calr(\btheta) 
	= \min_{\{\bu \in \reals^d: \cl_{X,\by}(\Psi(\bu))=0\}}\calr(\Psi(\bu))
	= \min_{\bu \in \cu_{X,\by}}\calr'(\bu)~,
\]
and $\bu^* \in \cu_{X,\by}$ since  $\cl_{X,\by}(\Psi(\bu^*))=\cl_{X,\by}(\btheta^*)=0$.
	
In order to show that $\calr'$ is constant in $\reals^d \setminus \{\zero\}$, we start with an empirical study of gradient descent for some specific inputs. Then, under an assumption based on the empirical results, we prove that $\calr'$ is constant.
}

In order to show that $\calr$ is constant in $\cs \setminus \{\zero\}$, we start with an empirical study of gradient descent for some specific inputs. Then, under a mild assumption based on the empirical results, we prove that $\calr$ is constant.

\subsection{Empirical results}

In Theorem~\ref{thm:not l2} we showed that the implicit regularization of a single ReLU neuron is not the $\ell_2$ norm. We now demonstrate an analogous behavior in single-hidden-neuron networks. 

Let $X_\gamma \in \reals^{3 \times 3}$ be the matrix from Theorem~\ref{thm:not l2}. Let $\by \in \reals^3$ be the vector from Theorem~\ref{thm:not l2} with $\alpha=1$, i.e., $\by = (16,18,0)^\top$. 
Let $\cq = \{(5,-1,q)^\top: q \leq 0\}$. 
Note that for every $\bu \in \cq$ and $\gamma \geq 0$ we have $\sigma(X_\gamma \bu)=\by$.
By Theorem~\ref{thm:not l2}, for different values of $\gamma \geq 0$, gradient flow on a single neuron with the input $(X_\gamma,\by)$ converges to different points $\bw^\gamma$ in a set of global minima. 
We now show empirically that a similar behavior appears also in single-hidden-neuron networks: 
For different values of $\gamma \geq 0$, gradient descent with the input $(X_\gamma,\by)$, starting from $(\zero,\epsilon)$ for a fixed $\epsilon>0$, converges to different points $\btheta^\gamma=(\bw^\gamma,v^\gamma)$ with $v^\gamma \bw^\gamma \in \cq$. We denote $\bu^\gamma = v^\gamma \bw^\gamma$.

In Figure~\ref{fig:sim2}, we show the trajectories $\bu(t) = v(t) \cdot \bw(t)$ of gradient descent starting from $\btheta(0)=(\zero,0.001)$, for $\gamma \in \{0,1,2,5,20\}$. The grey line denotes the set $\cq$. 
We note that while the trajectories in Figure~\ref{fig:sim2} have similar shapes to the trajectories shown in Figure~\ref{fig:sim} for the case of a single-neuron network, they are not identical. Indeed, the dynamics of the problems are different. This experiment demonstrates that for $\epsilon=0.001$ and different values of $\gamma \geq 0$, gradient descent converges to different points $\bu^\gamma \in \cq$.

Since we are interested in gradient flow starting from $(\zero,\epsilon)$ for $\epsilon \rightarrow 0$, we need to consider $\bu^\gamma$ where $\epsilon \rightarrow 0$. 
Consider $\gamma \in \{0,5\}$, and let $\bu^0=(u_1^0,u_2^0,u_3^0)$ and $\bu^5=(u_1^5,u_2^5,u_3^5)$.
We now show empirical evidence that strongly supports the following assumption, which will be the key used to prove that the implicit regularization function is constant as discussed earlier:
\begin{assumption}
	\label{ass:1}
	For every sufficiently small $\epsilon>0$, gradient flow starting from $(\zero,\epsilon)$ on the inputs $(X_0,\by)$ and $(X_5,\by)$, converges to zero loss. Moreover, $\lim_{\epsilon \rightarrow 0^+}u_3^5$ exists and is negative. 
\end{assumption}
\stam{
\begin{assumption}
\label{ass:1}
	For the inputs $(X_0,\by)$ and $(X_5,\by)$, the limit points of gradient flow are $\btheta^*=(\bw^*,v^*)$ and $\tilde{\btheta}^*=(\tilde{\bw}^*,\tilde{v}^*)$ (respectively), such that $v^* \bw^* = (5,-1,0)^\top$ and $\tilde{v}^* \tilde{\bw}^* = (5,-1,s)^\top$ for some $s<0$.
\end{assumption}
}

We turn to present this empirical evidence. 
Since we are interested in simulating gradient flow, we used gradient descent with a small learning rate of $10^{-5}$.
We ran gradient descent starting from $(\zero,\epsilon)$, for $\gamma \in \{0,5\}$ and 
$\epsilon \in \{10^{-i}: i=0,\ldots,5\}$. 
In all cases we reached loss smaller than $10^{-15}$, which suggests that the first part of the assumption holds. 
We now turn to the second part of the assumption. 
In Figure~\ref{fig:sim-epsilon} we show for different values of $\epsilon$, the value of $u_3^5$ in the last iteration of gradient descent starting from $(\zero,\epsilon)$ and reaching loss smaller than $10^{-15}$. 
This experiment suggests that $\lim_{\epsilon \rightarrow 0^+}u_3^5$ exists and is negative.

\stam{
We now turn to present this empirical evidence. 
Since we are interested in simulating gradient flow, we used gradient descent with a small learning rate of $10^{-5}$.
Let $\bu^0=(u_1^0,u_2^0,u_3^0)$ and $\bu^5=(u_1^5,u_2^5,u_3^5)$.
We ran gradient descent starting from $(\zero,\epsilon)$, for $\gamma \in \{0,5\}$ and 
$\epsilon \in \{10^{-i}: i=0,\ldots,5\}$. 
In all cases we reached loss smaller than $10^{-15}$. 
Converging to loss $0$ implies that the first two components of $\bu$ converge to $(5,-1)$, and hence it suggests that $\lim_{\epsilon \rightarrow 0^+}(u_1^0,u_2^0)=\lim_{\epsilon \rightarrow 0^+}(u_1^5,u_2^5)=(5,-1)$.
Clearly, for every $\epsilon > 0$ we have $u_3^0 = 0$, since if $\gamma=0$ then the third component of $\bu(t)$ is $0$ for every $t \geq 0$. 
Therefore, we have $\lim_{\epsilon \rightarrow 0^+}u_3^0 = 0$.
In Figure~\ref{fig:sim-epsilon} we show for 
different values of $\epsilon$,
the value of $u_3^5$ in the last iteration of gradient descent starting from $(\zero,\epsilon)$ and reaching loss smaller than $10^{-15}$. 
This experiment suggests that $\lim_{\epsilon \rightarrow 0^+}u_3^5$ exists and is negative. Overall, 
we have 
$\lim_{\epsilon \rightarrow 0^+}\bu^5 = (5,-1,s)^\top$ for some $s < 0$, and $\lim_{\epsilon \rightarrow 0^+}\bu^0 = (5,-1,0)^\top$. 
}

\begin{figure}[t]
	\begin{subfigure}{0.485\textwidth}
		\includegraphics[width=\textwidth]{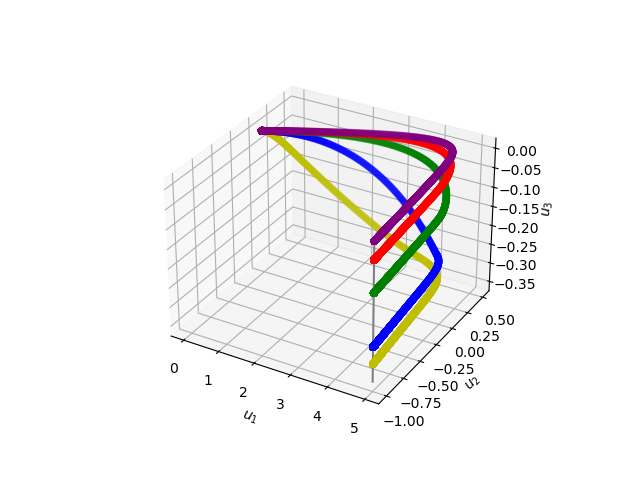} 
		\caption{The trajectories $\bu(t)$ for the input $(X_\gamma,\by)$ starting from $\btheta(0)=(\zero,0.001)$, for $\gamma=0$ (purple), $\gamma=1$ (red), $\gamma=2$ (green), $\gamma=5$ (blue), and $\gamma=20$ (yellow).}
		\label{fig:sim2}
	\end{subfigure}
	\quad
	\begin{subfigure}{0.485\textwidth}
		\includegraphics[width=\textwidth]{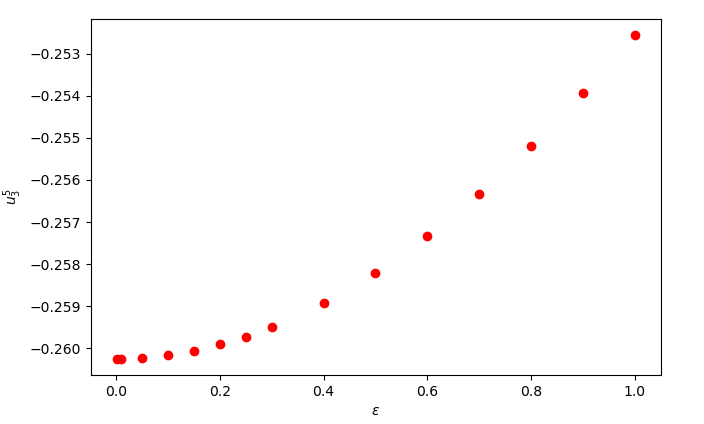}
		\caption{The value of $u_3^5$ in the last iteration of gradient descent starting from $(\zero,\epsilon)$ and reaching loss smaller than $10^{-15}$.}
		\label{fig:sim-epsilon}
	\end{subfigure}
	\caption{Experiments on single-hidden-neuron networks.}
\end{figure}

Finally, we show that Assumption~\ref{ass:1} easily implies the following corollary:
\begin{corollary}
\label{cor:from ass1}
Under Assumption~\ref{ass:1}, for the inputs $(X_0,\by)$ and $(X_5,\by)$, the limit points of gradient flow are $\btheta^*=(\bw^*,v^*)$ and $\tilde{\btheta}^*=(\tilde{\bw}^*,\tilde{v}^*)$ (respectively), such that $v^* \bw^* = (5,-1,0)^\top$ and $\tilde{v}^* \tilde{\bw}^* = (5,-1,s)^\top$ for some $s<0$.
\end{corollary}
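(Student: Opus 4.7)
The plan is to reduce everything to the single vector $\bu^\epsilon := v^\epsilon(\infty)\,\bw^\epsilon(\infty)$, since the zero-loss condition in the two-parameter model becomes an equation purely in $\bu$ once we exploit positive homogeneity of ReLU. Because Corollary~\ref{cor:v(t)} guarantees $v^\epsilon(t)>0$ along the entire trajectory, positive homogeneity gives the identity $v^\epsilon(\infty)\,\sigma(X_\gamma \bw^\epsilon(\infty)) = \sigma\!\left(X_\gamma \bu^\epsilon\right)$. The first clause of Assumption~\ref{ass:1} tells us that $\cl_{X_\gamma,\by}(\btheta^\epsilon(\infty))=0$ for both $\gamma \in \{0,5\}$ and all sufficiently small $\epsilon>0$, so $\sigma(X_\gamma \bu^\epsilon) = \by = (16,18,0)^\top$ in both cases.

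Next I would solve this equation coordinate by coordinate. The first two rows of $X_\gamma$ are independent of $\gamma$ and the corresponding entries of $\by$ are strictly positive, so the equation $\sigma(X_\gamma \bu^\epsilon)=\by$ forces the linear system $3 u_1 - u_2 = 16$, $4u_1 + 2 u_2 = 18$, whose unique solution is $u_1 = 5$, $u_2 = -1$. Hence $\bu^\epsilon = (5,-1,u_3^\epsilon)^\top$ for both values of $\gamma$. For $\gamma = 0$ the third column of $X_0$ vanishes, so $\partial \cl_{X_0,\by}/\partial w_3 \equiv 0$ along the gradient flow trajectory; combined with $\bw^\epsilon(0)=\zero$, this pins $w_3^\epsilon(t)\equiv 0$ and therefore $u_3^\epsilon = v^\epsilon(\infty)\,w_3^\epsilon(\infty) = 0$ for every $\epsilon > 0$. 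For $\gamma = 5$, the second clause of Assumption~\ref{ass:1} directly supplies $\lim_{\epsilon\to 0^+} u_3^\epsilon = s$ for some $s<0$. Taking $\epsilon \to 0^+$ then yields $\lim_{\epsilon\to 0^+} \bu^\epsilon = (5,-1,0)^\top$ for the input $(X_0,\by)$ and $\lim_{\epsilon\to 0^+}\bu^\epsilon = (5,-1,s)^\top$ for $(X_5,\by)$.

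It remains to lift these statements about $\bu^\epsilon$ back to the existence of the limit points $\btheta^*$, $\tilde\btheta^*$, and to the identity $v^*\bw^* = \lim_{\epsilon\to 0^+}\bu^\epsilon$ (and analogously with tildes). By Corollary~\ref{cor:v(t)} and positivity of $v^\epsilon$, we have $v^\epsilon(\infty) = \sqrt{\|\bw^\epsilon(\infty)\|^2 + \epsilon^2}$, and together with $\|\bu^\epsilon\| = v^\epsilon(\infty)\,\|\bw^\epsilon(\infty)\|$ this gives the quadratic relation $(v^\epsilon(\infty))^4 - \epsilon^2\,(v^\epsilon(\infty))^2 = \|\bu^\epsilon\|^2$. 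Since the limits of $\bu^\epsilon$ established above both have strictly positive norm, letting $\epsilon\to 0^+$ produces $v^* = \|\lim_{\epsilon\to 0^+}\bu^\epsilon\|^{1/2} > 0$ (and similarly $\tilde v^*$), and then $\bw^\epsilon(\infty) = \bu^\epsilon/v^\epsilon(\infty)$ converges to $\bw^* = \bu^*/v^*$, so $\btheta^*$ and $\tilde\btheta^*$ exist and $v^*\bw^* = \lim \bu^\epsilon$, as required. No step here is hard; the only place where something non-obvious happens is the initial homogeneity reduction in paragraph one, and it is immediate once one notes that $v^\epsilon > 0$ along the whole trajectory.
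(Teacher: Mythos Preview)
Your proof is correct and follows essentially the same route as the paper's: use zero loss together with positive homogeneity to pin $(u_1,u_2)=(5,-1)$, observe that $u_3^0\equiv 0$ because the third column of $X_0$ vanishes, and invoke the second clause of Assumption~\ref{ass:1} for $u_3^5$. Your third paragraph, which explicitly recovers the existence of $\btheta^*$ and $\tilde\btheta^*$ from the convergence of $\bu^\epsilon$ via the balancedness relation $(v^\epsilon(\infty))^2=\|\bw^\epsilon(\infty)\|^2+\epsilon^2$, is a careful elaboration of a step the paper leaves implicit.
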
 
\begin{proof}
First, since by Assumption~\ref{ass:1} for every sufficiently small $\epsilon>0$ the loss converges to zero, then for every sufficiently small $\epsilon>0$ we have $(u_1^0,u_2^0)=(u_1^5,u_2^5)=(5,-1)$, and hence we also have $\lim_{\epsilon \rightarrow 0^+}(u_1^0,u_2^0)=\lim_{\epsilon \rightarrow 0^+}(u_1^5,u_2^5)=(5,-1)$.
Clearly, for every $\epsilon > 0$ we have $u_3^0 = 0$, since if $\gamma=0$ then the third component of $\bu(t)$ is $0$ for every $t \geq 0$. Therefore, $\lim_{\epsilon \rightarrow 0^+}u_3^0 = 0$.
Finally, by Assumption~\ref{ass:1}, $\lim_{\epsilon \rightarrow 0^+}u_3^5$ exists and is negative. Overall, we have $\lim_{\epsilon \rightarrow 0^+}\bu^5 = (5,-1,s)^\top$ for some $s < 0$, and $\lim_{\epsilon \rightarrow 0^+}\bu^0 = (5,-1,0)^\top$. 
\end{proof}

\subsection{$\calr$ is constant in the set of balanced parameters}

We now show that 
the regularization function $\calr$ is constant in the set $\cs$ of balanced parameters.
Since the limit point $\btheta^*=(\bw^*, v^*) \in \reals^d \times \reals$ of gradient flow is in $\cs$, then it is uniquely determined by the point $v^*\bw^* \in \reals^d$, and hence, intuitively, we can think about $\btheta^*$ as a point in $\reals^d$. 
More formally, let $\Psi: \reals^d \rightarrow \cs$ be such that $\Psi(\bu) = \left(\frac{\bu}{\sqrt{\norm{\bu}}}, \sqrt{\norm{\bu}} \right)$ for $\bu \neq \zero$, and $\Psi(\zero) = (\zero,0)$. 
Note that for every $(\bw,v) \in \cs$ we have $\Psi(v \bw) = (\bw,v)$, and that $\Psi$ is bijective.
Let $\calr': \reals^d \rightarrow \reals$ be such that $\calr'(\bu) = \calr(\Psi(\bu))$. 

We state our main theorem on single-hidden-neuron networks:

\begin{theorem}
\label{thm:R' constant}
	Under Assumption~\ref{ass:1}, the function 
	$\calr':\reals^3 \rightarrow \reals$ is constant in $\reals^3 \setminus \{\zero\}$. Therefore, 
	$\calr:\reals^3 \times \reals \rightarrow \reals$ is constant in $\cs \setminus \{\zero\}$.
\end{theorem}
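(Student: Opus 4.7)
The plan is to mirror the proof of Theorem~\ref{thm:R trivial}: combine Corollary~\ref{cor:from ass1} with natural equivariances of the gradient flow, exploit the explicit ``ray'' of zero-loss solutions, and finish via Lemma~\ref{lemma:R constant}. Concretely, I would first establish two equivariances of the limit point $\bu^* = v^*\bw^* \in \reals^3$ viewed as a function of the input $(X,\by)$. (i) \emph{Rotation:} for every $M \in SO(3)$, the input $(XM^\top,\by)$ produces limit point $M\bu^*$; this follows from the substitution $\bw' = M^\top\bw$, which preserves both the gradient flow equations and the initialization $(\zero,\epsilon)$. (ii) \emph{Scaling of $\by$:} for every $\alpha>0$, the input $(X,\alpha\by)$ produces limit point $\alpha\bu^*$; this follows from the change of variables $(\bw,v) = (\sqrt{\alpha}\bw',\sqrt{\alpha}v')$ together with the time reparametrization $t \mapsto \alpha t$, which reduces gradient flow on $(X,\alpha\by)$ from $(\zero,\epsilon)$ to gradient flow on $(X,\by)$ from $(\zero,\epsilon/\sqrt{\alpha})$. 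Since $\epsilon/\sqrt{\alpha} \to 0^+$ whenever $\epsilon \to 0^+$, the $\epsilon$-limit commutes with both operations, and Assumption~\ref{ass:1} (quantified over all sufficiently small $\epsilon$) transfers to all the scaled and rotated inputs.

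Combining the two equivariances with Corollary~\ref{cor:from ass1}, for every $\alpha>0$ and $M \in SO(3)$ the inputs $(X_0 M^\top,\alpha\by)$ and $(X_5 M^\top,\alpha\by)$ yield limit points $\Psi(M\alpha(5,-1,0)^\top)$ and $\Psi(M\alpha(5,-1,s)^\top)$, for the fixed $s<0$ from the corollary. A direct computation shows that $\cq_\alpha := \{(5\alpha,-\alpha,q)^\top : q \leq 0\}$ satisfies $\sigma(X_\gamma\bu) = \alpha\by$ for every $\bu \in \cq_\alpha$ and every $\gamma \geq 0$; hence $M\cq_\alpha$ lies entirely in the zero-loss $\bu$-set (under $\Psi$) for \emph{both} inputs above. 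Since each gradient flow limit minimizes $\calr'$ over its whole zero-loss set, and both limits lie in $M\cq_\alpha$, we get
\begin{equation*}
\calr'(M\alpha(5,-1,0)^\top) \;=\; \calr'(M\alpha(5,-1,s)^\top) \;=\; \min_{\bu \in M\cq_\alpha}\calr'(\bu).
\end{equation*}

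Writing $\bw = M\alpha(5,-1,0)^\top$ and $\bv = \alpha s \cdot M(0,0,1)^\top$, one checks that $\norm{\bw} = \alpha\sqrt{26}$, $\norm{\bv} = \alpha|s|$, $\inner{\bw,\bv} = 0$, and $M\cq_\alpha = \{\bw + \beta\bv : \beta \geq 0\}$ (using $s<0$ so that $q = \alpha s \beta$ sweeps $(-\infty,0]$ as $\beta$ sweeps $[0,\infty)$). As $\alpha>0$ and $M \in SO(3)$ vary, $\bw$ ranges over all of $\reals^3\setminus\{\zero\}$ and $\bv$ over every vector orthogonal to $\bw$ with $\norm{\bv} = (|s|/\sqrt{26})\norm{\bw}$, so the displayed identity becomes exactly $\calr'(\bw) = \calr'(\bw+\bv) = \min_{\beta\geq 0}\calr'(\bw+\beta\bv)$ --- the hypothesis of Lemma~\ref{lemma:R constant} with $d=3$, $q=\sqrt{26}$, $c=|s|$. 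The lemma then yields $\calr'$ constant on $\reals^3\setminus\{\zero\}$, and since $\Psi$ is a bijection between $\reals^3\setminus\{\zero\}$ and $\cs\setminus\{\zero\}$, we conclude $\calr$ is constant on $\cs\setminus\{\zero\}$. I expect the main obstacle to be the careful bookkeeping in step (ii): the rescaling of $\by$ must be paired with matching rescalings of the parameters, of time, and of the initialization scale, and one must verify that Assumption~\ref{ass:1} (which only directly addresses $\alpha=1$) genuinely transfers to arbitrary $\alpha>0$. Once this is settled, the rotation equivariance and the reduction to Lemma~\ref{lemma:R constant} are essentially routine.
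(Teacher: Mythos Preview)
Your proposal is correct and follows essentially the same route as the paper: the paper packages your equivariances (i) and (ii) as Lemma~\ref{lemma:rotation and scaling} (with the same $\sqrt{\alpha}$-rescaling of $(\bw,v)$ and $t\mapsto\alpha t$ time change), then combines them with Corollary~\ref{cor:from ass1} into Corollary~\ref{cor:R' properties}, which is exactly your displayed identity, and finishes by invoking Lemma~\ref{lemma:R constant} with $q=\norm{\bu^*}$ and $c=\norm{\bu'}=|s|$. Your anticipated ``main obstacle'' is also the only nontrivial verification in the paper's argument, and your handling of it (that $\epsilon/\sqrt{\alpha}\to 0^+$ as $\epsilon\to 0^+$, so the limit defining $\btheta^*$ transfers) matches the paper's.
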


Intuitively, the theorem follows from the following argument. By Corollary~\ref{cor:from ass1}, for the input $(X_0,\by)$, gradient flow converges to $\btheta^*$ such that $\bu^*:=\Psi^{-1}(\btheta^*)=(5,-1,0)^\top$, even though every $\bu \in \cq =  \{(5,-1,q)^\top: q \leq 0\}$ corresponds to a zero-loss solution. Hence, gradient flow prefers $\bu^*$ over all other points in $\cq$. Similarly, for the input $(X_5,\by)$ gradient flow converges to $\tilde{\btheta}^*$ such that $\tilde{\bu}^*:=\Psi^{-1}(\tilde{\btheta}^*)=(5,-1,s)^\top$ for some $s<0$, even though every $\bu \in \cq$ corresponds to a zero-loss solution. Hence, it prefers $\tilde{\bu}^*$ over all other points in $\cq$. It implies that $\calr'(\bu^*) = \calr'(\tilde{\bu}^*) = \min_{\bu \in \cq} \calr'(\bu)$. Then, by rotating and scaling the inputs to gradient flow, and by using Lemma~\ref{lemma:R constant}, we show that $\calr'$ is constant in $\reals^3 \setminus \{\zero\}$. 

To prove the theorem more formally, we start with some additional notations. For an input $(X,\by) \in \reals^{n \times d} \times \reals^n$ to gradient flow, we denote 
$\cu_{X,\by} = \{\bu \in \reals^d: \cl_{X,\by}(\Psi(\bu))=0\}$, namely, the vectors in $\reals^d$ that correspond to global minima of $ \cl_{X,\by}$.
Let $\bu \in \reals^d$ and let $(\bw,v)=\Psi(\bu)$. Note that $\bu \in \cu_{X,\by}$ iff $\by = v \cdot \sigma(X\bw)$ iff $\by = \sigma(v X \bw) = \sigma(X \bu)$.
Therefore, we have $\cu_{X,\by} = \{\bu \in \reals^d: \sigma(X \bu)=\by\}$.
Let $\bu^* = v^* \bw^* = \Psi^{-1}(\btheta^*)$, namely, the vector that corresponds to the limit point. 
Since $\btheta^* \in \argmin_{\{\btheta \in \cs: \cl_{X,\by}(\btheta)=0\}} \calr(\btheta)$, then $\bu^* \in \argmin_{\bu \in  \cu_{X,\by}} \calr'(\bu)$. Indeed, we have 
\[
\calr'(\bu^*) 
= \calr(\Psi(\bu^*)) 
= \calr(\btheta^*) 
= \min_{\{\btheta \in \cs: \cl_{X,\by}(\btheta)=0\}}\calr(\btheta) 
= \min_{\{\bu \in \reals^d: \cl_{X,\by}(\Psi(\bu))=0\}}\calr(\Psi(\bu))
= \min_{\bu \in \cu_{X,\by}}\calr'(\bu)~,
\]
and $\bu^* \in \cu_{X,\by}$ since  $\cl_{X,\by}(\Psi(\bu^*))=\cl_{X,\by}(\btheta^*)=0$.

Note that by Corollary~\ref{cor:from ass1},
for the inputs $(X_0,\by)$ and $(X_5,\by)$, the limit points of gradient flow are $\btheta^*=(\bw^*,v^*)$ and $\tilde{\btheta}^*=(\tilde{\bw}^*,\tilde{v}^*)$ (respectively), such that $\bu^* = \Psi^{-1}(\btheta^*) = v^* \bw^* = (5,-1,0)^\top$ and $\tilde{\bu}^* = \Psi^{-1}(\tilde{\btheta}^*) = \tilde{v}^* \tilde{\bw}^* = (5,-1,s)^\top$, and therefore we have the following.
Let $\bu' = \tilde{\bu}^* - \bu^* = (0,0,s)^\top$. We have 
$\bu^*,\tilde{\bu}^*,\bu' \neq \zero$, and $\inner{\bu^*,\bu'}=0$. 
Moreover, since for every $\bu \in \cq =  \{(5,-1,q)^\top: q \leq 0\}$ we have $\sigma(X_0 \bu)=\sigma(X_5 \bu) = \by$, then $\cq \subseteq \cu_{X_0,\by} \cap \cu_{X_5,\by}$, 
and therefore for every $\beta \geq 0$ we have $\bu^* + \beta \bu' \in \cu_{X_0,\by} \cap \cu_{X_5,\by}$.
Since $\bu^* \in \argmin_{\bu \in  \cu_{X_0,\by}} \calr'(\bu)$ and $\tilde{\bu}^* \in \argmin_{\bu \in  \cu_{X_5,\by}} \calr'(\bu)$, then we have $\calr'(\bu^*) = \calr'(\tilde{\bu}^*) = \min_{\beta \geq 0} \calr'(\bu^* + \beta \bu')$.

We now utilize the
following lemma,
which 
considers the effect on the limit point of rotating and scaling the input to gradient flow. The lemma follows by analyzing the trajectories of gradient flow with the modified inputs. See Appendix~\ref{app:proof of lemma rotation and scaling} for a proof.

\begin{lemma}
\label{lemma:rotation and scaling}
	Let $(X,\by) \in \reals^{n \times d} \times \reals^n$ be an input to gradient flow, let $\btheta^* = (\bw^*,v^*)$ be its limit point, and let $\bu^* = v^* \bw^* = \Psi^{-1}(\btheta^*)$.
	\begin{enumerate}
		\item Let $M \in SO(d)$ be a rotation matrix. The limit point of gradient flow with input $(XM^\top,\by)$ is $\btheta^*_M = (M \bw^*,v^*)$. Thus, $\Psi^{-1}(\btheta^*_M) = M \bu^*$. 
		
		\item Let $\alpha > 0$. The limit point of gradient flow with input $(X, \alpha \by)$ is $\btheta^*_\alpha = \sqrt{\alpha} \cdot (\bw^*,v^*)$. Thus, $\Psi^{-1}(\btheta^*_\alpha) = \alpha \bu^*$.
	\end{enumerate}
\end{lemma}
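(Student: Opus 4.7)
The plan is to show that for each transformation of the input (rotation of $X$ and scaling of $\by$), the whole gradient-flow trajectory starting from $(\zero,\epsilon)$ transforms in a correspondingly simple way. Once the trajectories are related, the statement about limit points follows by taking $t \to \infty$ and then $\epsilon \to 0^+$, and the statement about $\Psi^{-1}$ follows by a direct computation.

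For part 1, I would propose the ansatz $\tilde\btheta(t) = (M\bw(t), v(t))$, where $(\bw(t),v(t))$ is the trajectory of gradient flow on $(X,\by)$ starting from $(\zero,\epsilon)$. Using $(M\bx_i)^\top (M\bw) = \bx_i^\top \bw$, the loss satisfies $\cl_{XM^\top,\by}(M\bw,v) = \cl_{X,\by}(\bw,v)$. A direct chain-rule computation then gives $\nabla_{\tilde\bw}\cl_{XM^\top,\by}(M\bw,v) = M\,\nabla_{\bw}\cl_{X,\by}(\bw,v)$ and $\partial_v \cl_{XM^\top,\by}(M\bw,v) = \partial_v \cl_{X,\by}(\bw,v)$. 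Hence $\dot{\tilde\btheta}(t) = (M\dot\bw(t),\dot v(t)) = -\nabla\cl_{XM^\top,\by}(\tilde\btheta(t))$, and since $\tilde\btheta(0) = (M\zero,\epsilon) = (\zero,\epsilon)$, uniqueness of the flow identifies $\tilde\btheta$ as the trajectory for $(XM^\top,\by)$. Taking $t\to\infty$ and then $\epsilon \to 0^+$ yields $\btheta^*_M = (M\bw^*,v^*)$, and $\Psi^{-1}(\btheta^*_M) = v^*\cdot M\bw^* = M\bu^*$.

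For part 2, the key is the positive homogeneity of ReLU: $\sigma(cz) = c\sigma(z)$ and $\sigma'(cz) = \sigma'(z)$ for every $c>0$. This suggests a joint space-time rescaling $\tilde\btheta(t) := \sqrt{\alpha}\,\btheta(\alpha t)$. A direct computation shows $\cl_{X,\alpha\by}(\sqrt{\alpha}\btheta) = \alpha^2 \cl_{X,\by}(\btheta)$, and more importantly, differentiating carefully and using homogeneity in both $\sigma$ and $\sigma'$ yields $\nabla_{\tilde\btheta}\cl_{X,\alpha\by}(\sqrt{\alpha}\btheta) = \alpha^{3/2}\,\nabla_{\btheta}\cl_{X,\by}(\btheta)$. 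Then $\dot{\tilde\btheta}(t) = \sqrt{\alpha}\cdot\alpha\cdot\dot\btheta(\alpha t) = -\alpha^{3/2}\nabla\cl_{X,\by}(\btheta(\alpha t)) = -\nabla\cl_{X,\alpha\by}(\tilde\btheta(t))$, so $\tilde\btheta$ solves gradient flow for $(X,\alpha\by)$ starting from $\tilde\btheta(0) = (\zero,\sqrt{\alpha}\epsilon)$. Thus the trajectory for $(X,\alpha\by)$ initialized at $(\zero,\sqrt{\alpha}\epsilon)$ equals $\sqrt{\alpha}\btheta^\epsilon(\alpha t)$, so its $t\to\infty$ limit equals $\sqrt{\alpha}\btheta^\epsilon(\infty)$. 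Letting $\epsilon\to 0^+$ (equivalently $\sqrt{\alpha}\epsilon\to 0^+$) gives $\btheta^*_\alpha = \sqrt{\alpha}\,\btheta^*$, and then $\Psi^{-1}(\btheta^*_\alpha) = (\sqrt{\alpha}v^*)(\sqrt{\alpha}\bw^*) = \alpha\bu^*$.

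The hard part is the bookkeeping in part 2: I need the powers of $\alpha$ in the gradient to come out to exactly $\alpha^{3/2}$ so that they cancel against the factor coming from the time rescaling, which requires using both $\sigma(\sqrt{\alpha}z) = \sqrt{\alpha}\sigma(z)$ and $\sigma'(\sqrt{\alpha}z) = \sigma'(z)$ in the right places (the latter uses that $\sqrt{\alpha}>0$ preserves signs, consistent with the paper's convention $\sigma'(0)>0$). I also need to be careful that the limit $\epsilon \to 0^+$ defining $\btheta^*$ is compatible on both sides, which it is because $\sqrt{\alpha}\epsilon \to 0^+$ iff $\epsilon \to 0^+$ for fixed $\alpha>0$.
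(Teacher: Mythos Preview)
Your proposal is correct and follows essentially the same route as the paper's proof: for part~1 you use the ansatz $(M\bw(t),v(t))$ and the orthogonality $(M\bx_i)^\top(M\bw)=\bx_i^\top\bw$, and for part~2 you use the same joint space--time rescaling $\tilde\btheta(t)=\sqrt{\alpha}\,\btheta(\alpha t)$ together with ReLU homogeneity. The only cosmetic difference is that the paper starts the \emph{original} flow from $(\zero,\epsilon/\sqrt{\alpha})$ so that the rescaled flow starts exactly at $(\zero,\epsilon)$, whereas you start from $(\zero,\epsilon)$ and land at $(\zero,\sqrt{\alpha}\epsilon)$, then note that $\sqrt{\alpha}\epsilon\to 0^+$ iff $\epsilon\to 0^+$; both are equivalent.
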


By rotating and scaling the inputs from Corollary~\ref{cor:from ass1}, and using Lemma~\ref{lemma:rotation and scaling}, we obtain the following corollary (see Appendix~\ref{app:proof of cor R' properties} for a formal proof).

\begin{corollary}
\label{cor:R' properties}
	Under Assumption~\ref{ass:1}, there is a vector $\bu^* \in \reals^3 \setminus \{\zero\}$ and a constant $c>0$, such that for every $\alpha > 0$ and $\bu \in \reals^3$ with $\norm{\bu} = \alpha \cdot \norm{\bu^*}$, and for every $\bu_\bot \in \reals^3$ such that $\inner{\bu_\bot,\bu}=0$ and $\norm{\bu_\bot} = \alpha c$, we have $\calr'(\bu) = \calr'(\bu + \bu_\bot) = \min_{\beta \geq 0}\calr'(\bu + \beta \bu_\bot)$.  
\end{corollary}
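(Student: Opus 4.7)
The plan is to take $\bu^*=(5,-1,0)^\top$ (the limit-point vector from Corollary~\ref{cor:from ass1} for input $(X_0,\by)$) and $c=|s|$, where $s<0$ is the constant from Corollary~\ref{cor:from ass1}, and then apply Lemma~\ref{lemma:rotation and scaling} to transport a ``template'' identity across all rotations and rescalings of the two base inputs. The text immediately preceding the corollary already furnishes this template: setting $\bu'=\tilde{\bu}^*-\bu^*=(0,0,s)^\top$, one has $\inner{\bu^*,\bu'}=0$, $\norm{\bu'}=c$, and $\calr'(\bu^*)=\calr'(\tilde{\bu}^*)=\min_{\beta\geq 0}\calr'(\bu^*+\beta\bu')$.

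Fix $\alpha>0$, a vector $\bu$ with $\norm{\bu}=\alpha\norm{\bu^*}$, and a vector $\bu_\bot$ with $\inner{\bu,\bu_\bot}=0$ and $\norm{\bu_\bot}=\alpha c$. The first step is to construct a rotation $M\in SO(3)$ with $\alpha M\bu^*=\bu$ and $\alpha M\bu'=\bu_\bot$. This is possible because $(\bu^*,\bu')$ and $(\bu/\alpha,\bu_\bot/\alpha)$ are both orthogonal pairs in $\reals^3$ with matching norms $\norm{\bu^*}$ and $c$; completing each pair to an orthonormal frame by its cross product produces a rotation (automatically of determinant $+1$) sending one pair to the other.

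The second step is to invoke Lemma~\ref{lemma:rotation and scaling} (both parts) for the two modified inputs $(X_0 M^\top,\alpha\by)$ and $(X_5 M^\top,\alpha\by)$. By the lemma, their limit-point vectors under $\Psi^{-1}$ are $\alpha M\bu^*=\bu$ and $\alpha M\tilde{\bu}^*=\alpha M(\bu^*+\bu')=\bu+\bu_\bot$, respectively. Moreover, for every $\beta\geq 0$ the point $\bu^*+\beta\bu'=(5,-1,\beta s)^\top$ lies in $\cq$ (since $s<0$ makes $\beta s\leq 0$), so using $M^\top M=I$ and the $1$-homogeneity of $\sigma$,
\[
\sigma\!\left(X_\gamma M^\top\cdot\alpha M(\bu^*+\beta\bu')\right)=\alpha\,\sigma(X_\gamma(\bu^*+\beta\bu'))=\alpha\by
\]
for $\gamma\in\{0,5\}$, showing $\bu+\beta\bu_\bot\in \cu_{X_0 M^\top,\alpha\by}\cap \cu_{X_5 M^\top,\alpha\by}$. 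Since (by the discussion preceding the corollary) $\bu$ minimizes $\calr'$ over $\cu_{X_0 M^\top,\alpha\by}$ while $\bu+\bu_\bot$ minimizes $\calr'$ over $\cu_{X_5 M^\top,\alpha\by}$, we conclude $\calr'(\bu)\leq \calr'(\bu+\beta\bu_\bot)$ and $\calr'(\bu+\bu_\bot)\leq \calr'(\bu+\beta\bu_\bot)$ for every $\beta\geq 0$. Specializing to $\beta=1$ in the first inequality and $\beta=0$ in the second gives $\calr'(\bu)=\calr'(\bu+\bu_\bot)=\min_{\beta\geq 0}\calr'(\bu+\beta\bu_\bot)$, which is precisely the claim.

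The only mildly delicate point is ensuring the constructed $M$ has determinant $+1$ rather than $-1$; this is handled by choosing the completing third vector of each orthonormal frame via the cross product so that the two frames have the same orientation. Everything else is bookkeeping with Lemma~\ref{lemma:rotation and scaling} and the positive homogeneity of ReLU, so no substantial obstacle is anticipated.
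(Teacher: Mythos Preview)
Your proposal is correct and follows essentially the same approach as the paper's own proof: choose $\bu^*,\bu'$ from Corollary~\ref{cor:from ass1}, pick a rotation $M\in SO(3)$ sending $(\alpha\bu^*,\alpha\bu')$ to $(\bu,\bu_\bot)$, apply Lemma~\ref{lemma:rotation and scaling} to the inputs $(X_\gamma M^\top,\alpha\by)$ for $\gamma\in\{0,5\}$, verify that the ray $\{\bu+\beta\bu_\bot:\beta\geq 0\}$ lies in both zero-loss sets via positive homogeneity of $\sigma$, and conclude. Your treatment is in fact slightly more explicit than the paper's in two places: you justify the existence of $M\in SO(3)$ (the paper simply asserts it), and you spell out the two inequalities and their specializations to $\beta\in\{0,1\}$ that yield the final equality.
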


Finally, Theorem~\ref{thm:R' constant} follows immediately from Corollary~\ref{cor:R' properties} and Lemma~\ref{lemma:R constant}.

\begin{remark}[$\calr$ is constant in $\cs \setminus \{\zero\}$ also for $d>3$] 
	Theorem~\ref{thm:R' constant} implies that $\calr':\reals^d \rightarrow \reals$ is trivial for $d=3$. By padding the inputs from Assumption~\ref{ass:1} with zeros, we can obtain a $d$-dimensional version of Assumption~\ref{ass:1} for $d>3$. Then, using the same arguments as above, it implies that Theorem~\ref{thm:R' constant} holds for every $d \geq 3$.
\end{remark}

\stam{
\begin{proposition}
	Let $X, \tilde{X}, \by$ be the matrices and vector from Assumption~\ref{ass:1}, and let $\bu^*, \tilde{\bu}^*,\bu'$ be the corresponding vectors.	
	Under Assumption~\ref{ass:1}, we have:
	\begin{enumerate}
		\item Let $M \in SO(3)$ be a rotation matrix. By replacing the matrices $X$ and $\tilde{X}$ with $XM^\top$ and $\tilde{X} M^\top$ (respectively), the corresponding vectors $\bu^*$ and $\tilde{\bu}^*$ are changed to $M \bu^*$ and $M \tilde{\bu}^*$. Moreover,  for both inputs $XM^\top, \tilde{X} M^\top$ and every $\alpha \geq 0$, we have $\cl(\Psi(M\bu^* + \alpha M\bu')) = 0$. 
			
		\item Let $\beta > 0$ be a constant. By replacing $\by$ with $\beta \by$, 
	\end{enumerate}
\end{proposition}
}

\subsection*{Acknowledgements}
This research is supported in part by European Research Council (ERC) grant 754705.

\bibliographystyle{abbrvnat}
\bibliography{bib}

\begin{thebibliography}{34}
\providecommand{\natexlab}[1]{#1}
\providecommand{\url}[1]{\texttt{#1}}
\expandafter\ifx\csname urlstyle\endcsname\relax
  \providecommand{\doi}[1]{doi: #1}\else
  \providecommand{\doi}{doi: \begingroup \urlstyle{rm}\Url}\fi

\bibitem[Arora et~al.(2019)Arora, Cohen, Hu, and Luo]{arora2019implicit}
S.~Arora, N.~Cohen, W.~Hu, and Y.~Luo.
\newblock Implicit regularization in deep matrix factorization.
\newblock In \emph{Advances in Neural Information Processing Systems}, pages
  7413--7424, 2019.

\bibitem[Belabbas(2020)]{belabbas2020implicit}
M.~A. Belabbas.
\newblock On implicit regularization: Morse functions and applications to
  matrix factorization.
\newblock \emph{arXiv preprint arXiv:2001.04264}, 2020.

\bibitem[Chizat and Bach(2020)]{chizat2020implicit}
L.~Chizat and F.~Bach.
\newblock Implicit bias of gradient descent for wide two-layer neural networks
  trained with the logistic loss.
\newblock \emph{arXiv preprint arXiv:2002.04486}, 2020.

\bibitem[Dauber et~al.(2020)Dauber, Feder, Koren, and Livni]{dauber2020can}
A.~Dauber, M.~Feder, T.~Koren, and R.~Livni.
\newblock Can implicit bias explain generalization? stochastic convex
  optimization as a case study.
\newblock \emph{arXiv preprint arXiv:2003.06152}, 2020.

\bibitem[Du et~al.(2018)Du, Hu, and Lee]{du2018algorithmic}
S.~S. Du, W.~Hu, and J.~D. Lee.
\newblock Algorithmic regularization in learning deep homogeneous models:
  Layers are automatically balanced.
\newblock In \emph{Advances in Neural Information Processing Systems}, pages
  384--395, 2018.

\bibitem[Eftekhari and Zygalakis(2020)]{eftekhari2020implicit}
A.~Eftekhari and K.~Zygalakis.
\newblock Implicit regularization in matrix sensing: A geometric view leads to
  stronger results.
\newblock \emph{arXiv preprint arXiv:2008.12091}, 2020.

\bibitem[Gidel et~al.(2019)Gidel, Bach, and Lacoste-Julien]{gidel2019implicit}
G.~Gidel, F.~Bach, and S.~Lacoste-Julien.
\newblock Implicit regularization of discrete gradient dynamics in linear
  neural networks.
\newblock In \emph{Advances in Neural Information Processing Systems}, pages
  3202--3211, 2019.

\bibitem[Gunasekar et~al.(2018{\natexlab{a}})Gunasekar, Lee, Soudry, and
  Srebro]{gunasekar2018characterizing}
S.~Gunasekar, J.~Lee, D.~Soudry, and N.~Srebro.
\newblock Characterizing implicit bias in terms of optimization geometry.
\newblock \emph{arXiv preprint arXiv:1802.08246}, 2018{\natexlab{a}}.

\bibitem[Gunasekar et~al.(2018{\natexlab{b}})Gunasekar, Lee, Soudry, and
  Srebro]{gunasekar2018bimplicit}
S.~Gunasekar, J.~D. Lee, D.~Soudry, and N.~Srebro.
\newblock Implicit bias of gradient descent on linear convolutional networks.
\newblock In \emph{Advances in Neural Information Processing Systems}, pages
  9461--9471, 2018{\natexlab{b}}.

\bibitem[Gunasekar et~al.(2018{\natexlab{c}})Gunasekar, Woodworth,
  Bhojanapalli, Neyshabur, and Srebro]{gunasekar2018implicit}
S.~Gunasekar, B.~Woodworth, S.~Bhojanapalli, B.~Neyshabur, and N.~Srebro.
\newblock Implicit regularization in matrix factorization.
\newblock In \emph{2018 Information Theory and Applications Workshop (ITA)},
  pages 1--10. IEEE, 2018{\natexlab{c}}.

\bibitem[Ji and Telgarsky(2018{\natexlab{a}})]{ji2018gradient}
Z.~Ji and M.~Telgarsky.
\newblock Gradient descent aligns the layers of deep linear networks.
\newblock \emph{arXiv preprint arXiv:1810.02032}, 2018{\natexlab{a}}.

\bibitem[Ji and Telgarsky(2018{\natexlab{b}})]{ji2018risk}
Z.~Ji and M.~Telgarsky.
\newblock Risk and parameter convergence of logistic regression.
\newblock \emph{arXiv preprint arXiv:1803.07300}, 2018{\natexlab{b}}.

\bibitem[Ji et~al.(2020)Ji, Dud{\'\i}k, Schapire, and
  Telgarsky]{ji2020gradient}
Z.~Ji, M.~Dud{\'\i}k, R.~E. Schapire, and M.~Telgarsky.
\newblock Gradient descent follows the regularization path for general losses.
\newblock In \emph{Conference on Learning Theory}, pages 2109--2136. PMLR,
  2020.

\bibitem[Jin and Mont{\'u}far(2020)]{jin2020implicit}
H.~Jin and G.~Mont{\'u}far.
\newblock Implicit bias of gradient descent for mean squared error regression
  with wide neural networks.
\newblock 2020.

\bibitem[Li et~al.(2018)Li, Ma, and Zhang]{li2018algorithmic}
Y.~Li, T.~Ma, and H.~Zhang.
\newblock Algorithmic regularization in over-parameterized matrix sensing and
  neural networks with quadratic activations.
\newblock In \emph{Conference On Learning Theory}, pages 2--47. PMLR, 2018.

\bibitem[Li et~al.(2020)Li, Luo, and Lyu]{li2020towards}
Z.~Li, Y.~Luo, and K.~Lyu.
\newblock Towards resolving the implicit bias of gradient descent for matrix
  factorization: Greedy low-rank learning.
\newblock \emph{arXiv preprint arXiv:2012.09839}, 2020.

\bibitem[Lyu and Li(2019)]{lyu2019gradient}
K.~Lyu and J.~Li.
\newblock Gradient descent maximizes the margin of homogeneous neural networks.
\newblock \emph{arXiv preprint arXiv:1906.05890}, 2019.

\bibitem[Ma et~al.(2018)Ma, Wang, Chi, and Chen]{ma2018implicit}
C.~Ma, K.~Wang, Y.~Chi, and Y.~Chen.
\newblock Implicit regularization in nonconvex statistical estimation: Gradient
  descent converges linearly for phase retrieval and matrix completion.
\newblock In \emph{International Conference on Machine Learning}, pages
  3345--3354. PMLR, 2018.

\bibitem[Moroshko et~al.(2020)Moroshko, Woodworth, Gunasekar, Lee, Srebro, and
  Soudry]{moroshko2020implicit}
E.~Moroshko, B.~E. Woodworth, S.~Gunasekar, J.~D. Lee, N.~Srebro, and
  D.~Soudry.
\newblock Implicit bias in deep linear classification: Initialization scale vs
  training accuracy.
\newblock \emph{Advances in Neural Information Processing Systems}, 33, 2020.

\bibitem[Nacson et~al.(2019)Nacson, Lee, Gunasekar, Savarese, Srebro, and
  Soudry]{nacson2019convergence}
M.~S. Nacson, J.~Lee, S.~Gunasekar, P.~H.~P. Savarese, N.~Srebro, and
  D.~Soudry.
\newblock Convergence of gradient descent on separable data.
\newblock In \emph{The 22nd International Conference on Artificial Intelligence
  and Statistics}, pages 3420--3428. PMLR, 2019.

\bibitem[Neyshabur et~al.(2014)Neyshabur, Tomioka, and
  Srebro]{neyshabur2014search}
B.~Neyshabur, R.~Tomioka, and N.~Srebro.
\newblock In search of the real inductive bias: On the role of implicit
  regularization in deep learning.
\newblock \emph{arXiv preprint arXiv:1412.6614}, 2014.

\bibitem[Neyshabur et~al.(2017)Neyshabur, Bhojanapalli, McAllester, and
  Srebro]{neyshabur2017exploring}
B.~Neyshabur, S.~Bhojanapalli, D.~McAllester, and N.~Srebro.
\newblock Exploring generalization in deep learning.
\newblock In \emph{Advances in Neural Information Processing Systems}, pages
  5947--5956, 2017.

\bibitem[Oymak and Soltanolkotabi(2019)]{oymak2019overparameterized}
S.~Oymak and M.~Soltanolkotabi.
\newblock Overparameterized nonlinear learning: Gradient descent takes the
  shortest path?
\newblock In \emph{International Conference on Machine Learning}, pages
  4951--4960, 2019.

\bibitem[Razin and Cohen(2020)]{razin2020implicit}
N.~Razin and N.~Cohen.
\newblock Implicit regularization in deep learning may not be explainable by
  norms.
\newblock \emph{arXiv preprint arXiv:2005.06398}, 2020.

\bibitem[Shalev-Shwartz and Ben-David(2014)]{shalev2014understanding}
S.~Shalev-Shwartz and S.~Ben-David.
\newblock \emph{Understanding machine learning: From theory to algorithms}.
\newblock Cambridge university press, 2014.

\bibitem[Shamir(2020)]{shamir2020gradient}
O.~Shamir.
\newblock Gradient methods never overfit on separable data.
\newblock \emph{arXiv preprint arXiv:2007.00028}, 2020.

\bibitem[Soudry et~al.(2018)Soudry, Hoffer, Nacson, Gunasekar, and
  Srebro]{soudry2018implicit}
D.~Soudry, E.~Hoffer, M.~S. Nacson, S.~Gunasekar, and N.~Srebro.
\newblock The implicit bias of gradient descent on separable data.
\newblock \emph{The Journal of Machine Learning Research}, 19\penalty0
  (1):\penalty0 2822--2878, 2018.

\bibitem[Suggala et~al.(2018)Suggala, Prasad, and
  Ravikumar]{suggala2018connecting}
A.~Suggala, A.~Prasad, and P.~K. Ravikumar.
\newblock Connecting optimization and regularization paths.
\newblock \emph{Advances in Neural Information Processing Systems},
  31:\penalty0 10608--10619, 2018.

\bibitem[Williams et~al.(2019)Williams, Trager, Panozzo, Silva, Zorin, and
  Bruna]{williams2019gradient}
F.~Williams, M.~Trager, D.~Panozzo, C.~Silva, D.~Zorin, and J.~Bruna.
\newblock Gradient dynamics of shallow univariate relu networks.
\newblock In \emph{Advances in Neural Information Processing Systems}, pages
  8378--8387, 2019.

\bibitem[Woodworth et~al.(2020)Woodworth, Gunasekar, Lee, Moroshko, Savarese,
  Golan, Soudry, and Srebro]{woodworth2020kernel}
B.~Woodworth, S.~Gunasekar, J.~D. Lee, E.~Moroshko, P.~Savarese, I.~Golan,
  D.~Soudry, and N.~Srebro.
\newblock Kernel and rich regimes in overparametrized models.
\newblock \emph{arXiv preprint arXiv:2002.09277}, 2020.

\bibitem[Xu et~al.(2018)Xu, Zhou, Ji, and Liang]{xu2018will}
T.~Xu, Y.~Zhou, K.~Ji, and Y.~Liang.
\newblock When will gradient methods converge to max-margin classifier under
  relu models?
\newblock \emph{arXiv preprint arXiv:1806.04339}, 2018.

\bibitem[Yehudai and Shamir(2020)]{yehudai2020learning}
G.~Yehudai and O.~Shamir.
\newblock Learning a single neuron with gradient methods.
\newblock \emph{arXiv preprint arXiv:2001.05205}, 2020.

\bibitem[Yun et~al.(2020)Yun, Krishnan, and Mobahi]{yun2020unifying}
C.~Yun, S.~Krishnan, and H.~Mobahi.
\newblock A unifying view on implicit bias in training linear neural networks.
\newblock \emph{arXiv preprint arXiv:2010.02501}, 2020.

\bibitem[Zhang et~al.(2016)Zhang, Bengio, Hardt, Recht, and
  Vinyals]{zhang2016understanding}
C.~Zhang, S.~Bengio, M.~Hardt, B.~Recht, and O.~Vinyals.
\newblock Understanding deep learning requires rethinking generalization.
\newblock \emph{arXiv preprint arXiv:1611.03530}, 2016.

\end{thebibliography}

\appendix

\section{Proof of Theorem~\ref{thm:not l2}}
\label{app:not function of w}

\begin{lemma}
\label{lemma:solving the diffenrential equation}
Let $A \in \reals^{d \times d}$ be an invertible matrix, and let $\by \in \reals^d$. Consider the dynamics of $\bw(t)$ given by the differential equation
\begin{equation*}
\label{eq:dif w}
\dot{\bw}(t) = - A^\top (A \bw(t) -\by)~.
\end{equation*}
Then, we have
\begin{equation}
\label{eq:sol w}
\bw(t) = \exp(-t A^\top A) \left( \bw(0) - A^{-1} \by \right) + A^{-1} \by~,
\end{equation}
where $\exp(\cdot)$ denotes the matrix exponential.
\end{lemma}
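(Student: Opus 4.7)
The plan is to reduce the affine ODE to a homogeneous linear ODE by a translation, solve the homogeneous version using the standard matrix exponential, and then undo the translation. Concretely, I would define $\bu(t) := \bw(t) - A^{-1}\by$, so that $\dot{\bu}(t) = \dot{\bw}(t)$ and
\[
\dot{\bu}(t) = -A^\top(A\bw(t) - \by) = -A^\top A(\bw(t) - A^{-1}\by) = -(A^\top A)\,\bu(t),
\]
using invertibility of $A$ to rewrite $\by = A A^{-1} \by$. This is a linear autonomous ODE with constant coefficient matrix $-A^\top A$, so its unique solution is $\bu(t) = \exp(-t A^\top A)\,\bu(0)$.

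Translating back gives
\[
\bw(t) = \bu(t) + A^{-1}\by = \exp(-t A^\top A)\bigl(\bw(0) - A^{-1}\by\bigr) + A^{-1}\by,
\]
which is the claimed formula. If a fully self-contained verification is preferred (avoiding appeal to a black-box existence/uniqueness statement), one can instead take the right-hand side of Eq.~\eqref{eq:sol w} as an ansatz and check it directly: differentiating in $t$ and using $\tfrac{d}{dt}\exp(-tA^\top A) = -A^\top A\,\exp(-tA^\top A)$ (these matrices commute because they are the same matrix), one obtains
\[
\dot{\bw}(t) = -A^\top A\,\exp(-t A^\top A)(\bw(0) - A^{-1}\by) = -A^\top A\bigl(\bw(t) - A^{-1}\by\bigr) = -A^\top\bigl(A\bw(t) - \by\bigr),
\]
matching the ODE. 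Plugging $t=0$ into Eq.~\eqref{eq:sol w} yields $\bw(0)$, so the initial condition is satisfied.

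There is no real obstacle here; the only subtlety is to make sure $A^{-1}\by$ is well-defined, which is exactly where the invertibility hypothesis on $A$ is used, and to note that $-tA^\top A$ commutes with itself so the derivative rule for the matrix exponential applies. Uniqueness then follows from the standard Picard–Lindelöf theorem for linear ODEs with bounded (in fact constant) coefficients, so no two distinct trajectories can satisfy the same ODE and the same initial condition.
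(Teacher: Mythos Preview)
Your proposal is correct and matches the paper's approach: the paper simply notes that the lemma follows by differentiating the expression in Eq.~\eqref{eq:sol w}, which is exactly your second (direct verification) argument, and your first argument via the substitution $\bu(t)=\bw(t)-A^{-1}\by$ is an equally valid and slightly more constructive variant of the same idea.
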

\stam{
\begin{proof}
By differentiating the expression for $\bw(t)$ in Eq.~\ref{eq:sol w} it can be verified that Eq.~\ref{eq:dif w} holds. The solution is unique by the Cauchy-Lipschitz theorem \citep{coddington1955theory}.
\end{proof}
}
The proof of Lemma~\ref{lemma:solving the diffenrential equation} follows easily by differentiating the expression in Eq.~\ref{eq:sol w}.

\subsection{Proof for $\gamma=0$}

The trajectory $\bw(t)$ obeys the dynamics
\begin{align*}
\dot{\bw}(t) 
&= - \nabla \cl(\bw(t)) 
= - \sum_{i=1}^3 \left( \sigma(\bx_i^\top \bw(t)) - y_i \right) \sigma'(\bx_i^\top \bw(t)) \bx_i
\\
&= - \sum_{i=1}^2 \left( \sigma(\bx_i^\top \bw(t)) - y_i \right) \sigma'(\bx_i^\top \bw(t)) \bx_i~,
\end{align*}
where the last equality is since $\bx_3 = \zero$.
Hence, since we also have $\bw(0) = \zero$, then $\bw(t)$ is spanned by $\bx_1,\bx_2$. Note that the third components in $\bx_1,\bx_2$ are $0$, and therefore the third component in $\bw(t)$ is $0$ for all $t \geq 0$.
For a vector $\bv \in \{\bw(t),\bx_1,\bx_2,\by\}$, we denote $\tilde{\bv} = (v_1,v_2) \in \reals^2$. We also denote by $\tilde{X} \in \reals^{2 \times 2}$ the matrix whose rows are $\tilde{\bx}_1^\top$ and $\tilde{\bx}_2^\top$.
Note that for $i \in \{1,2\}$ we have $\bx_i^\top \bw(t) = \tilde{\bx}_i^\top \tilde{\bw}(t)$. 
Thus, we have 
\begin{equation}
\label{eq:gamma0 dif}
\dot{\tilde{\bw}}(t)
= - \sum_{i=1}^2 \left( \sigma(\tilde{\bx}_i^\top \tilde{\bw}(t)) - y_i \right) \sigma'(\tilde{\bx}_i^\top \tilde{\bw}(t)) \tilde{\bx}_i~.  
\end{equation}
If $\tilde{\bx}_i^\top \tilde{\bw}(t) \geq 0$  for $i \in \{1,2\}$ then the above equals 
\begin{equation*}
\label{eq:gamma0 dif B}
- \sum_{i=1}^2 (\tilde{\bx}_i^\top \tilde{\bw}(t) - y_i ) \tilde{\bx}_i
= - \tilde{X}^\top (\tilde{X}\tilde{\bw}(t) - \tilde{\by})~.  
\end{equation*}
By Lemma~\ref{lemma:solving the diffenrential equation}, the solution to the above equation is 
\begin{equation}
\label{eq:gamma0 traj}
\tilde{\bw}(t) 
= \exp\left(-t \tilde{X}^\top \tilde{X}\right)( \tilde{\bw}(0) - \tilde{X}^{-1} \tilde{\by} ) + \tilde{X}^{-1} \tilde{\by}~.
\end{equation}

Let $\tilde{\cb} = \{\tilde{\bw} \in \reals^2: \tilde{\bx}_1^\top \tilde{\bw} \geq 0, \tilde{\bx}_2^\top \tilde{\bw} \geq 0\}$. 
We will show that the trajectory from Eq.~\ref{eq:gamma0 traj} satisfies $\tilde{\bw}(t) \in \tilde{\cb}$ for every $t \geq 0$. Therefore, it satisfies Eq.~\ref{eq:gamma0 dif} for every $t \geq 0$. It implies that $\tilde{\bw}(\infty) =   \tilde{X}^{-1} \tilde{\by} = \alpha \cdot (5,-1)^\top$, and therefore $\bw(\infty) = \alpha \cdot (5,-1,0)^\top$ as required.

\begin{lemma}
	The trajectory from Eq.~\ref{eq:gamma0 traj} satisfies $\tilde{\bw}(t) \in \tilde{\cb}$ for every $t \geq 0$.
\end{lemma}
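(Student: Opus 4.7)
The plan is to reduce the claim to a componentwise inequality on a simple linear evolution in $\reals^2$, then verify it via the eigendecomposition of $M := \tilde{X}\tilde{X}^\top$.

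First, introduce the auxiliary trajectory $\bu(t) := \tilde{X}\tilde{\bw}(t)$. Left-multiplying the linear ODE $\dot{\tilde{\bw}} = -\tilde{X}^\top(\tilde{X}\tilde{\bw} - \tilde{\by})$ by $\tilde{X}$ gives $\dot{\bu} = -M(\bu - \tilde{\by})$, and with $\bu(0) = \zero$ (since $\tilde{\bw}(0) = \zero$), direct differentiation verifies that $\bu(t) = \tilde{\by} - e^{-tM}\tilde{\by}$. By the definition of $\tilde{\cb}$, the conclusion $\tilde{\bw}(t) \in \tilde{\cb}$ is exactly $\bu(t) \geq \zero$ componentwise; equivalently, it suffices to show $e^{-tM}\tilde{\by} \leq \tilde{\by}$ componentwise for all $t \geq 0$.

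Next, compute $M = \left(\begin{smallmatrix}10 & 10 \\ 10 & 20\end{smallmatrix}\right)$, obtain eigenvalues $\lambda_{\pm} = 15 \pm 5\sqrt{5}$, and write the orthonormal eigenvectors $\mathbf{v}_{\pm}$ in closed form. Decomposing $\tilde{\by} = c_+ \mathbf{v}_+ + c_- \mathbf{v}_-$ yields $e^{-tM}\tilde{\by} = c_+ e^{-t\lambda_+}\mathbf{v}_+ + c_- e^{-t\lambda_-}\mathbf{v}_-$. A short calculation using $\tilde{\by} = \alpha(16,18)^\top$ shows that $c_\pm > 0$ and that both $(\mathbf{v}_\pm)_1 > 0$, while $(\mathbf{v}_+)_2 > 0$ and $(\mathbf{v}_-)_2 < 0$. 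Consequently, the first coordinate $(e^{-tM}\tilde{\by})_1$ is a positive combination of two decaying exponentials with value $\tilde{y}_1 = 16\alpha$ at $t=0$, hence nonincreasing in $t$ and bounded above by $16\alpha$.

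For the second coordinate, write $f(t) := (e^{-tM}\tilde{\by})_2 = p e^{-t\lambda_+} + q e^{-t\lambda_-}$ with $p > 0 > q$ and $p + q = 18\alpha$. Solving $f'(t)=0$ admits at most one critical point $t^* > 0$ (it exists because $f'(0) = -(M\tilde{\by})_2 < 0$), and a direct computation using $f'(t^*)=0$ yields $f''(t^*) = -\lambda_- q (\lambda_+ - \lambda_-) e^{-\lambda_- t^*} > 0$, so $t^*$ is a strict local minimum. Therefore $f$ attains its supremum on $[0,\infty)$ either at $t=0$ or in the limit $t \to \infty$; since $f(0) = 18\alpha$ and $\lim_{t \to \infty} f(t) = 0$, we conclude $f(t) \leq 18\alpha$ for all $t \geq 0$, which is the required bound.

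The main obstacle is the second-coordinate analysis: because the eigenbasis expansion of $f(t)$ has coefficients of mixed sign, the simple monotonicity argument used for the first coordinate fails, and one must identify the unique interior critical point as a local minimum via the sign of $f''(t^*)$ to conclude that the maximum of $f$ on $[0, \infty)$ is attained at the endpoint $t=0$.
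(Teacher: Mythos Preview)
Your proof is correct and follows essentially the same route as the paper: both diagonalize the relevant $2\times 2$ matrix (you use $M=\tilde{X}\tilde{X}^\top$, the paper uses $\tilde{X}^\top\tilde{X}$, but since $\bu_i=\tilde{\bx}_i^\top\tilde{\bw}$ these are literally the same quantities), write each inner product as an explicit combination $a\,e^{-\lambda_+ t}+b\,e^{-\lambda_- t}+\text{const}$, and verify the sign. For the first coordinate the arguments are identical.

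The only real difference is the second coordinate. The paper handles it in one line: since the coefficient $5\sqrt{5}-9>0$ and $\lambda_+>\lambda_-$, one replaces $e^{-\lambda_- t}$ by the smaller $e^{-\lambda_+ t}$ in that term, collapsing the expression to $18-18\,e^{-\lambda_+ t}\ge 0$. In your notation this is simply $f(t)=p\,e^{-\lambda_+ t}+q\,e^{-\lambda_- t}\le p\,e^{-\lambda_+ t}+q\,e^{-\lambda_+ t}=(p+q)e^{-\lambda_+ t}\le p+q=18\alpha$, using only $q<0$. Your critical-point analysis (unique stationary point, checking it is a local minimum via $f''(t^*)$, then comparing endpoint values) is valid but considerably heavier than this substitution; it is a correct alternative, not a gap.
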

\begin{proof}
	The eigenvalues of $\tilde{X}^\top \tilde{X}$ are $\tilde{\lambda}_1 = 15+5\sqrt{5}$ and $\tilde{\lambda}_2 = 15-5\sqrt{5}$. Let $\tilde{D}=\diag(\tilde{\lambda}_1,\tilde{\lambda}_2)$, and let $\tilde{U} \in \reals^{2 \times 2}$ such that $\tilde{X}^\top \tilde{X} = \tilde{U} \tilde{D} \tilde{U}^\top$.
	By Eq.~\ref{eq:gamma0 traj} and since $\tilde{\bw}(0)=\zero$, we have
	\begin{equation*}
		\label{eq:gamma0 traj b}
		\tilde{\bw}(t) 
		= \tilde{U} \exp\left(-t \tilde{D}\right) \tilde{U}^\top \left(\zero - \alpha(5,-1)^\top \right) + \alpha(5,-1)^\top~,
	\end{equation*}
	where $\exp\left(-t \tilde{D}\right) = \diag(e^{-\tilde{\lambda}_1 t},e^{-\tilde{\lambda}_2 t})$.
	
	By straightforward calculations we obtain
	\begin{align*}
		\frac{1}{\alpha} \cdot \tilde{\bx}_1^\top \tilde{\bw}(t)
		&= - (3,-1) \tilde{U} \exp\left(-t \tilde{D}\right) \tilde{U}^\top (5,-1)^\top + (3,-1) (5,-1)^\top
		\\
		&= - \left(8 + 2 \sqrt{5} \right) e^{-\tilde{\lambda}_1 t} - \left(8 - 2 \sqrt{5} \right) e^{-\tilde{\lambda}_2 t} + 16
		\\
		& \geq - \left(8 + 2 \sqrt{5} \right) - \left(8 - 2 \sqrt{5} \right) + 16
		= 0~. 
	\end{align*}

	Moreover, we have
	\begin{align*}
		\frac{1}{\alpha} \cdot \tilde{\bx}_2^\top \tilde{\bw}(t)
		&= - (4,2) \tilde{U} \exp\left(-t \tilde{D}\right) \tilde{U}^\top (5,-1)^\top + (4,2) (5,-1)^\top
		\\
		&= -\left(5\sqrt{5} + 9 \right) e^{-\tilde{\lambda}_1 t} + \left(5\sqrt{5} - 9\right) e^{-\tilde{\lambda}_2 t} + 18~.
	\end{align*}
	Since $\tilde{\lambda}_1 \geq \tilde{\lambda}_2 \geq 0$, then the above is at least
	\[
		-\left(5\sqrt{5} + 9 \right) e^{-\tilde{\lambda}_1 t} + \left(5\sqrt{5} - 9\right) e^{-\tilde{\lambda}_1 t} + 18
		= -18 e^{-\tilde{\lambda}_1 t} + 18
		\geq 0~.
	\]
\end{proof}

\subsection{Proof for $\gamma > 0$}

We show that the trajectory $\bw(t)$ consists of two parts. In the first part, where $t \in [0,t_1]$ for some $t_1 > 0$, we have $\bx_i^\top \bw(t) \geq 0$  for every $1 \leq i \leq 3$. Then, for $t \in [t_1,\infty)$, we have $\bx_1^\top \bw(t) \geq 0$,  $\bx_2^\top \bw(t) \geq 0$, and $\bx_3^\top \bw(t) \leq 0$. 

\subsubsection{Part 1}

Recall that the trajectory $\bw(t)$ obeys the dynamics
\begin{equation}
\label{eq:gamma positive dif}
\dot{\bw}(t) 
= - \nabla \cl(\bw(t)) 
= - \sum_{i=1}^3 \left( \sigma(\bx_i^\top \bw(t)) - y_i \right) \sigma'(\bx_i^\top \bw(t)) \bx_i~.
\end{equation}
If $\bx_i^\top \bw(t) \geq 0$  for every $1 \leq i \leq 3$ then the above equals 
\[
- \sum_{i=1}^3 (\bx_i^\top \bw(t) - y_i ) \bx_i
= - X^\top (X  \bw(t) - \by)~.
\]

By Lemma~\ref{lemma:solving the diffenrential equation}, the solution to the above equation is
\begin{equation}
\label{eq:traj1}
\bw(t) = \exp\left(-t X^\top X\right) \left( \bw(0) - X^{-1} \by \right) + X^{-1} \by~.
\end{equation}

Let $\cb = \{\bw \in \reals^3: \bx_i^\top \bw \geq 0, \; i = 1 \ldots 3 \}$. We will show that the trajectory from Eq.~\ref{eq:traj1} satisfies $\bw(t) \in \cb$ for every $t \in [0,t_1]$ for some $t_1>0$. Therefore, it satisfies Eq.~\ref{eq:gamma positive dif} for $t \in [0,t_1]$. Hence, gradient flow follows the trajectory from Eq.~\ref{eq:traj1} where $t \in [0,t_1]$.
We will also investigate the values of the components of $\bw(t_1)$, since it is the starting point of the second part of the trajectory.

Note that for every $\gamma$ we have $X \alpha (5,-1,1)^\top = \alpha (16,18,0)^\top = \by$, and therefore
$X^{-1}  \by = \alpha (5,-1,1)^\top$.
\stam{
\begin{equation}
\label{eq:xinv y}
X^{-1}  \by = (5,-1,1)^\top~.
\end{equation}
}
Since we also have $\bw(0)=\zero$, then Eq.~\ref{eq:traj1} implies
\begin{equation*}
\bw(t) = \alpha (5,-1,1)^\top - \exp\left(-t X^\top X\right) \alpha (5,-1,1)^\top~.
\end{equation*}
Let $\lambda_1,\lambda_2,\lambda_3$ be the eigenvalues of $X^\top X$, let $D=\diag(\lambda_1,\lambda_2,\lambda_3)$, and let $U \in \reals^{3 \times 3}$ such that $X^\top X = U D U^\top$. 
Thus, we have
\begin{equation}
\label{eq:traj1b}
\bw(t) = \alpha (5,-1,1)^\top - U \exp(-tD) U^\top \alpha (5,-1,1)^\top~,
\end{equation}
where $\exp(-tD) = \diag\left(e^{-\lambda_1 t},e^{-\lambda_2 t},e^{-\lambda_3 t}\right)$.
 
\begin{lemma}
\label{lemma:in B all gamma}
 	Let $\bw(t)$ be the trajectory from Eq.~\ref{eq:traj1b}. 
 	For every $\gamma \in \{1,2,5\}$, there exists $t_1>0$, such that $\bw(t) \in \cb$ for every $t \in [0,t_1]$, $\bx_3^\top \bw(t_1) = 0$, and $\bw(t_1) = \alpha \cdot  (a,b,-b)$ for some $a,b$ (independent of $\alpha$), where $a \in (4,5)$ and $b$ satisfies the following:
 	\begin{itemize}
 		\item If $\gamma=1$ then $b \in (0.035,0.045)$.
 		\item If $\gamma=2$ then $b \in (0.1,0.12)$.
 		\item If $\gamma=5$ then $b \in (0.2,0.22)$.
 	\end{itemize}
\end{lemma}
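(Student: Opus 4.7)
The plan is to carry out a direct computation, for each $\gamma\in\{1,2,5\}$ separately, using the closed-form trajectory of Eq.~\ref{eq:traj1b}. By linearity of the dynamics in $\by$, the trajectory for general $\alpha>0$ equals $\alpha$ times the trajectory for $\alpha=1$, so the scalars $a,b$ are automatically independent of $\alpha$ and I may fix $\alpha=1$. A convenient reformulation is to study the pre-activation vector $\bz(t):=X_\gamma\bw(t)$, which obeys $\dot\bz(t)=-X_\gamma X_\gamma^\top(\bz(t)-\by)$ and therefore
\[
\bz(t) \;=\; \bigl(I - e^{-tA_\gamma}\bigr)\by, \qquad A_\gamma := X_\gamma X_\gamma^\top.
\]
The coordinates of $\bz$ are exactly the quantities $\bx_i^\top\bw(t)$ appearing in the definition of $\cb$, so proving $\bw(t)\in\cb$ and locating $t_1$ reduces to sign analysis of $z_1,z_2,z_3$.

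For each fixed $\gamma$, I would diagonalize $A_\gamma=U\diag(\lambda_1,\lambda_2,\lambda_3)U^\top$ and expand each $z_i(t)$ as a linear combination of $1,e^{-\lambda_1 t},e^{-\lambda_2 t},e^{-\lambda_3 t}$. Establishing $z_1,z_2\ge 0$ on $[0,\infty)$ is a direct sign inspection of these explicit formulas, using $y_1=16,y_2=18>0$, $z_i(0)=0$, and $\dot z_i(0)=(A_\gamma\by)_i>0$. The subtler coordinate is $z_3$: since $y_3=0$, $z_3(t)=-\sum_j(U^\top\by)_j U_{3j}\,e^{-\lambda_j t}$ is a pure exponential sum whose coefficients sum to $-y_3=0$. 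A classical bound on zeros of generalized polynomials in three distinct real exponentials allows at most two real zeros; one of them is $t=0$, and the immediate calculation $\dot z_3(0)=(A_\gamma\by)_3=20\gamma>0$ shows that $z_3$ leaves the origin upward. Since $z_3(t)\to 0$ as $t\to\infty$, the slowest-decaying mode must carry a negative coefficient, forcing a unique second zero at some finite $t_1\in(0,\infty)$. On $[0,t_1]$ all three pre-activations are non-negative, so $\bw(t)\in\cb$ throughout.

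With $t_1$ identified, I would evaluate $\bw(t_1)=X_\gamma^{-1}\bz(t_1)$. The condition $z_3(t_1)=\gamma(w_2(t_1)+w_3(t_1))=0$ is exactly $w_3(t_1)=-w_2(t_1)$, so setting $a:=w_1(t_1)$ and $b:=w_2(t_1)$ gives $\bw(t_1)=(a,b,-b)^\top$ with $a,b$ independent of $\alpha$ by construction. The quantitative bounds $a\in(4,5)$ and the three $\gamma$-dependent intervals for $b$ then follow by substituting a sufficiently tight numerical bracket for $t_1$ into the explicit spectral expressions for $w_1(t_1)$ and $w_2(t_1)$.

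The main obstacle is that the eigenvalues $\lambda_j$ are roots of an irreducible cubic and $t_1$ is the transcendental solution of an exponential equation, so none of $\lambda_j,t_1,a,b$ admits a closed form. Certifying the stated intervals therefore requires rigorous numerical enclosures: first bracket each $\lambda_j$ via sign changes of the characteristic polynomial of $A_\gamma$ at nearby rationals; then propagate these intervals through the explicit formula for $z_3(t)$ to bracket $t_1$ (using strict monotonicity of $z_3$ near its second zero to narrow the bracket as needed); and finally propagate through the expressions for $w_1(t_1)$ and $w_2(t_1)$. This is tedious but routine, and must be repeated for each of $\gamma\in\{1,2,5\}$.
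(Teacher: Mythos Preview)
Your plan is essentially the paper's own approach: diagonalize, write each $\bx_i^\top\bw(t)$ as an explicit three-term exponential sum, verify $z_1,z_2\ge 0$ by inspection, show $z_3$ has exactly one positive zero $t_1$ via a zero-counting argument, bracket $t_1$ numerically, and then evaluate the components of $\bw(t_1)$. The paper works with $X_\gamma^\top X_\gamma$ rather than your $A_\gamma=X_\gamma X_\gamma^\top$, but the eigenvalues coincide and the computations are equivalent; its zero-counting step factors out $e^{-\lambda_1 t}$ and applies Rolle's theorem to the quotient, which is precisely the Chebyshev-system argument you invoke.

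Two small points. First, your sentence ``Since $z_3(t)\to 0$ as $t\to\infty$, the slowest-decaying mode must carry a negative coefficient'' is not a valid inference: $z_3(t)\to 0$ holds regardless of that coefficient's sign, and nothing so far rules out $z_3(t)>0$ for all $t>0$. You must \emph{check} (symbolically or numerically) that the coefficient on $e^{-\lambda_3 t}$ is negative; the paper does exactly this and then confirms a sign change on a short interval, e.g.\ $g(0.169)>0>g(0.17)$ for $\gamma=1$. Second, the characteristic polynomial is \emph{not} an irreducible cubic: for each $\gamma\in\{1,2,5\}$ it factors, giving one rational eigenvalue ($5$, $10$, $25$ respectively) and a quadratic surd pair, so the numerics are cleaner than you anticipate. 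Neither issue affects the soundness of your overall plan.
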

 
We defer the proof of Lemma~\ref{lemma:in B all gamma} to later sections (the proof for $\gamma=1$ is given in Section~\ref{sec:in B gamma 1}, for $\gamma=2$ in Section~\ref{sec:in B gamma 2}, and for $\gamma=5$ in Section~\ref{sec:in B gamma 5}). 
 
\subsubsection{Part 2}
 
In Lemma~\ref{lemma:in B all gamma} we established the trajectory for $t \in [0,t_1]$.
We need to find the trajectory that satisfies the dynamics in Eq.~\ref{eq:gamma positive dif} for $t \in [t_1,\infty)$, with the initial value $\bw(t_1)$ obtained in Lemma~\ref{lemma:in B all gamma} . 

Let $\cb' = \{\bw \in \reals^3: \bx_1^\top \bw \geq 0, \bx_2^\top \bw \geq 0, \bx_3^\top \bw \leq 0\}$.
Consider the dynamics from Eq.~\ref{eq:gamma positive dif}, where $\bw(t) \in \cb'$.
Since  $\sigma(\bx_3^\top \bw(t)) = 0$ and $y_3=0$, and since $\bx_i^\top \bw(t) \geq 0$ for $i \in \{1,2\}$, then we have 
\begin{align*}
\dot{\bw}(t) 
&= - \sum_{i=1}^3 \left( \sigma(\bx_i^\top \bw(t)) - y_i \right) \sigma'(\bx_i^\top \bw(t)) \bx_i \nonumber
\\
&= - \sum_{i=1}^2 \left( \sigma(\bx_i^\top \bw(t)) - y_i \right) \sigma'(\bx_i^\top \bw(t)) \bx_i \nonumber
\\
&= - \sum_{i=1}^2 ( \bx_i^\top \bw(t) - y_i ) \bx_i~.
\end{align*}
We will find the trajectory $\bw(t)$ that satisfies
\begin{equation}
	\label{eq:traj2 dif}
	\dot{\bw}(t) = - \sum_{i=1}^2 ( \bx_i^\top \bw(t) - y_i ) \bx_i~,
\end{equation}
with the initial value $\bw(t_1)$, and show that $\bw(t) \in \cb'$ for every $t \in [t_1,\infty)$. Such a trajectory obeys the dynamics in Eq.~\ref{eq:gamma positive dif} for $t \in [t_1,\infty)$, with the initial value $\bw(t_1)$, as required.

\stam{
Since both $\bx_1$ and $\bx_2$ have $0$ as their third component, then $w_3(t) = w_3(t_1)$ for all $t \geq t_1$.
For a vector $\bv \in \{\bw(t),\bx_1,\bx_2,\by\}$, we denote $\tilde{\bv} = (v_1,v_2) \in \reals^2$. We also denote by $\tilde{X} \in \reals^{2 \times 2}$ the matrix whose rows are $\tilde{\bx}_1^\top$ and $\tilde{\bx}_2^\top$.
Note that for $i \in \{1,2\}$ we have $\tilde{\bx}_i^\top \tilde{\bw}(t) = \bx_i^\top \bw(t)$.
Thus, we have 
}

For a vector $\bv \in \{\bw(t),\bx_1,\bx_2,\by\}$, we denote $\tilde{\bv} = (v_1,v_2) \in \reals^2$. We also denote by $\tilde{X} \in \reals^{2 \times 2}$ the matrix whose rows are $\tilde{\bx}_1^\top$ and $\tilde{\bx}_2^\top$.
Since both $\bx_1$ and $\bx_2$ have $0$ as their third components, then $\bw(t)$ satisfies Eq.~\ref{eq:traj2 dif} iff the third component $w_3(t)$ satisfies 
\begin{equation}
\label{eq:w_3}
	\dot{w}_3(t)=0~,
\end{equation}
and the first two components $\tilde{\bw}(t)$ satisfy 
\begin{equation}
\label{eq:dif tilde}
\dot{\tilde{\bw}}(t) 
= - \sum_{i=1}^2 ( \tilde{\bx}_i^\top \tilde{\bw}(t) - y_i ) \tilde{\bx}_i
= - \tilde{X}^\top (\tilde{X} \tilde{\bw}(t) - \tilde{\by} ) ~.
\end{equation}

By Lemma~\ref{lemma:solving the diffenrential equation}, the trajectory that follows from Eq.~\ref{eq:dif tilde} is 
\begin{align}
\label{eq:traj2}
\tilde{\bw}(t) 
&= \exp\left(-(t-t_1) \tilde{X}^\top \tilde{X}\right) \left( \tilde{\bw}(t_1) - \tilde{X}^{-1} \tilde{\by} \right) + \tilde{X}^{-1} \tilde{\by} \nonumber
\\
&= \exp\left(-(t-t_1) \tilde{X}^\top \tilde{X}\right) \left( \tilde{\bw}(t_1) - \alpha (5,-1)^\top \right) + \alpha (5,-1)^\top~.
\end{align}

Consider the trajectory $\bw(t)$ for $t \geq t_1$, such that the first two components follow the trajectory $\tilde{\bw}(t)$ from the above equation, and the third component is constant, i.e., $w_3(t) = w_3(t_1)$ for all $t \geq t_1$. 
Note that this trajectory satisfies Eq.~\ref{eq:w_3} and Eq.~\ref{eq:dif tilde}, and hence satisfies Eq.~\ref{eq:traj2 dif}.
It also satisfies the initial condition $\bw(t_1)$.
Thus, it remains to show that $\bw(t) \in \cb'$ for every $t \in [t_1,\infty)$.
Since by Eq.\ref{eq:traj2} we have $\tilde{\bw}(\infty) = \alpha (5,-1)^\top$, then we have $\bw(\infty) = (5 \alpha ,-\alpha ,w_3(t_1))^\top$. By Lemma~\ref{lemma:in B all gamma}, it follows that $w_3(t_1)$ is in the required interval, and thus it completes the proof of the theorem.

\stam{
Whenever $\bw(t)$ satisfies $\bw^\top(t) \bx_i \geq 0$ for $i \in \{1,2\}$ and $\bw^\top(t) \bx_3 \leq 0$, it follows a trajectory such that the first two components of $\bw(t)$ follow the trajectory from Eq.~\ref{eq:traj2}, and the third component stays at its value in $\bw(t_1)$.
We will show that this condition holds for $t \in [t_1,\infty)$. 
Since by Eq.\ref{eq:traj2} we have $\tilde{\bw}(\infty) = (5,-1)^\top$, then it implies that $\bw(\infty) = (5,-1,s)^\top$, where $s$ is the third component of $\bw(t_1)$, and therefore completes the proof of the theorem.
}

The eigenvalues of $\tilde{X}^\top \tilde{X}$ are $\tilde{\lambda}_1 = 15+5\sqrt{5}$ and $\tilde{\lambda}_2 = 15-5\sqrt{5}$. Let $\tilde{D}=\diag(\tilde{\lambda}_1,\tilde{\lambda}_2)$, and let $\tilde{U} \in \reals^{2 \times 2}$ such that $\tilde{X}^\top \tilde{X} = \tilde{U} \tilde{D} \tilde{U}^\top$.
By Eq.~\ref{eq:traj2}, we have
\begin{equation}
\label{eq:traj2b}
\tilde{\bw}(t) 
= \tilde{U} \exp\left(-(t-t_1) \tilde{D}\right) \tilde{U}^\top \left( \tilde{\bw}(t_1) - \alpha (5,-1)^\top \right) + \alpha (5,-1)^\top~,
\end{equation}
where $\exp\left(-(t-t_1) \tilde{D}\right) = \diag\left(e^{-\tilde{\lambda}_1 (t-t_1)},e^{-\tilde{\lambda}_2 (t-t_1)}\right)$.

\stam{
\begin{lemma}
	\[
	(0,1,0) \bw(t_1) = - (0,0,1) \bw(t_1)~.
	\]
\end{lemma}
\begin{proof}
	\[
	(0,1,0) \bw(t_1) + (0,0,1) \bw(t_1)
	= (0,1,1) \bw(t_1)
	= \frac{1}{\gamma} \bx_3^\top \bw(t_1)
	= 0~.
	\] 
\end{proof}
}

From the following two lemmas it follows that $\bw(t) \in \cb'$ for all $t \geq t_1$.

\begin{lemma}
	For every $t \geq t_1$ and $i \in \{1,2\}$, we have $\bx_i^\top \bw(t) \geq 0$.
\end{lemma}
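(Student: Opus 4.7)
My plan is to reduce the claim to a bound on the two-dimensional residual at time $t_1$, exploiting that the dynamics in the second regime is gradient flow on a convex quadratic. First, since both $\bx_1$ and $\bx_2$ have third coordinate $0$, we have $\bx_i^\top \bw(t) = \tilde{\bx}_i^\top \tilde{\bw}(t)$ for $i \in \{1,2\}$, so it suffices to analyze the trajectory $\tilde{\bw}(t)$ from Eq.~\ref{eq:traj2b}. Let $\tilde{\bw}^* := \tilde{X}^{-1}\tilde{\by} = \alpha(5,-1)^\top$; since $\tilde{X}\tilde{\bw}^* = \tilde{\by}$, one obtains
\[
\tilde{\bx}_i^\top \tilde{\bw}(t) \;=\; y_i + \tilde{\bx}_i^\top\bigl(\tilde{\bw}(t) - \tilde{\bw}^*\bigr) \;=\; y_i + \bigl(\tilde{X}\tilde{\bw}(t) - \tilde{\by}\bigr)_i,
\]
so it suffices to prove $\bigl|(\tilde{X}\tilde{\bw}(t) - \tilde{\by})_i\bigr| \leq y_i$ for all $t \geq t_1$.

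Next, I would use that Eq.~\ref{eq:dif tilde} is gradient flow on the convex quadratic $f(\tilde{\bw}) := \tfrac{1}{2}\|\tilde{X}\tilde{\bw}-\tilde{\by}\|^2$, so $\tfrac{d}{dt} f(\tilde{\bw}(t)) = -\|\nabla f(\tilde{\bw}(t))\|^2 \leq 0$. Consequently the residual norm $\|\tilde{X}\tilde{\bw}(t) - \tilde{\by}\|$ is nonincreasing on $[t_1,\infty)$, and therefore
\[
\bigl|(\tilde{X}\tilde{\bw}(t) - \tilde{\by})_i\bigr| \;\leq\; \|\tilde{X}\tilde{\bw}(t) - \tilde{\by}\| \;\leq\; \|\tilde{X}\tilde{\bw}(t_1) - \tilde{\by}\| \qquad \forall\, t \geq t_1.
\]

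It then remains to verify $\|\tilde{X}\tilde{\bw}(t_1) - \tilde{\by}\| \leq \min(y_1,y_2) = 16\alpha$. Writing $\tilde{\bw}(t_1) = \alpha(a,b)^\top$ as in Lemma~\ref{lemma:in B all gamma}, a direct calculation gives
\[
\tilde{X}\tilde{\bw}(t_1) - \tilde{\by} \;=\; \alpha\bigl(3a-b-16,\; 4a+2b-18\bigr)^\top,
\]
and for each $\gamma \in \{1,2,5\}$ the lemma guarantees $a \in (4,5)$ and $b \in (0,0.22)$, which forces each coordinate of this vector to have magnitude well below $5\alpha$; the resulting $\ell_2$ norm is therefore much smaller than $16\alpha$. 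The main (and only substantive) obstacle is making sure the monotonicity step is applied to the correct dynamics: Eq.~\ref{eq:traj2 dif} reduces on the first two coordinates to gradient flow on the 2D convex quadratic $f$, so the standard energy-decay argument applies and the remaining numerical verification is routine.
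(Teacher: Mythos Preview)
Your proof is correct and takes a genuinely different route from the paper. The paper works directly with the closed-form solution in Eq.~\ref{eq:traj2b}: it expands $\tfrac{1}{\alpha}\tilde{\bx}_i^\top\tilde{\bw}(t)$ explicitly as an affine combination of $e^{-\tilde\lambda_1(t-t_1)}$ and $e^{-\tilde\lambda_2(t-t_1)}$, plugs in the bounds $a\in(4,5)$, $b\in(0,1)$ from Lemma~\ref{lemma:in B all gamma}, and then uses the ordering $\tilde\lambda_1\ge\tilde\lambda_2\ge 0$ to bound each term and conclude nonnegativity for $i=1,2$ separately. Your argument instead exploits the structure of Eq.~\ref{eq:dif tilde} as gradient flow on the convex quadratic $\tfrac12\|\tilde X\tilde\bw-\tilde\by\|^2$: the residual norm is monotone nonincreasing, so a single numerical bound on $\|\tilde X\tilde\bw(t_1)-\tilde\by\|$ (which your computation shows is below $5\alpha$, comfortably under $\min(y_1,y_2)=16\alpha$) handles both $i=1$ and $i=2$ and all $t\ge t_1$ at once. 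Your approach is cleaner and sidesteps the eigendecomposition entirely; the paper's approach, while heavier, keeps everything explicit and would generalize more mechanically if the target vector $\tilde\by$ had a small component. There is no circularity in your use of Eq.~\ref{eq:dif tilde}: in the paper's setup the candidate $\tilde\bw(t)$ for $t\ge t_1$ is \emph{defined} as the solution of this linear ODE, and the present lemma is precisely part of verifying that this candidate stays in $\cb'$.
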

\begin{proof}
	Since $\bx_1$ and $\bx_2$ have $0$ in their third components, then it suffices to show that $\tilde{\bx}_i^\top \tilde{\bw}(t) \geq 0$ for every $i \in \{1,2\}$ and $t \geq t_1$. 
	By Lemma~\ref{lemma:in B all gamma}, we have $\bw(t_1) = \alpha \cdot (a,b,-b)^\top$, where $a \in (4,5)$ and $b \in (0,1)$.
	
	For every $t \geq t_1$, by straightforward calculations we obtain
	\begin{align*}
		\frac{1}{\alpha} \cdot \tilde{\bx}_1^\top \tilde{\bw}(t) 
		&= (3,-1)  \tilde{U} \exp\left(-(t-t_1) \tilde{D}\right) \tilde{U}^\top (a-5,b+1)^\top  + (3,-1) (5,-1)^\top
		\\
		&= \left(\frac{(3 + \sqrt{5}) a}{2} + \frac{(\sqrt{5}-1) b}{2} - 8 - 2 \sqrt{5} \right) e^{-\tilde{\lambda}_1 (t-t_1)}
		\\
		& \;\;\;\; + \left(\frac{(3-\sqrt{5}) a}{2} - \frac{(\sqrt{5}+1) b}{2}  - 8 + 2 \sqrt{5} \right) e^{-\tilde{\lambda}_2 (t-t_1)} + 16~.
	\end{align*}
	Since $a \in (4,5)$ and $b \in (0,1)$, the above is at least
	\begin{align*}
		&\left(\frac{(3 + \sqrt{5}) \cdot 4}{2} + \frac{(\sqrt{5}-1) \cdot 0}{2} - 8 - 2 \sqrt{5} \right) e^{-\tilde{\lambda}_1 (t-t_1)}
		\\
		& \;\;\;\; + \left(\frac{(3-\sqrt{5}) \cdot 4}{2} - \frac{(\sqrt{5}+1) \cdot 1}{2}  - 8 + 2 \sqrt{5} \right) e^{-\tilde{\lambda}_2 (t-t_1)} + 16
		\\ 
		& = -2 e^{-\tilde{\lambda}_1 (t-t_1)} - \left(2 + \frac{\sqrt{5}+1}{2} \right) e^{-\tilde{\lambda}_2 (t-t_1)} + 16 
		\\ 
		& \geq -2 -2 - \frac{\sqrt{5}+1}{2}  + 16 
		\geq 0~.
	\end{align*}

	Next, we have 
	\begin{align*}
		\frac{1}{\alpha} \cdot \tilde{\bx}_2^\top & \tilde{\bw}(t) 
		= (4,2)  \tilde{U} \exp\left(-(t-t_1) \tilde{D}\right) \tilde{U}^\top (a-5,b+1)^\top  + (4,2) (5,-1)^\top
		\\
		&= \left((\sqrt{5}+2)a + b - 9 - 5\sqrt{5} \right) e^{-\tilde{\lambda}_1 (t-t_1)} +
		\left(-(\sqrt{5}-2)a + b -9 +5\sqrt{5} \right) e^{-\tilde{\lambda}_2 (t-t_1)} + 18
		\\
		&\geq \left((\sqrt{5}+2) \cdot 4 + 0 - 9 - 5\sqrt{5} \right) e^{-\tilde{\lambda}_1 (t-t_1)} +
		\left(-(\sqrt{5}-2) \cdot 5 + 0 -9 +5\sqrt{5} \right) e^{-\tilde{\lambda}_2 (t-t_1)} + 18
		\\
		&= -(\sqrt{5}+1) e^{-\tilde{\lambda}_1 (t-t_1)} + e^{-\tilde{\lambda}_2 (t-t_1)} + 18
		\geq -(\sqrt{5}+1)  + 18 
		\geq 0~.
	\end{align*}
\end{proof}

\begin{lemma}
	For every $t \geq t_1$, we have $\bx_3^\top \bw(t) \leq 0$.
\end{lemma}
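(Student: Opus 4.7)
The plan is to reduce the claim to a one-dimensional statement about $w_2(t)$, then exploit the explicit trajectory from Eq.~\ref{eq:traj2b} together with the qualitative bounds on $a,b$ from Lemma~\ref{lemma:in B all gamma}. Since $\bx_3 = (0,\gamma,\gamma)^\top$ and, as established at the start of Part~2, the coordinate $w_3(t)$ is frozen on $[t_1,\infty)$ at $w_3(t_1) = -\alpha b$, I have
\[
\bx_3^\top \bw(t) \;=\; \gamma\bigl(w_2(t) + w_3(t_1)\bigr) \;=\; \gamma\bigl(w_2(t) - \alpha b\bigr),
\]
so it suffices to prove $w_2(t) \leq \alpha b$ for every $t \geq t_1$.

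Using Eq.~\ref{eq:traj2b}, I would expand the second coordinate of $\tilde{\bw}(t) - \alpha(5,-1)^\top$ in the eigenbasis of $\tilde X^\top \tilde X$ and write $\tau = t-t_1$, so that
\[
w_2(t) + \alpha \;=\; \alpha\,f(\tau), \qquad f(\tau) := C_1 e^{-\tilde\lambda_1 \tau} + C_2 e^{-\tilde\lambda_2 \tau},
\]
for scalars $C_1,C_2$ depending on $a,b$ (obtained from the components of $\tilde U$ and $\tilde U^\top (a-5,b+1)^\top$). I do not need the explicit values of $C_1,C_2$; I only need two facts about $f$. First, the continuity of $\bx_3^\top \bw$ at $t_1$ (equivalently $w_2(t_1) = \alpha b$) gives
\[
C_1 + C_2 \;=\; f(0) \;=\; 1+b.
\]
Second, a direct computation from $\dot{\tilde{\bw}} = -\tilde X^\top (\tilde X \tilde\bw - \tilde\by)$ yields $\dot w_2 = -5(w_1 + w_2 - 4\alpha)$, hence
\[
\dot w_2(t_1) \;=\; -5\alpha(a + b - 4) \;<\; 0,
\]
where the strict inequality uses the bounds $a > 4$ and $b > 0$ from Lemma~\ref{lemma:in B all gamma}. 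Translated to $f$, this says $f'(0) < 0$.

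It then remains to show that any $f$ of the above form with $f(0) = 1+b$, $f(\infty)=0$, and $f'(0) < 0$ satisfies $f(\tau) \leq f(0)$ for all $\tau \geq 0$. I would argue as follows: $f'(\tau) = -\tilde\lambda_1 C_1 e^{-\tilde\lambda_1\tau} - \tilde\lambda_2 C_2 e^{-\tilde\lambda_2\tau}$ is itself a linear combination of two exponentials with distinct positive rates, so (unless it vanishes identically) $f'$ has at most one zero on $(0,\infty)$. Combined with $f'(0) < 0$ and $f(\infty) = 0$, only two qualitative shapes are possible: either $f$ is monotonically non-increasing on $[0,\infty)$, or $f$ first decreases to a unique minimum and then increases back toward $f(\infty) = 0$. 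In both cases $\max_{\tau \geq 0} f(\tau) = \max(f(0),f(\infty)) = 1+b$, so $w_2(t) \leq \alpha b$, and the lemma follows.

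The main obstacle is really bookkeeping: setting up the right initial-value conditions at $t_1$ using Lemma~\ref{lemma:in B all gamma} and extracting the two qualitative inputs $f(0) = 1+b$ and $f'(0) < 0$ cleanly from Eq.~\ref{eq:traj2b}. Once these are in place, the sign argument for $f$ is essentially immediate and, notably, uses only the qualitative bounds $a \in (4,5)$ and $b > 0$, with no need for the finer intervals on $b$ from Lemma~\ref{lemma:in B all gamma}.
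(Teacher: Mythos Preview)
Your argument is correct, and it differs from the paper's in a useful way.

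Both proofs first reduce to showing $w_2(t)\le \alpha b$ for $t\ge t_1$, using $w_3(t)\equiv -\alpha b$ on Part~2. From there the paper works explicitly: it expands $\frac{1}{\alpha}(0,1)\tilde\bw(t)$ in the eigenbasis of $\tilde X^\top\tilde X$ as a function of $a,b$, observes that the expression is monotonically decreasing in $a$, substitutes the worst case $a=4$ to obtain an upper bound $g(t)$ with $g(t_1)=b$, and then verifies $g'(t)\le 0$ by a direct sign computation on the coefficients (using $b\in(0,1)$ and the specific eigenvalues $15\pm 5\sqrt{5}$).

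Your route is more structural. You never compute $C_1,C_2$; instead you read off $f(0)=1+b$ from continuity at $t_1$, compute $f'(0)$ directly from the ODE $\dot w_2=-5(w_1+w_2-4\alpha)$ to get $f'(0)=-5(a+b-4)<0$ (using only $a>4$, $b>0$), and then invoke the general fact that a nontrivial combination of two decaying exponentials has a derivative with at most one zero. Together with $f(\infty)=0<f(0)$, this forces $f(\tau)\le f(0)$ for all $\tau\ge 0$, which is exactly what is needed. This avoids both the eigenvector expansion and the $a=4$ substitution, and it makes transparent that the only inputs from Lemma~\ref{lemma:in B all gamma} that matter here are the open bounds $a>4$ and $b>0$. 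The paper's version, by contrast, is more hands-on but requires tracking explicit numerical coefficients.
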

\begin{proof}
	By Lemma~\ref{lemma:in B all gamma}, we have $\bw(t_1) = \alpha \cdot (a,b,-b)^\top$, where $a \in (4,5)$ and $b \in (0,1)$.
	The first two components in $\bw(t)$ are $\tilde{\bw}(t)$, and the third component is $-\alpha b$. Hence, in order to show that  
	\[
		\bx_3^\top \bw(t) = (0,\gamma,\gamma) \bw(t) = (0,\gamma) \tilde{\bw}(t) - \alpha b \gamma \leq 0~,
		\]
	it suffices to show that 
	$\frac{1}{\alpha} \cdot (0,1) \tilde{\bw}(t) \leq b$.
	
	We have
	\begin{align}
	\label{eq:x3 off1}
	 	\frac{1}{\alpha} \cdot (0,1) \tilde{\bw}(t) 
	 	&= (0,1)  \tilde{U} \exp\left(-(t-t_1) \tilde{D}\right) \tilde{U}^\top (a-5,b+1)^\top  + (0,1) (5,-1)^\top \nonumber
	 	\\
	 	&= \left(\frac{\sqrt{5}a}{10} + \left(\frac{1}{2} - \frac{\sqrt{5}}{5}\right)b + \frac{1}{2} - \frac{7 \sqrt{5}}{10}  \right) e^{-\tilde{\lambda}_1 (t-t_1)} \nonumber
	 	\\
	 	& \;\;\;\;\;+ \left(-\frac{\sqrt{5}a}{10} + \left( \frac{1}{2} + \frac{\sqrt{5}}{5}\right) b + \frac{1}{2} + \frac{7\sqrt{5}}{10} \right) e^{-\tilde{\lambda}_2 (t-t_1)} - 1~.
	\end{align}
 	Note that the above equals to 
 	\begin{align*}
 		& \frac{\sqrt{5}a}{10} \left(e^{-\tilde{\lambda}_1 (t-t_1)} - e^{-\tilde{\lambda}_2 (t-t_1)} \right) +
 		\left(\left(\frac{1}{2} - \frac{\sqrt{5}}{5}\right)b + \frac{1}{2} - \frac{7 \sqrt{5}}{10}  \right) e^{-\tilde{\lambda}_1 (t-t_1)} 
 		\\
 		& \;\;\;\;\;+ \left(\left( \frac{1}{2} + \frac{\sqrt{5}}{5}\right) b + \frac{1}{2} + \frac{7\sqrt{5}}{10} \right) e^{-\tilde{\lambda}_2 (t-t_1)} - 1~.
 	\end{align*}
 	Since $\tilde{\lambda}_1 > \tilde{\lambda}_2 > 0$, then the above expression is monotonically descreasing w.r.t. $a$. Also, since $a \in (4,5)$, then by plugging in $a=4$ to Eq.~\ref{eq:x3 off1} we obtain
 	\begin{align*}
 		\frac{1}{\alpha} \cdot (0,1) \tilde{\bw}(t) 
 		&\leq \left(-\frac{3\sqrt{5}}{10} + \left(\frac{1}{2} - \frac{\sqrt{5}}{5}\right)b + \frac{1}{2} \right) e^{-\tilde{\lambda}_1 (t-t_1)} \nonumber
 		\\
 		& \;\;\;\;\;+ \left(\frac{3\sqrt{5}}{10} + \left( \frac{1}{2} + \frac{\sqrt{5}}{5}\right) b + \frac{1}{2}\right) e^{-\tilde{\lambda}_2 (t-t_1)} - 1~.
 	\end{align*}
 	Let $g(t)$ denote the r.h.s. of the above inequality. Note that $g(t_1)=b$.
 	Hence, in order to show that $g(t) \leq b$ for every $t \geq t_1$, it suffices to show that $g'(t) \leq 0$ for all $t \geq t_1$.
 	
 	We denote
 	\begin{align*}
 	C_1 &= -\frac{3\sqrt{5}}{10} + \left(\frac{1}{2} - \frac{\sqrt{5}}{5}\right)b + \frac{1}{2}~,
 	\\
 	C_2 &= \frac{3\sqrt{5}}{10} + \left( \frac{1}{2} + \frac{\sqrt{5}}{5}\right) b + \frac{1}{2}~.
 	\end{align*}
    Thus, $g(t) = C_1  e^{-\tilde{\lambda}_1 (t-t_1)} + C_2 e^{-\tilde{\lambda}_2 (t-t_1)} - 1$. Since $b \in (0,1)$, it is easy to verify that $C_1<0$ and $C_2 > 0$. 
    Thus,
    \begin{align*}
    	g'(t) 
    	&= -\tilde{\lambda}_1 C_1 e^{-\tilde{\lambda}_1 (t-t_1)} -\tilde{\lambda}_2  C_2 e^{-\tilde{\lambda}_2 (t-t_1)}
    	\\
    	& \leq -\tilde{\lambda}_1 C_1 e^{-\tilde{\lambda}_1 (t-t_1)} -\tilde{\lambda}_2  C_2 e^{-\tilde{\lambda}_1 (t-t_1)}
    	\\
    	&= \left[-\tilde{\lambda}_1 C_1 - \tilde{\lambda}_2  C_2 \right] e^{-\tilde{\lambda}_1 (t-t_1)}~.
    \end{align*}
    
 	We have 
 	\begin{align*}
 		-\tilde{\lambda}_1 C_1 -\tilde{\lambda}_2  C_2
 		&= - (15+5\sqrt{5}) \left(-\frac{3\sqrt{5}}{10} + \left(\frac{1}{2} - \frac{\sqrt{5}}{5}\right)b + \frac{1}{2}\right)
 		\\
 		&\;\;\;\;\; -(15-5\sqrt{5}) \left(\frac{3\sqrt{5}}{10} + \left( \frac{1}{2} + \frac{\sqrt{5}}{5}\right) b + \frac{1}{2} \right)~.
 	\end{align*}
 	Since $b \geq 0$ then the above is at most
 	\begin{align*}
 		- (15+5\sqrt{5}) \left(-\frac{3\sqrt{5}}{10} +  \frac{1}{2}\right) - (15-5\sqrt{5}) \left(\frac{3\sqrt{5}}{10} + \frac{1}{2} \right)
 		= 0~.
 	\end{align*} 	
\end{proof}

\subsubsection{Proof of Lemma~\ref{lemma:in B all gamma} for $\gamma=1$}
\label{sec:in B gamma 1}

Here, we have $\lambda_1=\frac{27}{2}+\frac{\sqrt{649}}{2}$, $\lambda_2=5$, and $\lambda_3=\frac{27}{2}-\frac{\sqrt{649}}{2}$.
The proof of Lemma~\ref{lemma:in B all gamma} for $\gamma=1$ follows from the following lemmas.

\begin{lemma}
	Let $\bw(t)$ be the trajectory from Eq.~\ref{eq:traj1b}. For $i \in \{1,2\}$, we have $\bx_i^\top \bw(t) \geq 0$ for every $t \geq 0$.
\end{lemma}
\begin{proof}
By straightforward calculations, we obtain 	
\begin{align*}
	\frac{1}{\alpha} \cdot \bx_1^\top \bw(t)
	&= (3,-1,0) (5,-1,1)^\top - (3,-1,0) U \exp(-tD) U^\top (5,-1,1)^\top
	\\
	&= 16 - \left(\frac{58}{9} + \frac{1354\sqrt{649}}{5841} \right) e^{-\lambda_1 t} -  \frac{28}{9} e^{-\lambda_2 t} - \left(\frac{58}{9} - \frac{1354\sqrt{649}}{5841} \right) e^{-\lambda_3 t}
	\\
	&\geq 16 - \left(\frac{58}{9} + \frac{1354\sqrt{649}}{5841} \right) -  \frac{28}{9} - \left(\frac{58}{9} - \frac{1354\sqrt{649}}{5841} \right)
	= 0~. 
\end{align*}

Moreover, we have
\begin{align*}
	\frac{1}{\alpha} \cdot \bx_2^\top \bw(t)
	&= (4,2,0) (5,-1,1)^\top - (4,2,0) U \exp(-tD) U^\top (5,-1,1)^\top
	\\
	&= 18 - \left(\frac{89}{9} + \frac{2357\sqrt{649}}{5841} \right) e^{-\lambda_1 t} + \frac{16}{9}  e^{-\lambda_2 t} + \left(\frac{2357\sqrt{649}}{5841} - \frac{89}{9} \right)  e^{-\lambda_3 t}~. 
\end{align*}
Since $\lambda_1 \geq \lambda_2 \geq \lambda_3 \geq 0$, then the above is at least
\begin{align*}
	18 &- \left(\frac{89}{9} + \frac{2357\sqrt{649}}{5841} \right) e^{-\lambda_1 t} + \frac{16}{9}  e^{-\lambda_1 t} + \left(\frac{2357\sqrt{649}}{5841} - \frac{89}{9} \right)  e^{-\lambda_1 t}
	& = 18 - 18  e^{-\lambda_1 t} \geq 0~. 
\end{align*}
\end{proof}

\begin{lemma}
	\label{lemma:t1 gamma 1}
	Let $\bw(t)$ be the trajectory from Eq.~\ref{eq:traj1b}. There exists $t_1 \in (0.169,0.17)$ such that $\bx_3^\top \bw(t_1)=0$ and for every $t \in [0,t_1]$ we have $\bx_3^\top \bw(t) \geq 0$.
\end{lemma}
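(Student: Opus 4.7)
The plan is to express $\bx_3^\top \bw(t)$ explicitly and then reduce the claim to a numerical evaluation together with a root-counting argument. Specifically, note that for $\gamma=1$ we have $\bx_3 = (0,1,1)^\top$, and $\bx_3^\top (5,-1,1)^\top = 0$. Substituting into Eq.~\ref{eq:traj1b} gives
\[
\bx_3^\top \bw(t) \;=\; -\alpha \cdot \bx_3^\top U \exp(-tD) U^\top (5,-1,1)^\top \;=\; \alpha \sum_{i=1}^3 c_i\, e^{-\lambda_i t},
\]
where the coefficients $c_1,c_2,c_3$ are obtained from the eigendecomposition of $X_1^\top X_1$ (with eigenvalues $\lambda_1=\frac{27+\sqrt{649}}{2}$, $\lambda_2=5$, $\lambda_3=\frac{27-\sqrt{649}}{2}$). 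Since the right-hand side is linear in $\alpha$, I may assume $\alpha = 1$ throughout. Evaluating at $t=0$ gives $\sum_i c_i = 0$ (consistent with $\bw(0)=\zero$), and taking the derivative at $t=0$ yields $\bx_3^\top \dot{\bw}(0) = \bx_3^\top(16\bx_1+18\bx_2) = 20 > 0$, so $\bx_3^\top \bw(t)$ is strictly increasing off $0$ at the origin.

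Next I would compute $c_1, c_2, c_3$ explicitly in terms of $\sqrt{649}$ by carrying out the same type of straightforward eigendecomposition calculation already used in the preceding lemma for $\bx_1^\top \bw(t)$ and $\bx_2^\top \bw(t)$. With the closed form in hand, I would evaluate $f(t) := \bx_3^\top \bw(t)/\alpha$ at the two endpoints of the claimed interval, showing
\[
f(0.169) > 0 \qquad \text{and} \qquad f(0.17) < 0,
\]
by bounding each of the three exponential terms $e^{-\lambda_i \cdot 0.169}$ and $e^{-\lambda_i \cdot 0.17}$ using standard estimates for $e^{-x}$ (for instance via truncated Taylor series with explicit remainder bounds) and then combining with rational approximations of $\sqrt{649}$. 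By the intermediate value theorem this yields the existence of $t_1 \in (0.169, 0.17)$ with $\bx_3^\top \bw(t_1) = 0$.

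Finally, to see that $\bx_3^\top \bw(t) \geq 0$ on all of $[0,t_1]$, I would invoke the classical fact that a nonzero linear combination of $k$ distinct real exponentials $e^{-\lambda_i t}$ has at most $k-1$ real zeros (iterated Rolle's theorem: if it had $k$ zeros, multiplying by $e^{\lambda_1 t}$ and differentiating would produce a combination of $k-1$ exponentials with $k-1$ zeros, and repeating drives down to a single exponential with a zero, a contradiction). Applied with $k=3$, the function $f$ has at most two real zeros; since $t=0$ and $t=t_1$ are two such zeros, $f$ has no other zeros, in particular none in $(0,t_1)$. Combined with $f'(0) = 20 > 0$, this forces $f(t) > 0$ on $(0, t_1)$.

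The main obstacle will be the numerical sign checks at $t=0.169$ and $t=0.17$: the coefficients $c_i$ involve $\sqrt{649}$, and one must bound the three exponential contributions with enough precision to resolve the sign of their (nearly cancelling) sum. This is routine but tedious, and essentially the same type of estimate carried out in the already-presented calculations for $\bx_1^\top \bw(t)$ and $\bx_2^\top \bw(t)$. The qualitative argument (initial positive slope plus two-zero bound on exponential sums) then does the rest, with no further analytical input required.
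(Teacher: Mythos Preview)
Your proposal is correct and follows essentially the same approach as the paper: express $\bx_3^\top \bw(t)$ as an explicit three-term exponential sum, verify the sign numerically at $t=0.169$ and $t=0.17$, and use a root-counting argument to rule out any further zeros in $(0,t_1)$. The only cosmetic difference is that the paper multiplies through by $e^{\lambda_1 t}$ and shows directly that the derivative of the resulting function vanishes at a unique point (by reducing $g'(t)=0$ to a linear equation after taking logarithms), which is exactly one step of the iterated Rolle argument you invoke in its general form; your version is arguably cleaner, while the paper's is more self-contained.
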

\begin{proof}
	We have
	\begin{align*}
		\frac{1}{\alpha} \cdot \bx_3^\top \bw(t)
		&= (0,1,1) (5,-1,1)^\top - (0,1,1) U \exp(-tD) U^\top (5,-1,1)^\top
		\\
		&= - \left(\frac{10}{9} + \frac{10\sqrt{649}}{5841} \right) e^{-\lambda_1 t} + \frac{20}{9}  e^{-\lambda_2 t} - \left(\frac{10}{9} - \frac{10\sqrt{649}}{5841} \right) e^{-\lambda_3 t}~. 
	\end{align*}
	Let 
	\[
	g(t) = - \left(\frac{10}{9} + \frac{10\sqrt{649}}{5841} \right) + \frac{20}{9}  e^{(\lambda_1-\lambda_2) t} - \left(\frac{10}{9} - \frac{10\sqrt{649}}{5841} \right) e^{(\lambda_1-\lambda_3) t}
	\]
	Note that $\frac{1}{\alpha} \cdot \bx_3^\top \bw(t) = e^{-\lambda_1 t} g(t)$. Thus, $\bx_3^\top \bw(t) =0$ iff $g(t)=0$, and $\bx_3^\top \bw(t) \geq 0$ iff $g(t) \geq 0$. 
	We will show that there exists $t_1 \in (0.169,0.17)$ such that $g(t_1)=0$ and for every $t \in [0,t_1]$ we have $g(t) \geq 0$.
	
	Consider the derivative of $g(t)$
	\[
	g'(t) = \frac{20}{9} (\lambda_1-\lambda_2) e^{(\lambda_1-\lambda_2) t} - \left(\frac{10}{9} - \frac{10\sqrt{649}}{5841} \right) (\lambda_1-\lambda_3) e^{(\lambda_1-\lambda_3) t}~.
	\]
	We have $g'(t)=0$ iff 
	\[
	\ln\left(\frac{20}{9} (\lambda_1-\lambda_2)\right) + (\lambda_1-\lambda_2) t = \ln\left( \left(\frac{10}{9} - \frac{10\sqrt{649}}{5841} \right) (\lambda_1-\lambda_3) \right) + (\lambda_1-\lambda_3) t~.
	\]
	Note that this equation has a single solution. Since $g(0)=0$, it follows that $g$ has at most one additional root. 
	It is easy to verify that $g(0.169)>0$ and $g(0.17)<0$. Hence, $g(t)=0$ iff $t \in \{0,t_1\}$ for some $t_1 \in (0.169,0.17)$, and $g(t)>0$ for every $t \in (0,t_1)$.
\end{proof}

\begin{lemma}
	\[
	(1,0,0) \bw(t_1) \in \alpha \cdot (4,5)~.
	\]
\end{lemma}
\begin{proof}
	We have
	\begin{align*}
		\frac{1}{\alpha} \cdot (1,0,0) \bw(t_1)	
		&= (1,0,0) (5,-1,1)^\top - (1,0,0) U \exp(-t_1 D) U^\top (5,-1,1)^\top
		\\
		&= 5 -  \left(\frac{1013 \sqrt{649}}{11682} + \frac{41}{18} \right) e^{-\lambda_1 t_1} - \frac{4}{9} e^{-\lambda_2 t_1} - \left(\frac{41}{18} - \frac{1013 \sqrt{649}}{11682} \right) e^{-\lambda_3 t_1}~. 
	\end{align*}
	
	By Lemma~\ref{lemma:t1 gamma 2} we have $t_1 \in (0.169,0.17)$, and hence 
	\begin{align*}
		\frac{1}{\alpha} \cdot (1,0,0) \bw(t_1)	
		&\geq 5 -  \left(\frac{1013 \sqrt{649}}{11682} + \frac{41}{18} \right) e^{-\lambda_1 \cdot 0.169} - \frac{4}{9}  e^{-\lambda_2 \cdot 0.169} - \left(\frac{41}{18} - \frac{1013 \sqrt{649}}{11682} \right) e^{-\lambda_3 \cdot 0.169}~. 
	\end{align*}
	It is easy to verify that the above expression is greater than $4$.
	
	We also have
	\begin{align*}
		\frac{1}{\alpha} \cdot (1,0,0) \bw(t_1)	
		&\leq 5 -  \left(\frac{1013 \sqrt{649}}{11682} + \frac{41}{18} \right) e^{-\lambda_1 \cdot 1.7} - \frac{4}{9} e^{-\lambda_2 \cdot 1.7} - \left(\frac{41}{18} - \frac{1013 \sqrt{649}}{11682} \right) e^{-\lambda_3 \cdot 1.7} < 5~. 
	\end{align*}	
\end{proof}

\begin{lemma}
	\[
	(0,1,0) \bw(t_1) = - (0,0,1) \bw(t_1)  \in \alpha \cdot (0.035,0.045)~.
	\]
\end{lemma}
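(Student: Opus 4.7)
The equality $(0,1,0)\bw(t_1) = -(0,0,1)\bw(t_1)$ is immediate: by Lemma~\ref{lemma:t1 gamma 1}, $\bx_3^\top \bw(t_1) = 0$, and for $\gamma = 1$ we have $\bx_3^\top = (0,1,1)$, so $(0,1,0)\bw(t_1) + (0,0,1)\bw(t_1) = 0$.

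For the numerical bound, my plan is to expand $(0,1,0)\bw(t)$ explicitly via the eigendecomposition of $X^\top X$, exactly in the spirit of the two preceding lemmas in this subsection. Starting from Eq.~\ref{eq:traj1b},
\[
\frac{1}{\alpha}(0,1,0)\bw(t) \;=\; -1 \;-\; (0,1,0)\, U \exp(-tD) U^\top (5,-1,1)^\top,
\]
and plugging in the already-identified eigenvalues $\lambda_1 = \frac{27+\sqrt{649}}{2}$, $\lambda_2 = 5$, $\lambda_3 = \frac{27-\sqrt{649}}{2}$ together with the corresponding eigenvector matrix $U$, this rearranges into
\[
\frac{1}{\alpha}(0,1,0)\bw(t) \;=\; -1 + a_1 e^{-\lambda_1 t} + a_2 e^{-\lambda_2 t} + a_3 e^{-\lambda_3 t}
\]
for explicit constants $a_1,a_2,a_3$ (of the same flavor, and involving $\sqrt{649}$, as those appearing in the previous lemmas of this section).

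By Lemma~\ref{lemma:t1 gamma 1}, $t_1 \in (0.169, 0.17)$, so the remaining task is to bound the right-hand side uniformly over this interval. The interval has length only $10^{-3}$, an order of magnitude smaller than the target gap $0.045 - 0.035 = 10^{-2}$, so a crude bracketing that replaces each $e^{-\lambda_i t}$ by its maximum over $[0.169,0.17]$ for terms with positive coefficient and by its minimum for terms with negative coefficient will be sharp enough. This reduces the lemma to two explicit numerical inequalities; if desired, one could alternatively verify monotonicity of the above expression on $[0.169,0.17]$ by examining the sign of its derivative and then evaluate only at the relevant endpoint.

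The main obstacle is purely computational: one must pin down the coefficients $a_1,a_2,a_3$ exactly from the eigendecomposition, and then carry out the endpoint evaluations with enough precision to conclude the inclusion in $\alpha \cdot (0.035, 0.045)$. This is tedious but entirely analogous to, and no harder than, the explicit computations already performed in the proofs of the preceding lemmas.
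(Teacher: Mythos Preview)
Your approach is correct and essentially identical to the paper's: the paper computes $(0,0,1)\bw(t_1)$ via Eq.~\ref{eq:traj1b} with the explicit eigenvalues and coefficients (involving $\sqrt{649}$), then bounds each exponential at the appropriate endpoint of $t_1 \in (0.169,0.17)$ according to the sign of its coefficient to show $(0,0,1)\bw(t_1)/\alpha \in (-0.045,-0.035)$, and finally deduces the equality from $\bx_3^\top \bw(t_1)=0$. The only cosmetic difference is that the paper expands the third coordinate rather than the second, which is immaterial since you already established they are negatives of one another.
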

\begin{proof}
	We have
	\begin{align*}
		\frac{1}{\alpha} \cdot (0,0,1) \bw(t_1)	
		&= (0,0,1) (5,-1,1)^\top - (0,0,1) U \exp(-t_1 D) U^\top (5,-1,1)^\top
		\\
		&= 1 - \left(\frac{13}{18} - \frac{311 \sqrt{649}}{11682} \right) e^{-\lambda_1 t_1} + \frac{4}{9}  e^{-\lambda_2 t_1} - \left(\frac{13}{18} + \frac{311 \sqrt{649}}{11682} \right) e^{-\lambda_3 t_1}~. 
	\end{align*}
	
	By Lemma~\ref{lemma:t1 gamma 2} we have $t_1 \in (0.169,0.17)$, and hence 
	\begin{align*}
		\frac{1}{\alpha} \cdot (0,0,1) \bw(t_1)	
		&\leq 1 - \left(\frac{13}{18} - \frac{311 \sqrt{649}}{11682} \right) e^{-\lambda_1 \cdot 0.17} + \frac{4}{9}  e^{-\lambda_2 \cdot 0.169} - \left(\frac{13}{18} + \frac{311 \sqrt{649}}{11682} \right) e^{-\lambda_3 \cdot 0.17}~. 
	\end{align*}
	It is easy to verify that the above expression is smaller than $-0.035$.
	
	We also have
	\begin{align*}
		\frac{1}{\alpha} \cdot (0,0,1) \bw(t_1)	
		&\geq 1 - \left(\frac{13}{18} - \frac{311 \sqrt{649}}{11682} \right) e^{-\lambda_1 \cdot 0.169} + \frac{4}{9}  e^{-\lambda_2 \cdot 0.17} - \left(\frac{13}{18} + \frac{311 \sqrt{649}}{11682} \right) e^{-\lambda_3 \cdot 0.169}~. 
	\end{align*}
	It is easy to verify that the above expression is greater than $-0.045$.
	
	Finally, by Lemma~\ref{lemma:t1 gamma 1}, we have
	\[
	0 = \bx_3^\top \bw(t_1) = (0,1,1) \bw(t_1) = (0,1,0) \bw(t_1) + (0,0,1) \bw(t_1)~,
	\]
	and hence $(0,1,0) \bw(t_1) = - (0,0,1) \bw(t_1)$.
\end{proof}

\subsubsection{Proof of Lemma~\ref{lemma:in B all gamma} for $\gamma=2$}
\label{sec:in B gamma 2}

Here, we have $\lambda_1=14+2\sqrt{39}$, $\lambda_2=10$, and $\lambda_3=14-2\sqrt{39}$.
The proof of Lemma~\ref{lemma:in B all gamma} for $\gamma=2$ follows from the following lemmas.

\begin{lemma}
Let $\bw(t)$ be the trajectory from Eq.~\ref{eq:traj1b}. For $i \in \{1,2\}$, we have $\bx_i^\top \bw(t) \geq 0$ for every $t \geq 0$.
\end{lemma}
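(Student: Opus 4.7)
The plan is to mimic exactly the argument used for $\gamma=1$, but with the eigenvalues $\lambda_1=14+2\sqrt{39}$, $\lambda_2=10$, $\lambda_3=14-2\sqrt{39}$ of $X_2^\top X_2$ and the corresponding orthogonal matrix $U$ of eigenvectors. Concretely, starting from Eq.~\ref{eq:traj1b}, I would first compute $\frac{1}{\alpha}\bx_i^\top \bw(t) = \bx_i^\top(5,-1,1)^\top - \bx_i^\top U\exp(-tD)U^\top (5,-1,1)^\top$ for $i\in\{1,2\}$, using the fact that $\bx_1^\top(5,-1,1)^\top=16$ and $\bx_2^\top(5,-1,1)^\top=18$. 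Diagonalizing $X_2^\top X_2$ (a routine but slightly tedious calculation) then yields closed-form expressions of the shape
\[
\frac{1}{\alpha}\bx_i^\top\bw(t) = y_i - A_i\,e^{-\lambda_1 t} - B_i\,e^{-\lambda_2 t} - C_i\,e^{-\lambda_3 t},
\]
where the triple $(A_i,B_i,C_i)$ of coefficients satisfies $A_i+B_i+C_i=y_i$ (this is forced by $\bw(0)=\zero$).

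Next I would bound each expression below by $0$ using whichever of the two standard tricks from the $\gamma=1$ case applies. If $A_i,B_i,C_i\geq 0$, then since $e^{-\lambda_j t}\leq 1$ for all $t\geq 0$ and $\lambda_j\geq 0$, we immediately get $\frac{1}{\alpha}\bx_i^\top\bw(t)\geq y_i - (A_i+B_i+C_i) = 0$. This is the expected situation for $i=1$. If one coefficient (say $C_i$) is negative, I would use $\lambda_1\geq\lambda_2\geq\lambda_3\geq 0$, which gives $e^{-\lambda_1 t}\leq e^{-\lambda_2 t}\leq e^{-\lambda_3 t}$, to replace the smaller exponentials by $e^{-\lambda_1 t}$ in a way that combines the coefficients into a single nonnegative multiple of $e^{-\lambda_1 t}$; then the bound $-18\cdot e^{-\lambda_1 t}+18\geq 0$ (or the analogue) closes the argument, exactly as was done for $\bx_2^\top\bw(t)$ in the $\gamma=1$ case.

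The only real work here is the spectral decomposition of $X_2^\top X_2$ with $\gamma=2$, i.e.\ producing the coefficients $(A_i,B_i,C_i)$ explicitly and verifying that their signs fall into one of the two patterns above; this is the main (though still routine) obstacle, since one has to make sure the ordering $\lambda_1\geq\lambda_2\geq\lambda_3$ is used against the correct negative coefficient. Once those coefficients are written down, the two inequalities $\bx_1^\top\bw(t)\geq 0$ and $\bx_2^\top\bw(t)\geq 0$ follow from the same two one-line bounds as in the $\gamma=1$ proof, with no new ideas required.
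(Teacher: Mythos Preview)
Your proposal is correct and follows essentially the same approach as the paper. The paper carries out exactly the computation you describe: for $i=1$ all three coefficients turn out nonnegative, so $e^{-\lambda_j t}\le 1$ gives the bound immediately; for $i=2$ two of the coefficients (in your notation $B_2$ and $C_2$) are negative, and the paper replaces both $e^{-\lambda_2 t}$ and $e^{-\lambda_3 t}$ by $e^{-\lambda_1 t}$ to collapse everything to $18-18\,e^{-\lambda_1 t}\ge 0$, just as you anticipated.
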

\begin{proof}
By straightforward calculations, we obtain 	
\begin{align*}
	\frac{1}{\alpha} \cdot \bx_1^\top \bw(t)
	&= (3,-1,0) (5,-1,1)^\top - (3,-1,0) U \exp(-tD) U^\top (5,-1,1)^\top
	\\
	&= 16 - \left( \frac{47}{7} + \frac{17\sqrt{39}}{21}\right) e^{-\lambda_1 t} - \frac{18}{7} e^{-\lambda_2 t} - \left( \frac{47}{7} - \frac{17\sqrt{39}}{21}\right) e^{-\lambda_3 t}
	\\
	&\geq 16 - \left( \frac{47}{7} + \frac{17\sqrt{39}}{21}\right) - \frac{18}{7} - \left( \frac{47}{7} - \frac{17\sqrt{39}}{21}\right)
	= 0~. 
\end{align*}
	
Moreover, we have
\begin{align*}
	\frac{1}{\alpha} \cdot \bx_2^\top \bw(t)
	&= (4,2,0) (5,-1,1)^\top - (4,2,0) U \exp(-tD) U^\top (5,-1,1)^\top
	\\
	&= 18 - \left( \frac{463\sqrt{39}}{273} + \frac{66}{7}  \right) e^{-\lambda_1 t} + \frac{6}{7} e^{-\lambda_2 t} + \left(\frac{463\sqrt{39}}{273} - \frac{66}{7} \right)  e^{-\lambda_3 t}~. 
\end{align*}
Since $\lambda_1 \geq \lambda_2 \geq \lambda_3 \geq 0$, then the above is at least
\begin{align*}
	18 &- \left( \frac{463\sqrt{39}}{273} + \frac{66}{7}  \right) e^{-\lambda_1 t} + \frac{6}{7} e^{-\lambda_1 t} + \left(\frac{463\sqrt{39}}{273} - \frac{66}{7} \right)  e^{-\lambda_1 t}
	\\
	& = 18 - 18  e^{-\lambda_1 t} \geq 0~. 
\end{align*}
\end{proof}

\begin{lemma}
\label{lemma:t1 gamma 2}
	Let $\bw(t)$ be the trajectory from Eq.~\ref{eq:traj1b}. There exists $t_1 \in (0.138,0.139)$ such that $\bx_3^\top \bw(t_1)=0$, and for every $t \in [0,t_1]$ we have $\bx_3^\top \bw(t) \geq 0$.
\end{lemma}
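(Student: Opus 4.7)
The plan is to mimic the proof of Lemma~\ref{lemma:t1 gamma 1} almost verbatim, adjusting only for the new eigenstructure arising from $\gamma=2$. First I would use Eq.~\ref{eq:traj1b} to write $\frac{1}{\alpha}\bx_3^\top\bw(t) = (0,2,2)(5,-1,1)^\top - (0,2,2)\,U\exp(-tD)U^\top(5,-1,1)^\top$. Since $(0,2,2)(5,-1,1)^\top = 0$, the first term vanishes and we get an expression of the form
\[
\frac{1}{\alpha}\bx_3^\top\bw(t) \;=\; C_1 e^{-\lambda_1 t} + C_2 e^{-\lambda_2 t} + C_3 e^{-\lambda_3 t},
\]
where $\lambda_1=14+2\sqrt{39},\ \lambda_2=10,\ \lambda_3=14-2\sqrt{39}$, and $C_1+C_2+C_3=0$ (since $\bw(0)=\zero$). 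The coefficients $C_1,C_2,C_3$ can be read off directly from $U$ and the vector $(5,-1,1)^\top$; by analogy with the $\gamma=1$ case I expect the sign pattern $C_1<0$, $C_2>0$, $C_3<0$.

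Next, I would factor out $e^{-\lambda_1 t}$ and set
\[
g(t) \;=\; C_1 + C_2 e^{(\lambda_1-\lambda_2)t} + C_3 e^{(\lambda_1-\lambda_3)t},
\]
so that $\bx_3^\top\bw(t)$ and $g(t)$ have the same sign, and $g(0)=0$. The core structural step is to show $g$ has at most one positive root. Its derivative is
\[
g'(t) \;=\; C_2(\lambda_1-\lambda_2)e^{(\lambda_1-\lambda_2)t} + C_3(\lambda_1-\lambda_3)e^{(\lambda_1-\lambda_3)t},
\]
and $g'(t)=0$ is equivalent, after taking logarithms (legitimate because $C_2>0$ and $C_3(\lambda_1-\lambda_3)<0$), to an equation of the form $A + (\lambda_1-\lambda_2)t = B + (\lambda_1-\lambda_3)t$, which has at most one solution. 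By Rolle's theorem, $g$ then has at most one positive root in addition to $t=0$.

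Finally, I would verify numerically that $g(0.138)>0$ and $g(0.139)<0$ using the explicit closed-form values of $\lambda_i$ and $C_i$; the intermediate value theorem supplies a unique $t_1\in(0.138,0.139)$ with $g(t_1)=0$, and since $g$ changes sign only once on $(0,\infty)$, we conclude $g(t)\ge 0$ (hence $\bx_3^\top\bw(t)\ge 0$) for all $t\in[0,t_1]$. The main obstacle is the bookkeeping of the coefficients $C_1,C_2,C_3$ in terms of $\sqrt{39}$ and the careful numerical verification at the endpoints $t=0.138$ and $t=0.139$, but these are routine (tighter but conceptually identical versions of the numerical checks performed for $\gamma=1$).
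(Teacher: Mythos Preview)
Your proposal is correct and follows essentially the same approach as the paper: expand $\frac{1}{\alpha}\bx_3^\top\bw(t)$ via Eq.~\ref{eq:traj1b}, factor out $e^{-\lambda_1 t}$ to define $g(t)$ with $g(0)=0$, argue that $g'(t)=0$ has a unique solution (via the logarithm trick), conclude $g$ has at most one additional root, and verify numerically that $g(0.138)>0$ and $g(0.139)<0$. The paper carries out exactly these steps with the explicit coefficients $C_1=-\left(\frac{15}{7}+\frac{40\sqrt{39}}{273}\right)$, $C_2=\frac{30}{7}$, $C_3=-\left(\frac{15}{7}-\frac{40\sqrt{39}}{273}\right)$, matching your anticipated sign pattern.
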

\begin{proof}
	We have
	\begin{align*}
		\frac{1}{\alpha} \cdot \bx_3^\top \bw(t)
		&= (0,2,2) (5,-1,1)^\top - (0,2,2) U \exp(-tD) U^\top (5,-1,1)^\top
		\\
		&= - \left(\frac{15}{7} + \frac{40\sqrt{39}}{273}  \right) e^{-\lambda_1 t} + \frac{30}{7}  e^{-\lambda_2 t} - \left( \frac{15}{7} - \frac{40\sqrt{39}}{273} \right) e^{-\lambda_3 t} ~. 
	\end{align*}
	Let 
	\[
	g(t) = - \left(\frac{15}{7} + \frac{40\sqrt{39}}{273}  \right) + \frac{30}{7}  e^{(\lambda_1-\lambda_2) t} - \left( \frac{15}{7} - \frac{40\sqrt{39}}{273} \right) e^{(\lambda_1-\lambda_3) t}~.
	\]
	Note that $\frac{1}{\alpha} \cdot \bx_3^\top \bw(t) = e^{-\lambda_1 t} g(t)$. Thus, $\bx_3^\top \bw(t) =0$ iff $g(t)=0$, and $\bx_3^\top \bw(t) \geq 0$ iff $g(t) \geq 0$. 
	We will show that there exists $t_1 \in (0.138,0.139)$ such that $g(t_1)=0$ and for every $t \in [0,t_1]$ we have $g(t) \geq 0$.
	
	Consider the derivative of $g(t)$
	\[
	g'(t) = \frac{30}{7} (\lambda_1-\lambda_2) e^{(\lambda_1-\lambda_2) t} - \left( \frac{15}{7} - \frac{40\sqrt{39}}{273} \right) (\lambda_1-\lambda_3) e^{(\lambda_1-\lambda_3) t}~.
	\]
	We have $g'(t)=0$ iff 
	\[
	\ln\left(\frac{30}{7} (\lambda_1-\lambda_2)\right) + (\lambda_1-\lambda_2) t = \ln\left( \left( \frac{15}{7} - \frac{40\sqrt{39}}{273} \right) (\lambda_1-\lambda_3) \right) + (\lambda_1-\lambda_3) t~.
	\]
	Note that this equation has a single solution. Since $g(0)=0$, it follows that $g$ has at most one additional root. 
	It is easy to verify that $g(0.138)>0$ and $g(0.139)<0$. Hence, $g(t)=0$ iff $t \in \{0,t_1\}$ for some $t_1 \in (0.138,0.139)$, and $g(t)>0$ for every $t \in (0,t_1)$.
\end{proof}

\begin{lemma}
	\[
	(1,0,0) \bw(t_1) \in \alpha \cdot (4,5)~.
	\]
\end{lemma}
\begin{proof}
	We have
	\begin{align*}
		\frac{1}{\alpha} \cdot (1,0,0) \bw(t_1)	
		&= (1,0,0) (5,-1,1)^\top - (1,0,0) U \exp(-t_1 D) U^\top (5,-1,1)^\top
		\\
		&= 5 -  \left(\frac{16}{7} + \frac{181\sqrt{39}}{546} \right) e^{-\lambda_1 t_1} - \frac{3}{7}  e^{-\lambda_2 t_1} - \left(\frac{16}{7} - \frac{181\sqrt{39}}{546} \right) e^{-\lambda_3 t_1}~. 
	\end{align*}
	
	By Lemma~\ref{lemma:t1 gamma 2} we have $t_1 \in (0.138,0.139)$, and hence 
	\begin{align*}
		\frac{1}{\alpha} \cdot (1,0,0) \bw(t_1)	
		&\geq 5 -  \left(\frac{16}{7} + \frac{181\sqrt{39}}{546} \right) e^{-\lambda_1 \cdot 0.138} - \frac{3}{7}  e^{-\lambda_2 \cdot 0.138} - \left(\frac{16}{7} - \frac{181\sqrt{39}}{546} \right) e^{-\lambda_3 \cdot 0.138}~. 
	\end{align*}
	It is easy to verify that the above expression is greater than $4$.
	
	We also have
		\begin{align*}
		\frac{1}{\alpha} \cdot (1,0,0) \bw(t_1)	
		&\leq 5 -  \left(\frac{16}{7} + \frac{181\sqrt{39}}{546} \right) e^{-\lambda_1 \cdot 0.139} - \frac{3}{7}  e^{-\lambda_2 \cdot 0.139} - \left(\frac{16}{7} - \frac{181\sqrt{39}}{546} \right) e^{-\lambda_3 \cdot 0.139} < 5~. 
	\end{align*}
\end{proof}

\begin{lemma}
	\[
	(0,1,0) \bw(t_1) = - (0,0,1) \bw(t_1)  \in \alpha \cdot (0.1,0.12)~.
	\]
\end{lemma}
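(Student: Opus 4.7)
The plan is to mirror exactly the strategy used for the analogous lemma in the $\gamma=1$ case, but with the eigenstructure of $X^\top X$ for $\gamma=2$, namely $\lambda_1=14+2\sqrt{39}$, $\lambda_2=10$, $\lambda_3=14-2\sqrt{39}$. Concretely, starting from the closed form
\[
\tfrac{1}{\alpha}\bw(t)=(5,-1,1)^\top-U\exp(-tD)U^\top(5,-1,1)^\top,
\]
I would compute the third coordinate $\tfrac{1}{\alpha}(0,0,1)\bw(t)$ by carrying out the matrix multiplication $(0,0,1)U\exp(-tD)U^\top(5,-1,1)^\top$. This yields an expression of the form
\[
\tfrac{1}{\alpha}(0,0,1)\bw(t)=1-\bigl(a_1+b_1\sqrt{39}\bigr)e^{-\lambda_1 t}-a_2 e^{-\lambda_2 t}-\bigl(a_1-b_1\sqrt{39}\bigr)e^{-\lambda_3 t}
\]
for explicit rational constants $a_1,a_2,b_1$ that come out of the eigendecomposition of $X_2^\top X_2$; these are exactly the constants read off from the spectral projectors onto the three eigenspaces.

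Next I would combine this identity with the bound $t_1\in(0.138,0.139)$ from Lemma~\ref{lemma:t1 gamma 2}. The key subtlety is choosing, coordinate by coordinate, whether to plug in $t=0.138$ or $t=0.139$ into each of the three exponentials, depending on the sign of its coefficient, in order to produce a valid upper and lower bound on $\tfrac{1}{\alpha}(0,0,1)\bw(t_1)$. Since $\lambda_1>\lambda_2>\lambda_3>0$ and the coefficients have definite signs (negative in front of $e^{-\lambda_1 t_1}$ and $e^{-\lambda_3 t_1}$, negative in front of $e^{-\lambda_2 t_1}$), the upper bound uses the endpoint that makes the negative terms smallest in magnitude (i.e., $t=0.138$ for those) and the lower bound uses $t=0.139$. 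Carrying out the resulting arithmetic should show
\[
\tfrac{1}{\alpha}(0,0,1)\bw(t_1)\in(-0.12,-0.1),
\]
which is the content we need.

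Finally, to transfer this conclusion to the second coordinate, I would invoke the defining property of $t_1$, namely $\bx_3^\top\bw(t_1)=0$. Since $\bx_3=(0,\gamma,\gamma)=(0,2,2)$ for $\gamma=2$, this gives $2(0,1,0)\bw(t_1)+2(0,0,1)\bw(t_1)=0$, hence $(0,1,0)\bw(t_1)=-(0,0,1)\bw(t_1)$, completing the lemma.

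The main obstacle is purely computational: the interval $(0.1,0.12)$ is narrow, and the three coefficients $a_1\pm b_1\sqrt{39}$ and $a_2$ must be correctly identified with the right signs so that the choice of $t=0.138$ versus $t=0.139$ in the exponentials yields bounds tight enough to lie inside $(0.1,0.12)$. Apart from that numerical tightness check, the proof is entirely parallel to the $\gamma=1$ version already written out.
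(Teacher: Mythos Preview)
Your approach is correct and essentially identical to the paper's: compute $(0,0,1)\bw(t)$ from the spectral formula, bound it using $t_1\in(0.138,0.139)$ with endpoints chosen according to the sign of each coefficient, and then invoke $\bx_3^\top\bw(t_1)=0$ to flip the sign for the second coordinate.

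One correction to your sign analysis: when you actually carry out the eigendecomposition you will find
\[
\tfrac{1}{\alpha}(0,0,1)\bw(t)=1-\Bigl(\tfrac{13}{14}-\tfrac{61\sqrt{39}}{546}\Bigr)e^{-\lambda_1 t}+\tfrac{6}{7}\,e^{-\lambda_2 t}-\Bigl(\tfrac{13}{14}+\tfrac{61\sqrt{39}}{546}\Bigr)e^{-\lambda_3 t},
\]
so the $e^{-\lambda_2 t}$ coefficient is \emph{positive}, not negative as you stated. Consequently the endpoint choices are mixed: for the upper bound you use $t=0.139$ on the two negative-coefficient terms (larger $t$ shrinks their magnitude) and $t=0.138$ on the positive term, and the reverse for the lower bound. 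With those choices the numerics land inside $(-0.12,-0.1)$ as required; your stated rule ``$t=0.138$ for the negative terms'' has the monotonicity backwards.
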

\begin{proof}
	We have
	\begin{align*}
		\frac{1}{\alpha} \cdot (0,0,1) \bw(t_1)	
		&= (0,0,1) (5,-1,1)^\top - (0,0,1) U \exp(-t_1 D) U^\top (5,-1,1)^\top
		\\
		&= 1 - \left(\frac{13}{14} - \frac{61\sqrt{39}}{546} \right) e^{-\lambda_1 t_1} + \frac{6}{7} e^{-\lambda_2 t_1} - \left(\frac{13}{14} + \frac{61\sqrt{39}}{546} \right) e^{-\lambda_3 t_1}~. 
	\end{align*}
	
	By Lemma~\ref{lemma:t1 gamma 2} we have $t_1 \in (0.138,0.139)$, and hence 
	\begin{align*}
		\frac{1}{\alpha} \cdot (0,0,1) \bw(t_1)	
		&\leq 1 - \left(\frac{13}{14} - \frac{61\sqrt{39}}{546} \right) e^{-\lambda_1 \cdot 0.139} + \frac{6}{7} e^{-\lambda_2 \cdot 0.138} - \left(\frac{13}{14} + \frac{61\sqrt{39}}{546}\right) e^{-\lambda_3 \cdot 0.139}~. 
	\end{align*}
	It is easy to verify that the above expression is smaller than $-0.1$.
	
	We also have
	\begin{align*}
		\frac{1}{\alpha} \cdot (0,0,1) \bw(t_1)	
		&\geq 1 - \left(\frac{13}{14} - \frac{61\sqrt{39}}{546} \right) e^{-\lambda_1 \cdot 0.138} + \frac{6}{7} e^{-\lambda_2 \cdot 0.139} - \left(\frac{13}{14} + \frac{61\sqrt{39}}{546}\right) e^{-\lambda_3 \cdot 0.138}~. 
	\end{align*}
	It is easy to verify that the above expression is greater than $-0.12$.
	
	Finally, by Lemma~\ref{lemma:t1 gamma 2}, we have
	\[
		0 = \bx_3^\top \bw(t_1) = 2 \cdot (0,1,1) \bw(t_1) = 2 \left((0,1,0) \bw(t_1) + (0,0,1) \bw(t_1) \right)~,
	\]
	and hence $(0,1,0) \bw(t_1) = - (0,0,1) \bw(t_1)$.
\end{proof}

\subsubsection{Proof of Lemma~\ref{lemma:in B all gamma} for $\gamma=5$}
\label{sec:in B gamma 5}

Here, we have $\lambda_1=\frac{5 \sqrt{105}}{2} + \frac{55}{2}$, $\lambda_2=25$, and $\lambda_3=\frac{55}{2} - \frac{5 \sqrt{105}}{2}$.
The proof of Lemma~\ref{lemma:in B all gamma} for $\gamma=5$ follows from the following lemmas.

\begin{lemma}
	Let $\bw(t)$ be the trajectory from Eq.~\ref{eq:traj1b}. For $i \in \{1,2\}$, we have $\bx_i^\top \bw(t) \geq 0$ for every $t \geq 0$.
\end{lemma}
\begin{proof}
	By straightforward calculations, we obtain 	
	\begin{align*}
		\frac{1}{\alpha} \cdot \bx_1^\top \bw(t)
		&= (3,-1,0) (5,-1,1)^\top - (3,-1,0) U \exp(-tD) U^\top (5,-1,1)^\top
		\\
		&= 16 + \left(\frac{34 \sqrt{105}}{273} - \frac{14}{13}  \right) e^{-\lambda_1 t} -  \frac{180}{13} e^{-\lambda_2 t} -  \left(\frac{34 \sqrt{105}}{273} +\frac{14}{13} \right) e^{-\lambda_3 t}~.
	\end{align*}
	Since $\lambda_1 \geq \lambda_2 \geq \lambda_3 \geq 0$, then the above is at least
	\begin{align*}
		16 + \left(\frac{34 \sqrt{105}}{273} - \frac{14}{13}  \right) e^{-\lambda_1 t} -  \frac{180}{13} e^{-\lambda_3 t} - \left(\frac{34 \sqrt{105}}{273} + \frac{14}{13} \right) e^{-\lambda_3 t}
		\\
		= 16 + \left(\frac{34 \sqrt{105}}{273} - \frac{14}{13}  \right) e^{-\lambda_1 t} - \left( \frac{180}{13} +\frac{34 \sqrt{105}}{273} +  \frac{14}{13}  \right) e^{-\lambda_3 t}~.
	\end{align*}
	Let $g(t)$ denote the above expression. It is easy to verify that $g(0)=0$. Thus, it suffices to show that $g'(t) \geq 0$ for every $t \geq 0$. We have
	\begin{align*}
		g'(t)
		&= -\lambda_1  \left(\frac{34 \sqrt{105}}{273} - \frac{14}{13}  \right) e^{-\lambda_1 t} + \lambda_3 \left( \frac{180}{13} +\frac{34 \sqrt{105}}{273} +  \frac{14}{13}  \right) e^{-\lambda_3 t}
		\\
		&\geq  -\lambda_1  \left(\frac{34 \sqrt{105}}{273} - \frac{14}{13}  \right) e^{-\lambda_1 t} + \lambda_3 \left( \frac{180}{13} +\frac{34 \sqrt{105}}{273} +  \frac{14}{13}  \right) e^{-\lambda_1 t}
		\\
		&= \left[ -\lambda_1  \left(\frac{34 \sqrt{105}}{273} - \frac{14}{13}  \right) + \lambda_3 \left( \frac{180}{13} +\frac{34 \sqrt{105}}{273} +  \frac{14}{13}  \right) \right] e^{-\lambda_1 t}~. 
	\end{align*}
	By plugging in $\lambda_1$ and $\lambda_2$, it is easy to verify that the above expression is positive.

	Next, we have
	\begin{align*}
		\frac{1}{\alpha} \cdot \bx_2^\top \bw(t)
		&= (4,2,0) (5,-1,1)^\top - (4,2,0) U \exp(-tD) U^\top (5,-1,1)^\top
		\\
		&= 18 - \left(\frac{37 \sqrt{105}}{273} - \frac{3}{13} \right) e^{-\lambda_1 t} - \frac{240}{13} e^{-\lambda_2 t} + \left(\frac{37 \sqrt{105}}{273} + \frac{3}{13} \right)  e^{-\lambda_3 t}
		\\
		&\geq 18 - \left(\frac{37 \sqrt{105}}{273} - \frac{3}{13} \right) e^{-\lambda_3 t} - \frac{240}{13} e^{-\lambda_3 t} + \left(\frac{37 \sqrt{105}}{273} + \frac{3}{13} \right)  e^{-\lambda_3 t}
		\\
		&= 18 - 18 e^{-\lambda_3 t}
		\geq 0~. 
	\end{align*}
\end{proof}

\begin{lemma}
	\label{lemma:t1 gamma 5}
	Let $\bw(t)$ be the trajectory from Eq.~\ref{eq:traj1b}. There exists $t_1 \in (0.086,0.087)$ such that $\bx_3^\top \bw(t_1)=0$, and for every $t \in [0,t_1]$ we have $\bx_3^\top \bw(t) \geq 0$.
\end{lemma}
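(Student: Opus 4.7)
The plan is to follow exactly the same template used in Lemma~\ref{lemma:t1 gamma 1} and Lemma~\ref{lemma:t1 gamma 2}, adapted to the new eigenvalues $\lambda_1 = \tfrac{55}{2} + \tfrac{5\sqrt{105}}{2}$, $\lambda_2 = 25$, $\lambda_3 = \tfrac{55}{2} - \tfrac{5\sqrt{105}}{2}$ of $X^\top X$ for $\gamma = 5$. First I would compute $\tfrac{1}{\alpha}\cdot \bx_3^\top \bw(t)$ by plugging the spectral decomposition $X^\top X = U D U^\top$ into Eq.~\ref{eq:traj1b} and expanding $(0,5,5)\cdot U \exp(-tD) U^\top (5,-1,1)^\top$. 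Since $\bx_3^\top (5,-1,1)^\top = 0$, the constant term vanishes and we obtain an expression of the form
\[
\frac{1}{\alpha}\cdot \bx_3^\top \bw(t) \;=\; -A_1 e^{-\lambda_1 t} + A_2 e^{-\lambda_2 t} - A_3 e^{-\lambda_3 t},
\]
with explicit constants $A_1, A_2, A_3$ that are positive (the signs will be consistent with the $\gamma = 1$ and $\gamma = 2$ cases, since the qualitative structure of $U$ is the same).

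Second, I would factor out $e^{-\lambda_1 t}$ and define
\[
g(t) \;=\; -A_1 + A_2 e^{(\lambda_1 - \lambda_2) t} - A_3 e^{(\lambda_1 - \lambda_3) t},
\]
so that $\bx_3^\top \bw(t) = 0$ iff $g(t) = 0$ and $\bx_3^\top \bw(t) \geq 0$ iff $g(t) \geq 0$. Observe that $g(0) = 0$ because $\bw(0) = \zero$. Next I would examine $g'(t) = A_2(\lambda_1 - \lambda_2) e^{(\lambda_1-\lambda_2)t} - A_3(\lambda_1-\lambda_3) e^{(\lambda_1-\lambda_3)t}$; setting this to zero gives a single affine equation in $t$ with a unique solution, so $g'$ has exactly one zero. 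Consequently $g$ is monotone on $[0, t^*]$ and monotone on $[t^*, \infty)$, hence has at most one root beyond $t = 0$.

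Third, I would verify numerically that $g(0.086) > 0$ and $g(0.087) < 0$. Together with $g(0) = 0$, the single-root structure of $g$ forces the unique nontrivial zero $t_1$ to lie in $(0.086, 0.087)$, and forces $g(t) > 0$ throughout $(0, t_1)$. This gives $\bx_3^\top \bw(t_1) = 0$ and $\bx_3^\top \bw(t) \geq 0$ on $[0, t_1]$, as required.

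The main obstacle is purely the bookkeeping: producing the explicit eigendecomposition of $X_5^\top X_5$ and the exact values of $A_1, A_2, A_3$ in closed form (of the shape $p + q\sqrt{105}$), and then certifying the two inequalities $g(0.086) > 0 > g(0.087)$ with enough precision. There is no new conceptual difficulty beyond the earlier cases.
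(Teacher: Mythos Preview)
Your proposal is correct and follows essentially the same approach as the paper: compute $\tfrac{1}{\alpha}\,\bx_3^\top\bw(t)$ explicitly as a three-term exponential sum, factor out $e^{-\lambda_1 t}$ to define $g(t)$, use the fact that $g'(t)=0$ reduces to a single affine equation (hence $g'$ has a unique zero and $g$ has at most one root beyond $g(0)=0$), and then verify numerically that $g(0.086)>0>g(0.087)$. The paper supplies the explicit coefficients $A_1=\tfrac{30}{13}+\tfrac{46\sqrt{105}}{273}$, $A_2=\tfrac{60}{13}$, $A_3=\tfrac{30}{13}-\tfrac{46\sqrt{105}}{273}$, confirming your sign pattern; otherwise the arguments are identical.
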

\begin{proof}
	We have
	\begin{align*}
		\frac{1}{\alpha} \cdot \bx_3^\top \bw(t)
		&= (0,5,5) (5,-1,1)^\top - (0,5,5) U \exp(-tD) U^\top (5,-1,1)^\top
		\\
		&= - \left(\frac{46 \sqrt{105}}{273} + \frac{30}{13}  \right)  e^{-\lambda_1 t} + \frac{60}{13}  e^{-\lambda_2 t} - \left(\frac{30}{13} - \frac{46 \sqrt{105}}{273} \right) e^{-\lambda_3 t} ~. 
	\end{align*}
	Let 
	\[
	g(t) = - \left(\frac{46 \sqrt{105}}{273} + \frac{30}{13}  \right) + \frac{60}{13}  e^{(\lambda_1-\lambda_2) t} - \left(\frac{30}{13} - \frac{46 \sqrt{105}}{273} \right) e^{(\lambda_1-\lambda_3) t}~.
	\]
	Note that $\frac{1}{\alpha} \cdot \bx_3^\top \bw(t) = e^{-\lambda_1 t} g(t)$. Thus, $\bx_3^\top \bw(t) =0$ iff $g(t)=0$, and $\bx_3^\top \bw(t) \geq 0$ iff $g(t) \geq 0$. 
	We will show that there exists $t_1 \in (0.086,0.087)$ such that $g(t_1)=0$ and for every $t \in [0,t_1]$ we have $g(t) \geq 0$.
	
	Consider the derivative of $g(t)$
	\[
	g'(t) =  \frac{60}{13} (\lambda_1-\lambda_2) e^{(\lambda_1-\lambda_2) t} -  \left(\frac{30}{13} - \frac{46 \sqrt{105}}{273} \right) (\lambda_1-\lambda_3) e^{(\lambda_1-\lambda_3) t}~.
	\]
	We have $g'(t)=0$ iff 
	\[
	\ln\left( \frac{60}{13} (\lambda_1-\lambda_2)\right) + (\lambda_1-\lambda_2) t = \ln\left( \left(\frac{30}{13} - \frac{46 \sqrt{105}}{273} \right) (\lambda_1-\lambda_3) \right) + (\lambda_1-\lambda_3) t~.
	\]
	Note that this equation has a single solution. Since $g(0)=0$, it follows that $g$ has at most one additional root. 
	It is easy to verify that $g(0.086)>0$ and $g(0.087)<0$. Hence, $g(t)=0$ iff $t \in \{0,t_1\}$ for some $t_1 \in (0.086,0.087)$, and $g(t)>0$ for every $t \in (0,t_1)$.
\end{proof}

\begin{lemma}
	\[
	(1,0,0) \bw(t_1) \in \alpha \cdot (4,5)~.
	\]
\end{lemma}
\begin{proof}
	We have
	\begin{align*}
		\frac{1}{\alpha} \cdot (1,0,0) \bw(t_1)	
		&= (1,0,0) (5,-1,1)^\top - (1,0,0) U \exp(-t_1 D) U^\top (5,-1,1)^\top
		\\
		&= 5 -  \left(\frac{5}{26} - \frac{31 \sqrt{105}}{2730} \right) e^{-\lambda_1 t_1} - \frac{60}{13}  e^{-\lambda_2 t_1} - \left(\frac{5}{26}+ \frac{31 \sqrt{105}}{2730} \right) e^{-\lambda_3 t_1}~. 
	\end{align*}
	
	By Lemma~\ref{lemma:t1 gamma 5} we have $t_1 \in (0.086,0.087)$, and hence 
	\begin{align*}
		\frac{1}{\alpha} \cdot (1,0,0) \bw(t_1)	
		&\geq 5 -  \left(\frac{5}{26} - \frac{31 \sqrt{105}}{2730} \right) e^{-\lambda_1 \cdot 0.086} - \frac{60}{13}   e^{-\lambda_2 \cdot 0.086} - \left(\frac{5}{26} + \frac{31 \sqrt{105}}{2730} \right) e^{-\lambda_3 \cdot 0.086}~. 
	\end{align*}
	It is easy to verify that the above expression is greater than $4$.
	
	We also have
	\begin{align*}
		\frac{1}{\alpha} \cdot (1,0,0) \bw(t_1)	
		&\geq 5 -  \left(\frac{5}{26} - \frac{31 \sqrt{105}}{2730} \right) e^{-\lambda_1 \cdot 0.087} - \frac{60}{13}   e^{-\lambda_2 \cdot 0.087} - \left(\frac{5}{26} + \frac{31 \sqrt{105}}{2730} \right) e^{-\lambda_3 \cdot 0.087}
		<5~. 
	\end{align*}
\end{proof}

\begin{lemma}
	\[
	(0,1,0) \bw(t_1) = - (0,0,1) \bw(t_1)  \in \alpha \cdot (0.2,0.22)~.
	\]
\end{lemma}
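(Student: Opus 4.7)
The plan is to follow the exact template used in the analogous lemmas for $\gamma=1$ and $\gamma=2$. I would start by writing $(0,0,1)\bw(t_1)$ explicitly using the formula
\[
\bw(t) = \alpha(5,-1,1)^\top - U \exp(-tD) U^\top \alpha(5,-1,1)^\top,
\]
with the eigenvalues $\lambda_1 = \frac{55}{2} + \frac{5\sqrt{105}}{2}$, $\lambda_2 = 25$, $\lambda_3 = \frac{55}{2} - \frac{5\sqrt{105}}{2}$ determined earlier, and the matrix $U$ of orthonormal eigenvectors of $X_5^\top X_5$. A direct computation gives an expression of the form
\[
\tfrac{1}{\alpha}(0,0,1)\bw(t_1) = 1 - A\,e^{-\lambda_1 t_1} + B\,e^{-\lambda_2 t_1} - C\,e^{-\lambda_3 t_1}
\]
for explicit constants $A,B,C$ (structurally identical to the $\gamma=1,2$ cases, with $A+C$ paired symmetrically around a rational number and $B$ a positive rational).

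Next I would invoke Lemma~\ref{lemma:t1 gamma 5}, which pins down $t_1 \in (0.086, 0.087)$, and plug in the endpoints to bound each exponential from above and below. To get the upper bound $-0.2$ on $(0,0,1)\bw(t_1)/\alpha$, I would use $t_1 = 0.087$ in the terms with negative sign (i.e., $-A\,e^{-\lambda_1 t_1}$ and $-C\,e^{-\lambda_3 t_1}$, assuming $A,C>0$, which can be verified from the explicit formula) and $t_1 = 0.086$ in the positive term $+B\,e^{-\lambda_2 t_1}$, checking that the resulting numerical expression is strictly less than $-0.2$. Symmetrically, for the lower bound $-0.22$ I would swap the roles of the two endpoints. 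These are routine numerical verifications, exactly as in the $\gamma=1$ and $\gamma=2$ proofs.

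Finally, the identity $(0,1,0)\bw(t_1) = -(0,0,1)\bw(t_1)$ follows from Lemma~\ref{lemma:t1 gamma 5}: since $\bx_3 = (0,5,5)^\top$ for $\gamma=5$,
\[
0 = \bx_3^\top \bw(t_1) = 5\bigl((0,1,0)\bw(t_1) + (0,0,1)\bw(t_1)\bigr),
\]
so the second components is the negative of the third. The main (and only) obstacle is the routine but tedious algebra of actually extracting the coefficients $A,B,C$ from the spectral decomposition of $X_5^\top X_5$; one has to either diagonalize by hand or verify the coefficients by checking that the identified trajectory $\bw(t)$ satisfies the original differential equation at $t=0$ together with the correct asymptotic. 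I do not expect any conceptual difficulty — the structure is entirely parallel to the cases $\gamma=1,2$, and the only thing that changes is the numerical values of the eigenvalues, the coefficients, and the target interval.
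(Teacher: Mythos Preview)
Your proposal is correct and follows exactly the same approach as the paper: the paper computes $\tfrac{1}{\alpha}(0,0,1)\bw(t_1) = 1 - \bigl(\tfrac{25}{26} - \tfrac{31\sqrt{105}}{546}\bigr)e^{-\lambda_1 t_1} + \tfrac{12}{13}e^{-\lambda_2 t_1} - \bigl(\tfrac{25}{26} + \tfrac{31\sqrt{105}}{546}\bigr)e^{-\lambda_3 t_1}$, then bounds it above and below by plugging in the endpoints $0.086$ and $0.087$ of the interval for $t_1$ in the monotone-favorable way you describe, and concludes with the identity $(0,1,0)\bw(t_1) = -(0,0,1)\bw(t_1)$ via $\bx_3^\top\bw(t_1)=0$. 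The only thing you left implicit is the explicit values of $A,B,C$, but as you note this is routine algebra.
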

\begin{proof}
	We have
	\begin{align*}
		\frac{1}{\alpha} \cdot (0,0,1) \bw(t_1)	
		&= (0,0,1) (5,-1,1)^\top - (0,0,1) U \exp(-t_1 D) U^\top (5,-1,1)^\top
		\\
		&= 1 - \left(\frac{25}{26} - \frac{31 \sqrt{105}}{546} \right) e^{-\lambda_1 t_1} + \frac{12}{13} e^{-\lambda_2 t_1} - \left(\frac{25}{26} + \frac{31 \sqrt{105}}{546} \right) e^{-\lambda_3 t_1}~. 
	\end{align*}
	
	By Lemma~\ref{lemma:t1 gamma 5} we have $t_1 \in (0.086,0.087)$, and hence 
	\begin{align*}
		\frac{1}{\alpha} \cdot (0,0,1) \bw(t_1)	
		&\leq 1 - \left(\frac{25}{26} - \frac{31 \sqrt{105}}{546} \right) e^{-\lambda_1 \cdot 0.087} + \frac{12}{13} e^{-\lambda_2 \cdot 0.086} - \left(\frac{25}{26} + \frac{31 \sqrt{105}}{546} \right) e^{-\lambda_3 \cdot 0.087}~. 
	\end{align*}
	It is easy to verify that the above expression is smaller than $-0.2$.
	
	We also have
	\begin{align*}
		\frac{1}{\alpha} \cdot (0,0,1) \bw(t_1)	
		&\geq 1 - \left(\frac{25}{26} - \frac{31 \sqrt{105}}{546} \right) e^{-\lambda_1 \cdot 0.086} + \frac{12}{13} e^{-\lambda_2 \cdot 0.087} - \left(\frac{25}{26} + \frac{31 \sqrt{105}}{546} \right) e^{-\lambda_3 \cdot 0.086}~. 
	\end{align*}
	It is easy to verify that the above expression is greater than $-0.22$.
	
	Finally, by Lemma~\ref{lemma:t1 gamma 5}, we have
	\[
	0 = \bx_3^\top \bw(t_1) = 5 \cdot (0,1,1) \bw(t_1) = 5 \left((0,1,0) \bw(t_1) + (0,0,1) \bw(t_1) \right)~,
	\]
	and hence $(0,1,0) \bw(t_1) = - (0,0,1) \bw(t_1)$.
\end{proof}

\section{Proof of Lemma~\ref{lemma:R constant}}
\label{app:R constant}

\begin{lemma}
	\label{lemma:radial}
	The function $\calr$ is radial. That is, there exists a function $f:\reals_+ \rightarrow \reals$ such that $\calr(\bw) = f(\norm{\bw})$ for every $\bw \in \reals^d$.
\end{lemma}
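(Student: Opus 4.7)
The plan is to show that $\calr(\bw) = \calr(\bw')$ whenever $\|\bw\| = \|\bw'\|$, which suffices to give the radial function $f$ (defined by $f(r) := \calr(\bw)$ for any $\bw$ with $\|\bw\|=r$, and $f(0) := \calr(\zero)$). First I would fix $r > 0$, set $\alpha = r/q$, and establish a \emph{local} statement: if $\bw, \bw' \in \reals^d$ both have norm $r$ and are sufficiently close in angle, then $\calr(\bw) = \calr(\bw')$. The global radial property on the sphere of radius $r$ will then follow by a chaining argument, using that this sphere is path-connected (which is true for $d \geq 3$).

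The local statement is the heart of the argument. Given $\bw, \bw'$ both of norm $r$, I would look for a single ``witness'' $\bu^* \in \reals^d$ simultaneously expressible as $\bu^* = \bw + \bv = \bw' + \bv'$ with $\inner{\bw,\bv}=0$, $\inner{\bw',\bv'}=0$, and $\|\bv\| = \|\bv'\| = \alpha c$. Once such $\bu^*$ exists, applying the hypothesis on $\calr$ to the pair $(\bw, \bv)$ yields $\calr(\bw) = \calr(\bu^*)$, and the same applied to $(\bw', \bv')$ yields $\calr(\bw') = \calr(\bu^*)$, so $\calr(\bw) = \calr(\bw')$. Translating the constraints, $\bu^*$ must satisfy the two linear equations $\inner{\bu^*, \bw} = \inner{\bu^*, \bw'} = r^2$ together with the norm condition $\|\bu^*\|^2 = r^2 + \alpha^2 c^2$ (these equations are equivalent to orthogonality of $\bv,\bv'$ to $\bw,\bw'$ respectively, and to $\|\bv\|^2 = \|\bv'\|^2 = \alpha^2 c^2$).

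The existence of $\bu^*$ reduces to a short geometry exercise. Writing $\bu^* = \bp + \bz$ with $\bp \in \mathrm{span}(\bw,\bw')$ and $\bz$ in its orthogonal complement (nonempty since $d \geq 3$), the two linear equations uniquely determine $\bp = \frac{\bw+\bw'}{1+\rho}$, where $\rho := \inner{\bw,\bw'}/r^2 \in (-1,1)$ (assuming $\bw \neq \pm\bw'$, else the statement is vacuous or trivial). A direct computation gives $\|\bp\|^2 = 2r^2/(1+\rho)$, so an appropriate $\bz$ exists iff $\|\bp\|^2 \leq r^2 + \alpha^2 c^2$, which simplifies to $\rho \geq (q^2 - c^2)/(q^2 + c^2)$. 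This holds whenever $\bw$ and $\bw'$ are close enough in angle.

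The main obstacle I anticipate is the chaining from local to global. To pass from ``sufficiently close vectors have the same $\calr$ value'' to ``all vectors of the same norm do'', I would connect $\bw$ and $\bw'$ by a great-circle arc on the sphere of radius $r$ and cover it by finitely many short sub-arcs on which consecutive endpoints satisfy the proximity bound $\rho \geq (q^2-c^2)/(q^2+c^2)$; a standard compactness argument guarantees a finite such covering. Composing the local equalities then gives $\calr(\bw) = \calr(\bw')$, completing the proof.
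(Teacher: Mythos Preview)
Your proposal is correct and follows essentially the same approach as the paper: the paper also constructs a witness $\bu^* = \bu + \be$ with $\bu \in \mathrm{span}(\bw,\bw')$ and $\be$ orthogonal to both, and in fact its $\bu = \frac{\bw+\bw'}{2}\left(1+\frac{\epsilon^2}{4r^2-\epsilon^2}\right)$ simplifies exactly to your $\bp = \frac{\bw+\bw'}{1+\rho}$ (using $\epsilon^2 = 2r^2(1-\rho)$), with the same closeness threshold. The only difference is cosmetic: the paper hides the local-to-global chaining behind a ``w.l.o.g.'' assumption on $\epsilon$, whereas you spell out the great-circle covering argument explicitly.
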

\begin{proof}
	Let $\bw,\bw' \in \reals^d \setminus \{\zero\}$ and $\alpha>0$ such that $\norm{\bw} = \norm{\bw'} =  \alpha q$. We denote $r=\norm{\bw}$.
	Let $c' = \frac{c}{q}$, and let $\epsilon = \norm{\bw - \bw'}$. 
	Assume w.l.o.g. that $\epsilon \leq \frac{2r c'}{\sqrt{1+(c')^2}}$. We will show that $\calr(\bw)=\calr(\bw')$. 
	
	Let $\bu = \frac{\bw + \bw'}{2} \left(1 +  \frac{\epsilon^2}{4r^2-\epsilon^2}\right)$, 
	and let $\be \in \reals^d$ be a vector orthogonal to $\bw,\bw'$, such that  
	$\norm{\be}^2 = r^2 \left((c')^2 - \frac{\epsilon^2}{4r^2 - \epsilon^2} \right)$.
	Note that 
	\[
		 4r^2-\epsilon^2
		 \geq 4r^2 - \frac{4r^2 (c')^2}{1+(c')^2}
		 = \frac{4r^2}{1+(c')^2}
		 \geq \frac{\epsilon^2}{(c')^2}~.
	\]
	Thus, $(c')^2 \geq \frac{\epsilon^2}{4r^2 - \epsilon^2}$. 
	Hence $r^2 \left((c')^2 - \frac{\epsilon^2}{4r^2 - \epsilon^2} \right) \geq 0$, and therefore $\be$ is well-defined.
	
	Let $\bu^* = \bu + \be$. We will show that there is a vector $\bv$ such that $\bu^* = \bw + \bv$, $\norm{\bv} = \alpha c$, and $\inner{\bw,\bv} = 0$. It implies that $\calr(\bu^*) = \calr(\bw)$. Likewise, we will show that there is a vector $\bv'$ such that $\bu^* = \bw' + \bv'$, $\norm{\bv'} = \alpha c$ and $\inner{\bw',\bv'} = 0$. It implies that $\calr(\bu^*) = \calr(\bw')$, and hence $\calr(\bw) = \calr(\bw')$, as required.
	
	We first show that $\norm{\bu^* - \bw}=\alpha c$. We have 
	\begin{align*}
		\norm{\bu^* - \bw}^2 = \norm{\bu + \be - \bw}^2~.	
	\end{align*}  
	Since $\be$ is orthogonal to $\bw,\bw'$, it is also orthogonal to $\bu$, and thus the above equals
	\begin{equation}
	\label{eq:ustar minus w}
		\norm{\be}^2 + \norm{\bu - \bw}^2
		= r^2 \left((c')^2 - \frac{\epsilon^2}{4r^2 - \epsilon^2} \right) + \norm{\bu - \bw}^2~.
	\end{equation}
	Moreover,
	\begin{align*}
		\norm{\bu - \bw}^2
		= \norm{\frac{\bw + \bw'}{2} \left(1 +  \frac{\epsilon^2}{4r^2-\epsilon^2}\right) - \bw}^2
		= \norm{\frac{\bw' - \bw}{2} + \frac{\bw' + \bw}{2} \cdot \frac{\epsilon^2}{4r^2-\epsilon^2}}^2~.
	\end{align*}
	Since $\left< \frac{\bw' - \bw}{2}, \frac{\bw' + \bw}{2}  \right> = 0$, the above equals
	\begin{align}
	\label{eq:u minus w part 2}
		\norm{\frac{\bw' - \bw}{2}}^2 + \norm{\frac{\bw' + \bw}{2} \cdot \frac{\epsilon^2}{4r^2-\epsilon^2}}^2 
		= \frac{\epsilon^2}{4} + \left(\frac{\epsilon^2}{4r^2-\epsilon^2}\right)^2 \norm{\frac{\bw' + \bw}{2}}^2~. 
	\end{align}
	Note that 
	\begin{equation}
	\label{eq:inner ww'}
		\epsilon^2 = \norm{\bw-\bw'}^2 = 2r^2 - 2 \cdot \inner{\bw,\bw'}~,
	\end{equation} 
	and hence
	\begin{equation*}
		\label{eq:bw plus bw'}
		\norm{\bw + \bw'}^2 = 2r^2 + 2 \cdot \inner{\bw,\bw'} = 4r^2 - \epsilon^2~.
	\end{equation*} 
 	Combining the above with Eq.~\ref{eq:u minus w part 2} we have 
 	\begin{align*}
 	\label{eq:u minus w part 3}
 	 	\norm{\bu - \bw}^2
 	 	&= \frac{\epsilon^2}{4} + \left(\frac{\epsilon^2}{4r^2-\epsilon^2}\right)^2 \cdot \frac{1}{4} \cdot  (4r^2 - \epsilon^2)
 	 	= \frac{1}{4} \left(\epsilon^2 + \frac{\epsilon^4}{4r^2-\epsilon^2} \right)
 	 	\\
 	 	&= \frac{r^2 \epsilon^2}{4r^2-\epsilon^2}~.
 	\end{align*}
  	Plugging the above into Eq.~\ref{eq:ustar minus w}, we have
  	\begin{align*}
  			\norm{\bu^* - \bw}^2 = 
  			r^2 \left((c')^2 - \frac{\epsilon^2}{4r^2 - \epsilon^2} \right) + \frac{r^2 \epsilon^2}{4r^2-\epsilon^2}
  			= (rc')^2 
  			= \left(\frac{rc}{q}\right)^2
  			= (\alpha c)^2~.
  	\end{align*}
  
  	We now show that $\inner{\bw,\bu^*-\bw}=0$. We have 
  	\[
  		\inner{\bw,\bu^*-\bw}
  		= \inner{\bw,\bu + \be - \bw}~.
  	\]
  	Since $\be$ is orthogonal to $\bw$, the above equals
  	\begin{align*}
  		\inner{\bw,\bu-\bw}
  		&= \inner{\bw,\bu} - r^2
  		\\
  		&= \frac{\inner{\bw,\bw + \bw'}}{2} \left(1 +  \frac{\epsilon^2}{4r^2-\epsilon^2}\right) - r^2
  		\\
  		&= \left(r^2 + \inner{\bw,\bw'}\right) \cdot \frac{1}{2} \left(1 +  \frac{\epsilon^2}{4r^2-\epsilon^2}\right) - r^2~.
  	\end{align*}
    By Eq.~\ref{eq:inner ww'}, the above equals 
    \[
    	\left(2r^2 - \frac{\epsilon^2}{2}\right) \cdot \frac{1}{2} \left(1 +  \frac{\epsilon^2}{4r^2-\epsilon^2}\right) - r^2
    	= \frac{4r^2 - \epsilon^2}{4} \cdot\frac{4r^2}{4r^2-\epsilon^2} - r^2
    	= 0~.
    \] 
  	
  	Finally, the proof that $\norm{\bu^*-\bw'}=\alpha c$ and that $\inner{\bw',\bu^*-\bw'}=0$ is similar.
\end{proof}

By Lemma~\ref{lemma:radial}, we have $\calr(\bw)=f(\norm{\bw})$ for every $\bw \in \reals^d$, for some $f:\reals_+ \rightarrow \reals$.
We will show that for every $0 < r < r'$ we have $f(r)=f(r')$.  Let $\bw \in \reals^d$ such that $\norm{\bw}=r$, and let $\alpha = \frac{r}{q}$. By our assumption on $\calr$, for a vector $\bv \in \reals^d$ such that $\norm{\bv} = \alpha c$ and $\inner{\bw,\bv}=0$, we have $\calr(\bw) = \calr(\bw+\bv) = \min_{\beta \geq 0}\calr(\bw + \beta \bv)$. 

Since $\calr(\bw) = \calr(\bw+\bv)$ then we have $f(r) = f(\norm{\bw}) = f(\norm{\bw+\bv})$.
Note that 
\[
	\norm{\bw + \bv} 
	= \sqrt{\norm{\bw}^2+\norm{\bv}^2}
	= \sqrt{r^2+(\alpha c)^2}
	= r \sqrt{1+\frac{c^2}{q^2}}~.
\]
Hence, we have $f(r) = f\left(r \sqrt{1+\frac{c^2}{q^2}}\right)$. By repeating the above steps, we have for every $m \in \nat$ that $f(r) = f\left(r \left(\sqrt{1+\frac{c^2}{q^2}}\right)^m \right)$.
Let $m'$ be a sufficiently large integer, such that $r \left(\sqrt{1+\frac{c^2}{q^2}}\right)^{m'} > r'$.

Since $\calr(\bw) = \min_{\beta \geq 0}\calr(\bw + \beta \bv)$ then we have $f(r) = \min_{r'' \geq r} f(r'') \leq f(r')$. By repeating this argument with $\bw' \in \reals^d$ such that $\norm{\bw'}=r'$ we also have $f(r') \leq f\left(r \left(\sqrt{1+\frac{c^2}{q^2}}\right)^{m'}\right)$. 

Overall, we have 
\[
	f(r) \leq f(r') \leq f\left(r \left(\sqrt{1+\frac{c^2}{q^2}}\right)^{m'}\right) = f(r)~,
\]
and thus $f(r)=f(r')$ as required.

\stam{
	
\begin{lemma}
\label{lemma:radial}
	The function $\calr$ is radial. That is, there exists a function $f:\reals_+ \rightarrow \reals$ such that $\calr(\bw) = f(\norm{\bw})$ for every $\bw \in \reals^3$.
\end{lemma}
\begin{proof}
	Let $\bw,\bw' \in \reals^3 \setminus \{\zero\}$ such that $\norm{\bw} = \norm{\bw'}$.
	Let $\epsilon = \frac{2\norm{\bw} c'}{\sqrt{26+(c')^2}}$, 
	and assume w.l.o.g. that $\norm{\bw-\bw'} \leq \epsilon$. We will show that $\calr(\bw)=\calr(\bw')$. 
	
	Let $\bu = \frac{\bw + \bw'}{2}$. 
\stam{
	We have 
	\begin{align*}
		\norm{\bw}^2 
		&= \norm{ \frac{\bw + \bw'}{2} + \frac{\bw-\bw'}{2}}^2
		= \norm{\bu + \frac{1}{2} \cdot (\bw-\bw')}^2
		= \norm{\bu}^2 + \frac{1}{4} \cdot \norm{ \bw-\bw'}^2 + \inner{\bu, \bw-\bw'}
		\\
		&\leq  \norm {\bu}^2 + \frac{\epsilon^2}{4} + \inner{\frac{\bw + \bw'}{2}, \bw-\bw'}
		= \norm {\bu}^2 + \frac{\epsilon^2}{4} + \frac{1}{2} \cdot (\norm{\bw}^2 - \norm{\bw'}^2)
		= \norm{\bu}^2 + \frac{\epsilon^2}{4}~.
	\end{align*}
	Hence, 
	\[
		\norm{\bu} 
		\geq \sqrt{	\norm{\bw}^2 - \frac{\epsilon^2}{4}}
		= \sqrt{26 \alpha^2 - \frac{\epsilon^2}{4}}~.
	\]
}
	Since $\norm{\bw} = \norm{\bw'}$, it follow easily that 
	\[
		\norm{\bu} 
		= \sqrt{\norm{\bw}^2 - \left(\frac{\norm{\bw-\bw'}}{2}\right)^2}
		\geq \sqrt{\norm{\bw}^2  - \frac{\epsilon^2}{4}}~.
	\]
	Let $\alpha>0$ be such that $\norm{\bu} = \alpha \cdot \norm{(5,-1,0)} = \alpha \sqrt{26}$.
	Thus, we have $\alpha \sqrt{26} \geq \sqrt{\norm{\bw}^2  - \frac{\epsilon^2}{4}}$, and therefore
	\begin{equation}
	\label{eq:bounding alpha}
		\alpha^2 \geq \frac{1}{26} \left(\norm{\bw}^2 - \frac{\norm{\bw}^2 (c')^2}{26+(c')^2} \right)
		= \frac{\norm{\bw}^2}{26} \left(\frac{26}{26 + (c')^2} \right)
		= \frac{\norm{\bw}^2}{26 + (c')^2}~.
	\end{equation}
			
	Let $\bv = \bw - \bu = \frac{\bw - \bw'}{2}$. Note that 
	\[
		\inner{\bv,\bu} 
		= \left< \frac{\bw - \bw'}{2},\frac{\bw + \bw'}{2} \right> 
		= \frac{1}{4} \cdot \left(\norm{\bw}^2 - \norm{\bw'}^2 \right)
		= 0~.
	\]
	Moreover,  
	\[
		\norm{\bv} 
		= \norm{\frac{\bw - \bw'}{2}}
		= \frac{\epsilon}{2}
		=  \frac{\norm{\bw} c'}{\sqrt{26+(c')^2}}~.
	\]	
	By Eq.~\ref{eq:bounding alpha}, the above is at most $\alpha c'$. Hence, by our assumption on $\calr$, we have $\calr(\bu) = \calr(\bu + \bv) = \calr(\bw)$.
	
	Next, let $\bv' = \bw' - \bu$. By similar arguments we have $\inner{\bv',\bu}=0$ and $\norm{\bv'} \leq \alpha c'$, and hence $\calr(\bu) = \calr(\bu + \bv') = \calr(\bw')$. 
	Hence, $\calr(\bw)=\calr(\bu)= \calr(\bw')$.
\end{proof}

By Lemma~\ref{lemma:radial}, we have $\calr(\bw)=f(\norm{\bw})$ for some $f:\reals_+ \rightarrow \reals$.
We will show that for every $r>0$ the function $f$ is constant in the interval $\left[r,r \sqrt{1+\frac{c'}{26}}\right]$, and hence $f$ is constant in $(0,\infty)$.

Let $r>0$ and let $\bw \in \reals^3$ such that $\norm{\bw}=r$. Let $\alpha = \frac{r}{\sqrt{26}}$. By our assumption on $\calr$, for every $\bv \in \reals^3$ such that $\norm{\bv} \leq \alpha c'$ and $\inner{\bw,\bv}=0$, we have $\calr(\bw+\bv) = \calr(\bw)$. Thus, $f(\norm{\bw+\bv}) = f(\norm{\bw})=f(r)$.
Let $\bu \in \reals^3$ be such that $\norm{\bu} = \alpha c'$ and $\inner{\bw,\bu}=0$, and let $\bv = \beta \bu$ for $\beta \in [0,1]$.
We have 
\begin{align*}
f(r) 
&= f(\norm{\bw+\bv})
= f(\norm{\bw+\beta \bu})
= f\left(\sqrt{\norm{\bw}^2+\norm{\beta \bu}^2} \right)
= f\left(\sqrt{r^2 + \beta^2 \alpha^2 (c')^2} \right)
\\
&= f\left(\sqrt{r^2 + \beta^2 \cdot \frac{r^2}{26} \cdot (c')^2} \right)
= f\left(r \cdot \sqrt{1 + \frac{\beta^2 (c')^2}{26}} \right)~.
\end{align*}
Since the above holds for every $0 \leq \beta \leq 1$, then $f$ is constant in the interval $\left[r,r  \sqrt{1 + \frac{(c')^2}{26}}\right]$ as required.

}

\section{Proofs for Section~\ref{sec:depth 2}}

\subsection{Proof of Lemma~\ref{lemma:v_t}}
\label{app:proof of lemma v_t}

We have
\begin{align*}
	&\frac{\mathop{d}}{\mathop{d t}} \left( \norm{\bw(t)}^2 - (v(t))^2 \right)
	\\
	&= 2 \bw(t)^\top \dot{\bw}(t) - 2 v(t) \dot{v}(t) 
	\\
	&= -2 \bw(t)^\top \frac{\partial \cl_{X,\by}(\btheta(t))}{\partial \bw} + 2 v(t)  \frac{\partial \cl_{X,\by}(\btheta(t))}{\partial v}	
	\\
	&= -2 \bw(t)^\top \sum_{i=1}^n \left(v(t) \sigma(\bx_i^\top \bw(t)) - y_i \right) v(t) \sigma'(\bx_i^\top \bw(t)) \bx_i + 2 v(t) \sum_{i=1}^n \left( v(t) \sigma(\bx_i^\top \bw(t)) - y_i \right) \sigma(\bx_i^\top \bw(t))
	\\
	&= -2  \sum_{i=1}^n \left(v(t) \sigma(\bx_i^\top \bw(t)) - y_i \right) v(t) \sigma'(\bx_i^\top \bw(t)) \bw(t)^\top \bx_i + 2 \sum_{i=1}^n \left( v(t) \sigma(\bx_i^\top \bw(t)) - y_i \right) v(t) \sigma(\bx_i^\top \bw(t))~.
\end{align*}
Since the ReLU function $\sigma$ satisfies $\sigma'(\bx_i^\top \bw(t)) (\bx_i^\top \bw(t)) = \sigma(\bx_i^\top \bw(t))$, then the above equals $0$.

\subsection{Proof of Lemma~\ref{lemma:rotation and scaling}}
\label{app:proof of lemma rotation and scaling}

For the input $(X,\by)$, gradient flow obeys the following dynamics.
\begin{equation}
	\label{eq:derivative w}
	\dot{\bw}(t) 
	= -\frac{\partial \cl_{X,\by}((\bw(t),v(t)))}{\partial \bw} 
	= - \sum_{i=1}^n \left( v(t) \sigma(\bx_i^\top \bw(t))-y_i \right) v(t) \sigma'(\bx_i^\top \bw(t)) \bx_i~,
\end{equation}
and 
\begin{equation}
	\label{eq:derivative v}
	\dot{v}(t) = -\frac{\partial \cl_{X,\by}((\bw(t),v(t)))}{\partial v} = - \sum_{i=1}^n \left( v(t) \sigma(\bx_i^\top \bw(t))-y_i \right) \sigma(\bx_i^\top \bw(t))~.
\end{equation}

\subsubsection{Proof of part (1)}

Let $\epsilon>0$ be a constant, and let $\btheta^\epsilon(t) = (\bw(t),v(t))$ be the trajectory of gradient flow with the input $(X,\by)$ starting from $\btheta^\epsilon(0) = (\zero,\epsilon)$.
Let $\tilde{\bw}(t) = M \bw(t)$ and $\tilde{v}(t) = v(t)$. 
By Eq.~\ref{eq:derivative w} we have
\begin{align*}
	\dot{\tilde{\bw}}(t) 
	&= M \dot{\bw}(t) 
	= -M \sum_{i=1}^n \left( v(t) \sigma(\bx_i^\top \bw(t))-y_i \right) v(t) \sigma'(\bx_i^\top \bw(t)) \bx_i
	\\
	&=  -M \sum_{i=1}^n \left( v(t) \sigma((M \bx_i)^\top (M \bw(t)))-y_i \right) v(t) \sigma'((M \bx_i)^\top (M \bw(t))) \bx_i	
	\\
	&= - \sum_{i=1}^n \left( \tilde{v}(t) \sigma((M \bx_i)^\top \tilde{\bw}(t))-y_i \right) \tilde{v}(t) \sigma'((M \bx_i)^\top \tilde{\bw}(t)) (M \bx_i)
	\\
	&=  -\frac{\partial \cl_{XM^\top,\by}((\tilde{\bw}(t),\tilde{v}(t)))}{\partial \bw}~.
\end{align*}
Moreover, by Eq.~\ref{eq:derivative v} we have
\begin{align*}
	\dot{\tilde{v}}(t)
	&= \dot{v}(t)
	= - \sum_{i=1}^n \left( v(t) \sigma(\bx_i^\top \bw(t))-y_i \right) \sigma(\bx_i^\top \bw(t))
	\\
	&= - \sum_{i=1}^n \left( \tilde{v}(t) \sigma((M \bx_i)^\top \tilde{\bw}(t))-y_i \right) \sigma((M \bx_i)^\top \tilde{\bw}(t))
	\\
	&= -\frac{\partial \cl_{XM^\top,\by}((\tilde{\bw}(t),\tilde{v}(t)))}{\partial v}~.
\end{align*}
Hence, $\tilde{\btheta}^\epsilon(t) = (\tilde{\bw}(t),\tilde{v}(t))$ is the trajectory of gradient flow with the input $(XM^\top,\by)$ staring from $\tilde{\btheta}^\epsilon(0) = (\tilde{\bw}(0),\tilde{v}(0)) = (\zero,\epsilon)$. Therefore, 
\[
\lim_{\epsilon \rightarrow 0^+} \tilde{\btheta}^\epsilon(\infty) 
= \lim_{\epsilon \rightarrow 0^+} (M \bw(\infty),v(\infty))
= (M \bw^*, v^*)~.
\]

\subsubsection{Proof of part (2)}

Let $\epsilon>0$ be a constant, and let $\btheta^{\epsilon/\sqrt{\alpha}}(t) = (\bw(t),v(t))$ be the trajectory of gradient flow with the input $(X,\by)$ starting from $\btheta^{\epsilon/\sqrt{\alpha}}(0) = \left(\zero,\frac{\epsilon}{\sqrt{\alpha}}\right)$.
Let $\tilde{\bw}(t) = \sqrt{\alpha} \bw(\alpha t)$ and $\tilde{v}(t) = \sqrt{\alpha} v(\alpha t)$. 
By Eq.~\ref{eq:derivative w} we have
\begin{align*}
	\dot{\tilde{\bw}}(t)
	&= \sqrt{\alpha} \cdot \alpha \cdot \dot{\bw}(\alpha t)
	\\
	&= - \alpha \sqrt{\alpha}  \sum_{i=1}^n \left( v(\alpha t) \sigma(\bx_i^\top \bw(\alpha t))-y_i \right) v(\alpha t) \sigma'(\bx_i^\top \bw(\alpha t)) \bx_i
	\\	
	&= - \sum_{i=1}^n \left( \sqrt{\alpha} v(\alpha t) \sigma(\bx_i^\top \sqrt{\alpha} \bw(\alpha t))- (\alpha y_i) \right) (\sqrt{\alpha} v(\alpha t)) \sigma'(\bx_i^\top \bw(\alpha t)) \bx_i ~.
\end{align*}
Since $\sigma'(\bx_i^\top \bw(\alpha t)) = \sigma'(\bx_i^\top \sqrt{\alpha}\bw(\alpha t))$, the above equals
\begin{align*}
	- \sum_{i=1}^n \left( \tilde{v}(t) \sigma(\bx_i^\top \tilde{\bw}(t))- (\alpha y_i) \right) \tilde{v}(t) \sigma'(\bx_i^\top \tilde{\bw}(t)) \bx_i 
	=  -\frac{\partial \cl_{X, \alpha \by}((\tilde{\bw}(t),\tilde{v}(t)))}{\partial \bw}~.
\end{align*}
Moreover, by Eq.~\ref{eq:derivative v} we have
\begin{align*}
	\dot{\tilde{v}}(t)
	&= \sqrt{\alpha} \cdot \alpha \cdot \dot{v}(\alpha t)
	\\
	&= - \alpha \sqrt{\alpha}	\sum_{i=1}^n \left( v(\alpha t) \sigma(\bx_i^\top \bw(\alpha t))-y_i \right) \sigma(\bx_i^\top \bw(\alpha t))
	\\
	&= - \sum_{i=1}^n \left( \sqrt{\alpha} v(\alpha t) \sigma(\bx_i^\top  \sqrt{\alpha} \bw(\alpha t)) - (\alpha y_i) \right) \sigma(\bx_i^\top \sqrt{\alpha} \bw(\alpha t))
	\\
	&= - \sum_{i=1}^n \left( \tilde{v}(t) \sigma(\bx_i^\top  \tilde{\bw}(t)) - (\alpha y_i) \right) \sigma(\bx_i^\top \tilde{\bw}(t))
	\\
	&= -\frac{\partial \cl_{X, \alpha \by}((\tilde{\bw}(t),\tilde{v}(t)))}{\partial v}~.
\end{align*}
Hence, $\tilde{\btheta}^\epsilon(t) = (\tilde{\bw}(t),\tilde{v}(t))$ is the trajectory of gradient flow with the input $(X, \alpha \by)$ staring from $\tilde{\btheta}^\epsilon(0) = (\tilde{\bw}(0),\tilde{v}(0)) = \left(\zero,\sqrt{\alpha} \cdot \frac{\epsilon}{\sqrt{\alpha}}\right) = (\zero, \epsilon)$. Therefore,
\[
\lim_{\epsilon \rightarrow 0^+} \tilde{\btheta}^\epsilon(\infty) 
= \lim_{\epsilon \rightarrow 0^+} (\sqrt{\alpha} \bw(\infty), \sqrt{\alpha} v(\infty))
= \sqrt{\alpha} \cdot(\bw^*, v^*)~.
\]	

\subsection{Proof of Corollary~\ref{cor:R' properties}}
\label{app:proof of cor R' properties}

Let $X_0,X_5 \in \reals^{3 \times 3}$ and $\by \in \reals^3$ from Corollary~\ref{cor:from ass1}, such that for the corresponding limit points $\btheta^*,\tilde{\btheta}^*$, and vectors $\bu^*=\Psi^{-1}(\btheta^*)$, $\tilde{\bu}^*=\Psi^{-1}(\tilde{\btheta}^*)$, and $\bu' =  \tilde{\bu}^* - \bu^*$, we have $\norm{\bu'} > 0$ and $\inner{\bu^*,\bu'}=0$. 
Furthermore, for every $\beta \geq 0$ we have $\bu^* + \beta \bu' \in \cu_{X_0,\by} \cap \cu_{X_5,\by}$. 
We denote $c=\norm{\bu'}$.
Let $\bu \in \reals^3$ and $\alpha>0$ such that $\norm{\bu} = \alpha \cdot \norm{\bu^*}$, and let $\bu_\bot \in \reals^3$ such that $\inner{\bu,\bu_\bot}=0$ and $\norm{\bu_\bot} = \alpha c$. 

Let $M \in SO(3)$ be a rotaion matrix such that $\alpha M \bu^* = \bu$ and $\alpha M \bu' = \bu_\bot$.
Let $\btheta^*_{\alpha M}$ and $\tilde{\btheta}^*_{\alpha M}$ be the limit points of gradient flow with inputs $(X_0 M^\top,\alpha \by)$ and $(X_5 M^\top,\alpha \by)$, respectively. 
By Lemma~\ref{lemma:rotation and scaling}, we have $\Psi^{-1}(\btheta^*_{\alpha M}) = \alpha M \bu^* = \bu$, and 
\[
\Psi^{-1}(\tilde{\btheta}^*_{\alpha M}) = \alpha M \tilde{\bu}^* = \alpha M (\bu^* + \bu') = \alpha M \bu^* + \alpha M \bu' = \bu + \bu_\bot~.
\]
Let $\{\bx_i^\top\}_{i=1}^3$ and $\{\tilde{\bx}_i^\top\}_{i=1}^3$ be the rows of $X_0$ and $X_5$ (respectively).
Since for every $\beta \geq 0$ we have $\bu^* + \beta \bu' \in \cu_{X_0,\by} \cap \cu_{X_5,\by}$, then for every $1 \leq i \leq 3$ and $\bx \in \{\bx_i,\tilde{\bx}_i\}$, and every $\beta \geq 0$, we have 
\begin{align*}
	\alpha y_i
	&= \alpha \sigma(\bx^\top (\bu^* + \beta \bu'))
	= \alpha \sigma((M\bx)^\top (M\bu^* + M \beta \bu'))
	= \sigma((M\bx)^\top (\alpha M\bu^* + \alpha M \beta \bu'))
	\\
	&= \sigma((M\bx)^\top (\bu + \beta \bu_\bot)).
\end{align*}
Hence, $\bu + \beta \bu_\bot \in \cu_{X_0 M^\top, \alpha \by} \cap \cu_{X_5 M^\top, \alpha \by}$.
Since $\bu \in \argmin_{\bar{\bu} \in  \cu_{X_0 M^\top, \alpha \by}} \calr'(\bar{\bu})$ and $\bu + \bu_\bot \in \argmin_{\bar{\bu} \in  \cu_{X_5 M^\top, \alpha \by}} \calr'(\bar{\bu})$, then we have
$\calr'(\bu) = \calr'(\bu + \bu_\bot) = \min_{\beta \geq 0}\calr'(\bu + \beta \bu_\bot)$. 

\end{document}